\documentclass{article} 
\usepackage{iclr2027_conference,times}

\usepackage{amsmath,amsfonts,bm}

\def\eqref#1{equation~\ref{#1}}
\def\Eqref#1{Equation~\ref{#1}}

\def\1{\bm{1}}

\DeclareMathAlphabet{\mathsfit}{\encodingdefault}{\sfdefault}{m}{sl}
\SetMathAlphabet{\mathsfit}{bold}{\encodingdefault}{\sfdefault}{bx}{n}

\newcommand{\E}{\mathbb{E}}

\newcommand{\R}{\mathbb{R}}

\newcommand{\Var}{\mathrm{Var}}

\newcommand{\Cov}{\mathrm{Cov}}

\DeclareMathOperator*{\argmax}{arg\,max}

\usepackage{amsmath}
\usepackage{amsfonts}
\usepackage{amssymb}
\usepackage{amsthm}
\usepackage{bm}
\usepackage{xcolor}
\usepackage{hyperref}
\hypersetup{hidelinks}
\usepackage{todonotes}
\usepackage{booktabs}
\usepackage{multirow}
\usepackage{algorithm}
\usepackage{algpseudocode}
\usepackage{bbm}
\usepackage{url}
\usepackage{enumitem}
\usepackage{makecell}
\usepackage{nicefrac}
\newtheorem{theorem}{Theorem}[section]

\newtheorem{corollary}[theorem]{Corollary}

\newtheorem{proposition}[theorem]{Proposition}

\usepackage{tikz}
\usetikzlibrary{
     arrows.meta,
     calc,
     decorations.pathreplacing,
     positioning
}
\usepackage{wrapfig}

\newcommand{\Sph}{\mathbb S^{D-1}}
\newcommand{\one}{\bm 1}
\providecommand{\tr}{\operatorname{tr}}
\providecommand{\rank}{\operatorname{rank}}
\providecommand{\R}{\mathbb R}
\providecommand{\E}{\mathbb E}
\providecommand{\EI}{\operatorname{EI}}
\providecommand{\Var}{\operatorname{Var}}
\providecommand{\argmax}{\operatorname*{arg\,max}}

\usepackage{comment}

\title{SABLE: Scalable Affine Bayesian Latent-space Exploration}

\author{
\begin{minipage}{\textwidth}
\centering
\vspace{1.5em}
\makebox[\textwidth][s]{%
Erik Bodin\textsuperscript{1,*}\hfill
Lisa Maria Kreusser\textsuperscript{1,*}\hfill
Donney Fan\textsuperscript{2,4}
}\\[0.8em]
\makebox[\textwidth][s]{%
Henry Moss\textsuperscript{3}\hfill
Geoff Pleiss\textsuperscript{2,4}\hfill
Nicolas Durrande\textsuperscript{1}
}\\[1.5em]
\normalfont
\textsuperscript{1}Monumo \qquad
\textsuperscript{2}University of British Columbia\\[0.3em]
\textsuperscript{3}Lancaster University \qquad
\textsuperscript{4}Vector Institute\\[1em]
{\small \textsuperscript{*}Equal contribution.}
\end{minipage}
}

\iclrfinalcopy 
\begin{document}

\maketitle

\begin{abstract}
Complex objects such as engineering designs and proteins are often difficult to optimise because their design spaces exhibit high-dimensional structure and objective evaluations are expensive.
Generative models provide continuous latent representations, but their latent spaces can have thousands to tens of thousands of dimensions, creating a regime where the latent dimension $D$ greatly exceeds the evaluation budget.
We introduce SABLE, a Bayesian optimisation method for high-dimensional latent spaces with an isotropic distribution and limited budget.
SABLE uses a Bayesian affine surrogate for prediction and acquisition, treating higher-order structure as surrogate discrepancy.
After $N$ observations, the slope update has dimension at most $N-1$, yielding a decomposition of each query into an in-span component optimised by expected improvement (EI) in at most $N$ dimensions and an off-span component governed by a scalar norm.
We show that the off-span norm implicitly selected by affine EI can lead to underexploration, and SABLE instead prescribes it using a finite-horizon schedule, without restricting search to a learned subspace. 
A recompression scheme reduces information to $\mathcal O(M)$ pseudo-observations, bounding refitting cost by $\mathcal O(M^2D+M^3)$ while keeping parameter inference, prediction, and acquisition decisions to tens of milliseconds.
Across synthetic benchmarks and latent-space optimisation tasks, SABLE achieves high sample efficiency and reduces decision time by several orders of magnitude at large budgets.
\end{abstract}

\section{Introduction}
\label{sec:introduction}

Scientific and engineering optimisation problems often involve design domains that are difficult to search directly because they are high-dimensional, discrete, or governed by complex physical or chemical constraints. Pretrained generative models can provide simpler continuous latent representations of these domains, making such optimisation problems more tractable. This has enabled applications in protein and peptide design \citep{watson2023rfdiffusion, chen2024ampdiffusion, das2021accelerated}, molecular discovery \citep{chenthamarakshan2023accelerating}, turbulent flows \citep{li2024synthetic}, and particle-detector simulation \citep{amram2023calodiffusion}. In these applications, evaluating candidate quality is typically expensive, for example because it requires simulation, numerical analysis, or experimental validation, so only a limited number of proposals can be tested. Bayesian optimisation (BO) is well suited to this setting because it uses previous evaluations to guide subsequent candidates \citep{garnett2023bayesian}. Applying BO in the latent space of a pretrained generative model is referred to as \emph{Latent Space Bayesian Optimisation} (LSBO) \citep{gomez2018automatic,kusner2017grammar,griffiths2020constrained,tripp2020sampleefficient,maus2022local,moss2025return}. 
Some approaches train task-specific generative models \citep{krishnamoorthy2023}, fine-tune pretrained generators for the objective \citep{fan2023,denker2025,yun2025}, or steer generation using gradients or learned objective models \citep{Gruver2023,yuan2025paretoflow,yun2025}; we instead keep the pretrained generator fixed and use only objective evaluations.

The latent spaces of modern scientific generative models are typically
high-dimensional. For example, a diffusion model for Lagrangian turbulence
\citep{li2024synthetic} has $D=6{,}000$ latent variables, while a
CaloDiffusion model \citep{amram2023calodiffusion} has $D=40{,}500$.
These dimensions can greatly exceed the number $N$ of affordable objective evaluations.
In the regime $N\ll D$, without additional structural assumptions, the data cannot identify even a global linear model, let alone a flexible global surrogate.
High-dimensional BO~\citep{binois2022survey} often addresses this mismatch through random embeddings
or adaptive subspaces
\citep{wang2013bayesian,papenmeier2022increasing},
additive or sparse models
\citep{kandasamy2015high,eriksson2021high},
or local modelling
\citep{eriksson2019scalable}.
We instead propose SABLE (\emph{Scalable Affine Bayesian Latent-space
Exploration}), which does \emph{not} restrict admissible queries to a learned
low-dimensional subspace and uses a Bayesian affine \emph{working model}
inferred from limited data.
Higher-order variation is treated as surrogate discrepancy, with uncertainty in the corresponding
variance scales marginalised. 
Recent work develops high-dimensional linear BO and fast spherical EI~\citep{doumont2026,costofsampling}; SABLE extends this geometry to hyperparameter-marginalised EI, decouples off-span exploration from fitted uncertainty, which significantly improves sample efficiency, and reduces fitting costs by over four orders of magnitude at large budgets while remaining in the $N<D$ regime.

SABLE exploits the fact that high ambient dimension does not imply high-dimensional data-informed geometry. Conditional on the variance hyperparameters, $N$ observations update the posterior of the affine slope vector in at most $N-1$ dimensions, while the orthogonal directions retain their prior mean and variance. Likewise, the evaluated designs span at most $N$ dimensions. 
This geometry yields an in-span/off-span decomposition of each query. Under the affine working model, off-span exploration is governed by a scalar, the off-span norm, while EI optimisation over the in-span component requires at most $N$ dimensions, independently of $D$.
Standard affine EI implicitly determines this off-span norm, which we show is often driven close to zero. SABLE instead prescribes the norm using a finite-horizon schedule based on the remaining evaluation budget and a reference resolution scale, separating exploration control from the fitted variance scales and any higher-order surrogate discrepancy they may absorb. The fitting cost would still grow quickly with $N$, so we introduce a fast recompression scheme that reduces information to $\mathcal O(M)$ pseudo-observations. For fixed $M$, this bounds the posterior-fitting representation and per-round model-refitting cost independently of $N$, while keeping parameter inference, prediction, and acquisition decisions to tens of milliseconds across the dimensionalities studied, without restricting queries to a learned subspace.  Figure~\ref{fig:geometry} summarises the decoupled off-span exploration and bounded-cost model refitting.

\begin{figure*}[ht]
\centering

\begin{tikzpicture}[
    >=Latex,
    font=\small,
    vec/.style={-{Latex[length=2mm]}, line width=0.9pt},
    mainvec/.style={-{Latex[length=2.3mm]}, line width=1.1pt},
    ambient/.style={draw=gray!45, fill=gray!4, line width=0.7pt},
    span/.style={draw=blue!60!black, fill=blue!9, line width=0.75pt},
    retained/.style={draw=blue!65!black, fill=blue!35, line width=0.5pt},
    augment/.style={draw=orange!80!black, fill=orange!25, line width=0.5pt},
    note/.style={font=\scriptsize, align=center},
    sidenote/.style={font=\scriptsize, align=left},
    paneltitle/.style={font=\bfseries\footnotesize, align=left},
]

\begin{scope}[shift={(0,0)}]

    \node[paneltitle, anchor=west] at (0,3.55)
        {(a) Decoupled off-span exploration};

    \coordinate (O) at (3.00,2.05);
    \coordinate (P) at (3.84,2.05);
    \coordinate (Q) at (3.84,2.70);

    \draw[draw=gray!40, line width=0.7pt]
        (O) circle (1.10);

    \path[span]
        (0.48,1.46) --
        (4.66,1.46) --
        (4.97,2.15) --
        (0.79,2.15) -- cycle;

    \fill (O) circle (1.2pt);

    \node[note, text=blue!65!black]
        at (1.20,2.975)
        {EI chooses\\
         in-span direction};

    \node[note, text=blue!65!black]
        at (2.2,1.68)
        {$\mathcal V_N = \operatorname{span}\{\bm s_1,\ldots,\bm s_N\}$};

    \draw[vec, blue!70!black]
        (O) -- (P);

    \draw[vec, orange!85!black]
        (P) -- (Q);

    \draw[mainvec]
        (O) -- (Q)
        node[pos=0.62, above left=-1pt]
        {$\bm s$};

    \draw[line width=0.65pt]
        ($(P)+(-0.08,0)$)
        -- ++(0,0.08)
        -- ++(0.08,0);

    \draw[-{Latex[length=1.4mm]}, blue!55!black]
        (1.35,2.7)
        to[out=270,in=200]
        (3.3,2.0);

    \node[note, text=orange!85!black]
        at (4.88,1.14)
        {schedule sets\\
         off-span norm};

    \draw[-{Latex[length=1.4mm]}, orange!70!black]
        (4.82,1.46)
        to[out=95,in=320]
        (3.89,2.35);

    \node[note, text=gray!70!black]
        at (1.60,1.0)
        {$D$ dimensions};

\end{scope}

\begin{scope}[shift={(7.2,0)}]

    \node[paneltitle, anchor=west] at (0,3.55)
        {(b) Bounded-cost model refitting};

    \node[note, text=gray!70!black]
        at (0.78,2.72)
        {grows with $N$};

    \draw[gray!60, line width=0.7pt]
        (0.16,0.90) rectangle (1.40,2.48);

    \foreach \y in {
        1.03,1.16,1.29,1.42,1.55,1.68,
        1.81,1.94,2.07,2.20,2.33,2.46
    }{
        \draw[gray!50, line width=0.35pt]
            (0.16,\y) -- (1.40,\y);
    }

    \draw[mainvec, gray!70]
        (1.92,1.68) -- (2.48,1.68);

    \node[note] at (2.20,1.96)
        {recompress};

    \node[note]
        at (3.52,2.78)
        {$\mathcal O(M)$\\
         pseudo-observations};

    \draw[gray!60, line width=0.7pt]
        (2.88,1.06) rectangle (4.16,2.42);

    \fill[retained]
        (2.88,2.12) rectangle (4.16,2.31);
    \fill[retained]
        (2.88,1.87) rectangle (4.16,2.06);
    \fill[retained]
        (2.88,1.62) rectangle (4.16,1.81);

    \fill[augment]
        (2.88,1.33) rectangle (4.16,1.49);
    \fill[augment]
        (2.88,1.11) rectangle (4.16,1.27);

    \draw[gray!45, line width=0.35pt]
        (2.88,2.12) -- (4.16,2.12);
    \draw[gray!45, line width=0.35pt]
        (2.88,1.87) -- (4.16,1.87);
    \draw[gray!45, line width=0.35pt]
        (2.88,1.62) -- (4.16,1.62);
    \draw[gray!45, line width=0.35pt]
        (2.88,1.49) -- (4.16,1.49);
    \draw[gray!45, line width=0.35pt]
        (2.88,1.27) -- (4.16,1.27);

    \fill[retained]
        (4.38,1.62) rectangle (4.56,1.74);
    \node[font=\scriptsize, anchor=west, text=blue!65!black]
        at (4.64,1.68)
        {strong retained};
    
    \fill[augment]
        (4.38,1.34) rectangle (4.56,1.46);
    \node[font=\scriptsize, anchor=west, text=orange!85!black]
        at (4.64,1.40)
        {weaker summarised};

\end{scope}

\end{tikzpicture}

\caption{
\textbf{Overview of SABLE.}
(a) Each query is decomposed into an in-span component and an orthogonal
off-span component. EI chooses the in-span direction, while the exploration
schedule prescribes the off-span norm. Because the design span
$\mathcal V_N$ has dimension at most $N$, EI optimisation requires at most
$N$ coordinates, independently of $D$.
(b) As observations accumulate, their representation is recompressed to  $\mathcal O(M)$ pseudo-observations. Strong information is retained and
weaker information is summarised, making model-refitting cost independent of
$N$ for fixed $M$.
}
\label{fig:geometry}
\end{figure*}

Our contributions are:
\begin{itemize}[leftmargin=1em]
\item \textbf{Decoupled off-span exploration.}
We decompose each query into in-span and off-span components, showing EI optimisation can be expressed in $N$ dimensions, independently of $D$.
We show that the off-span norm implicitly selected by EI can lead to underexploration,
so SABLE prescribes it using a finite-horizon schedule, significantly improving sample efficiency.

\item \textbf{Bounded-cost model refitting.}
Using fast recompression, we reduce accumulated observations to $\mathcal O(M)$ pseudo-observations,
making posterior-fitting storage and per-round model-refitting cost depend on $M$ rather than $N$, while preserving the full domain of admissible future queries.
\end{itemize}

\section{Problem setup and affine working model}\label{sec:model}

We now formalise the latent optimisation problem and the Bayesian working model used throughout.

\paragraph{Optimisation problem.}
Let $g:\R^D\to\mathcal X$ be a pretrained generative model and $f:\mathcal X\to\R$ an expensive black-box objective, so that the latent objective is $f(g(\bm z))$. We assume the generative model was trained with an isotropic Gaussian latent prior $\bm z\sim\mathcal N(\bm0,I_D)$. 
Under this prior, $\|\bm z\|/\sqrt D\to1$ in probability as $D\to\infty$
\citep{cover2006elements,vershynin2026hdp,nalisnick2019typicality},
so Gaussian latents concentrate at the characteristic radius $\sqrt{D}$.
This thin-shell geometry is also practically relevant: atypical latent norms can degrade decoded samples~\citep{samuel2023normguided}, while recent LSBO work finds that restricting search to the characteristic-radius sphere keeps queries in decoder-supported regions without materially reducing generative range
~\citep{bodin2025linear,willis2025sample,moss2025return,costofsampling}. 
We therefore re-cast the optimisation problem as one over the unit sphere  
\(
\Sph:=\{\bm s\in\R^D:\|\bm s\|=1\}
\)
and define the objective as
\[
F(\bm s):=f(g(\sqrt{D}\bm s)),
\qquad
\bm s^\star\in\argmax_{\bm s\in\Sph}F(\bm s).
\]
After $N$ evaluations, write $\mathcal D_N=\{(\bm s_i,y_i)\}_{i=1}^N$, where $y_i=F(\bm s_i)$, $X_N=(\bm s_1,\ldots,\bm s_N)^\top\in\R^{N\times D}$, and $\bm y_N=(y_1,\ldots,y_N)^\top$. Throughout, the evaluation budget $N_{\max}$ is specified in advance.

\paragraph{Affine working model.}
The degree-zero and degree-one spherical-harmonic components of $F$ give
\(
F(\bm s)=\beta_0+\bm v_\star^\top\bm s+\Delta(\bm s),
\)
where $\Delta$ collects higher-order structure (Appendix~\ref{app:objective-projection}).
Restricting optimisation to the sphere also removes radial variation in the
input norm, avoiding the boundary-seeking behaviour of linear BO on
conventional high-dimensional domains \citep{doumont2026}.
We use the affine component as a \emph{working model}, providing posterior predictions and acquisitions for BO without assuming that
$F$ is globally affine,
and treat $\Delta$ as surrogate discrepancy:
\[
G(\bm s)=\beta+\bm v^\top\bm s,
\qquad
\bm v\mid\sigma_1^2\sim\mathcal N(\bm0,\sigma_1^2I_D),
\qquad
p(\beta)\propto1,
\]
with likelihood
\(
\bm y_N\mid\beta,\bm v,\sigma_\varepsilon^2
\sim\mathcal N(\beta\one_N+X_N\bm v,\sigma_\varepsilon^2 I_N).
\)
For deterministic objectives, $\sigma_\varepsilon^2$ represents this discrepancy, and independent Gaussian observation noise adds variance to it, with more general noise models in Appendix~\ref{app:general-residual-covariance}.
Because the signal and discrepancy scales can be weakly identified in the $N\ll D$ regime and materially affect predictive uncertainty, we marginalise over
$\bm\theta=(\sigma_1^2,\sigma_\varepsilon^2)$, approximating
posterior expectations using $K$ stratified samples from $p(\bm\theta\mid\mathcal D_N)$ (Appendix~\ref{app:atoms}).

\section{SABLE: finite-data geometry and controlled exploration}
\label{sec:method}

SABLE exploits a property of the affine working model: $N$
observations change the slope posterior in at most $N-1$ directions,
independently of $D$ (Section \ref{sec:posterior-geometry}).
For acquisition, we use expected improvement (EI)
\citep{jones1998efficient}. Evaluated designs span at most $N$ dimensions,
yielding an in-span/off-span decomposition
with predictively equivalent off-span directions.
SABLE lets EI choose the in-span direction, while a finite-horizon schedule ${t_N}$, set by a reference-resolution parameter $\lambda_{\mathrm{ref}}$ chosen for the local-resolvability regime (Appendix~\ref{sec:local-resolvability}), prescribes the off-span norm.
This curbs affine EI's tendency to underexplore new directions on non-affine objectives (Section~\ref{sec:datasized-bo}).  

The fitting representation still grows with $N$, so we use a fast
recompression scheme that reduces information to $\mathcal O(M)$
pseudo-observations. At fixed $M$, posterior-fitting storage and per-round
model-refitting cost are independent of $N$; see
Section~\ref{sec:bounded-state}.
Algorithm~\ref{alg:overview} summarises the
procedure.

\begin{algorithm}[ht]
\caption{Overview of SABLE}
\label{alg:overview}
\begin{algorithmic}[1]
\Require Evaluation budget $N_{\max}$, compression budget $M$, reference-resolution regime
\State Select $\lambda_{\mathrm{ref}}$ for the chosen regime and
       precompute $\{t_N\}_{N=0}^{N_{\max}-1}$.
\State Set $N=1$, choose $\bm s_1$ from the orthogonal sequence, evaluate, and add the observation.
\While{$N<N_{\max}$}
    \State Fit the Bayesian affine model using the current observation
           representation.
    \State Choose a query with EI in the data-sized acquisition representation at off-span norm $t_N$.
    \State  Use the next unused orthogonal off-span direction $\bm e$ and reconstruct the query in $\mathbb{S}^{D-1}$.
    \State Evaluate the objective and increment $N$.
    \If{the recompression criterion is met}
        \State Recompress to $\mathcal O(M)$ pseudo-observations and update the
               sufficient statistics.
    \EndIf
\EndWhile
\State Return the best evaluated candidate.
\end{algorithmic}
\end{algorithm}

\subsection{Finite-data posterior geometry}
\label{sec:posterior-geometry}

After marginalising the intercept, $N$ observations modify the posterior in at most $N-1$ directions.

\begin{proposition}[Posterior geometry, proof in Appendix~\ref{app:data-informed-geometry}]
\label{prop:explored-span}
Let
$\bm\mu_N:=N^{-1}\sum_{i=1}^N\bm s_i$ and
$$
\mathcal S_N
:=
\operatorname{span}\{\bm s_i-\bm\mu_N:i=1,\ldots,N\}\, .
$$
Let $P_N$ be the orthogonal projector onto $\mathcal S_N$. Then, conditionally on
$\bm\theta=(\sigma_1^2,\sigma_\varepsilon^2)$, the posterior mean $\bm m_N(\bm\theta)$ is in $\mathcal S_N$ and its covariance $\Sigma_N(\bm\theta)$ satisfies
$$
\Sigma_N(\bm\theta)
=
\sigma_1^2(I_D-P_N)
+
P_N\Sigma_NP_N\, .
$$
Hence the observations modify the slope posterior only within
$\mathcal S_N$ which has dimension at most $N-1$; on $\mathcal S_N^\perp$ the posterior retains
zero mean and prior variance $\sigma_1^2$ in every unit direction.
\end{proposition}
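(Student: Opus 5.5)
We first integrate out the intercept under the flat prior $p(\beta)\propto1$. Conditionally on $\bm v$ and $\bm\theta$, the likelihood is Gaussian in $\beta$, so integrating against the flat prior gives a marginal likelihood for $\bm v$ that depends on the data only through the centred residuals. Concretely, let $C_N:=I_N-N^{-1}\one_N\one_N^\top$ be the centring projector. The integrated likelihood is then proportional to
\[
\exp\!\Big(-\tfrac{1}{2\sigma_\varepsilon^2}\,\|C_N(\bm y_N-X_N\bm v)\|^2\Big).
\]
Write $\tilde X_N:=C_NX_N$; its rows are $\bm s_i-\bm\mu_N$, and $\tilde{\bm y}_N:=C_N\bm y_N$. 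The $\bm v$-dependence is then exactly that of a linear-Gaussian model with design $\tilde X_N$ and noise variance $\sigma_\varepsilon^2$. Combining this with the prior $\mathcal N(\bm 0,\sigma_1^2I_D)$ gives the standard conjugate posterior:
\[
\Sigma_N(\bm\theta)=\Big(\sigma_1^{-2}I_D+\sigma_\varepsilon^{-2}\tilde X_N^\top\tilde X_N\Big)^{-1},
\qquad
\bm m_N(\bm\theta)=\sigma_\varepsilon^{-2}\Sigma_N(\bm\theta)\tilde X_N^\top\tilde{\bm y}_N .
\]
The only care needed here is to check that $C_N$ is idempotent and symmetric, so that $\|C_N\cdot\|^2$ yields exactly the Gram matrix $\tilde X_N^\top\tilde X_N$.

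The geometry then follows from the range of $A:=\tilde X_N^\top\tilde X_N$. This matrix is symmetric positive semidefinite, and its range equals the row space of $\tilde X_N$, namely $\operatorname{span}\{\bm s_i-\bm\mu_N\}=\mathcal S_N$. Hence $A=P_NAP_N$, and $A$ vanishes on $\mathcal S_N^\perp$. Because $I_D=P_N+(I_D-P_N)$, the precision matrix $\sigma_1^{-2}I_D+\sigma_\varepsilon^{-2}A$ is block diagonal with respect to $\mathcal S_N\oplus\mathcal S_N^\perp$. On $\mathcal S_N^\perp$ it equals $\sigma_1^{-2}(I_D-P_N)$. On $\mathcal S_N$ it is $P_N(\sigma_1^{-2}I+\sigma_\varepsilon^{-2}A)P_N$, which is positive definite there. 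Inverting blockwise gives $\Sigma_N=\sigma_1^2(I_D-P_N)+P_N\Sigma_NP_N$. In particular $\Sigma_N$ commutes with $P_N$, which is how the identity in the statement should be read.

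For the mean, $\tilde X_N^\top\tilde{\bm y}_N$ lies in the row space $\mathcal S_N$. Since $\Sigma_N$ maps $\mathcal S_N$ into itself, $\bm m_N\in\mathcal S_N$. Consequently, on $\mathcal S_N^\perp$ the posterior has zero mean and covariance $\sigma_1^2$ in every unit direction, so it coincides with the prior there.

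For the dimension bound, the $N$ centred vectors satisfy $\sum_i(\bm s_i-\bm\mu_N)=\bm 0$. They are therefore linearly dependent, and $\dim\mathcal S_N\le N-1$; the same bound also follows from $\rank C_N=N-1$.

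The main obstacle is the first step, the intercept marginalisation under the improper prior. One has to verify that $\int\exp(-\|\bm r-\beta\one\|^2/2\sigma_\varepsilon^2)\,d\beta$ is proportional to $\exp(-\|C_N\bm r\|^2/2\sigma_\varepsilon^2)$ with a constant independent of $\bm v$. This follows by decomposing $\bm r=\bar r\one+C_N\bm r$ and using orthogonality. Everything after that step is linear algebra.
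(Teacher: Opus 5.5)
Your proposal is correct and follows essentially the same route as the paper's proof: you marginalise the flat intercept to obtain the centred likelihood with $M_N=I_N-N^{-1}\one_N\one_N^\top$ (your $C_N$), write the posterior precision as $\sigma_1^{-2}I_D+\sigma_\varepsilon^{-2}A_N$ where $A_N=X_N^\top M_NX_N$ has range $\mathcal S_N$ and kernel $\mathcal S_N^\perp$, and read off the block structure of $\Sigma_N$ together with $\bm m_N\in\mathcal S_N$. Your explicit check of the intercept integral via the orthogonal split $\bm r=\bar r\one_N+C_N\bm r$ fills in a step that the paper states without detail.
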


Thus, on $\mathcal S_N^\perp$, all orientations remain exchangeable. We
exploit this rotational symmetry to construct an acquisition
representation without restricting queries to the informed subspace.
Appendix~\ref{app:post-data-rotation} formalises the rotational invariance,
and Appendix~\ref{app:effective-informed-dimension} quantifies the strength
of informed updates.

\subsection{Decoupled off-span exploration}
\label{sec:datasized-bo}

\subsubsection{Reduced acquisition geometry}

The data-informed subspace $\mathcal S_N$ describes which slope directions
the observations can update. Acquisition optimisation is more naturally
expressed relative to the design span
\(\mathcal V_N:=\operatorname{span}\{\bm s_1,\ldots,\bm s_N\},\)
whose dimension satisfies $d_N\leq N$.
Let $V\in\R^{D\times d_N}$ have orthonormal columns spanning
$\mathcal V_N$. Every query $\bm s\in\Sph$ can then be decomposed as
\begin{align}\label{eq:decomposition}
\bm s=V\bm w+t\bm e,
\qquad
\bm e\in\mathcal V_N^\perp,\quad \|\bm e\|=1,
\qquad
\|\bm w\|^2+t^2=1,
\end{align}
where $\bm w=V^\top\bm s$ gives the in-span coordinates,
$t=\sqrt{1-\|\bm w\|^2}$ the off-span norm, and $\bm e$  the off-span orientation.
The slope posterior is rotationally invariant on $\mathcal V_N^\perp$ (Appendix~\ref{app:post-data-rotation}), so
rotating $\bm e$ within this complement does not change the predictive
distribution. Hence, prediction
depends on the off-span component only through $t$, and $\bm e$ need not be optimised. Acquisition optimisation can therefore
run over $\bm w\in\R^{d_N}$ instead of the full $D$-dimensional sphere (reduced representation in Appendix~\ref{app:working-basis}).
This reduces acquisition dimension without restricting the search domain.

\subsubsection{EI with prescribed exploration}
\label{sec:acquisition}

\paragraph{Marginal EI and reduced coordinates.}
Recent work \citep{costofsampling} exploits the symmetry of linear surrogates
on spherical domains to obtain fast EI optimisation using point estimates of
the hyperparameters. We extend this to hyperparameter-marginalised
EI and express it in reduced design-span coordinates.
Appendix~\ref{app:working-basis} shows that, for each hyperparameter atom $\bm\theta_k$,
$m_k(\bm s)$ and $s_k^2(\bm s)$ can be seen as functions of the design-span coordinates $\bm w$ only: $m_k(\bm s) = \tilde m_k(\bm w)$ and $s_k^2(\bm s) = \tilde s_k^2(\bm w)$.

For $K$ equal-weight stratified atoms, define
$\bar m_N(\bm w):=K^{-1}\sum_{k=1}^K \tilde m_k(\bm w)$,
$\bm w_i:=V^\top\bm s_i$,
$\tau_N:=\max_{1\leq i\leq N}\bar m_N(\bm w_i)$, and
$\gamma_k(\bm w):=\tilde m_k(\bm w)-\tau_N$.
With $\Phi$ and $\phi$ the standard normal CDF and density, respectively, the hyperparameter-marginalised EI can be written as 
\begin{equation}
\overline{\operatorname{EI}}_N(\bm s)
=
\frac{1}{K}\sum_{k=1}^K
\left[
\gamma_k(\bm w)
\Phi\left(\frac{\gamma_k(\bm w)}{\tilde s_k(\bm w)}\right)
+
\tilde s_k(\bm w)
\phi\left(\frac{\gamma_k(\bm w)}{\tilde s_k(\bm w)}\right)
\right] =: \widetilde{\EI}_N(\bm w).
\label{eq:marginal-ei}
\end{equation}
\paragraph{In-span/off-span decomposition.} As $\widetilde{\mathrm{EI}}_N$ omits $t$ and $\bm e$, EI is maximised in $d_N \le N$ coordinates:
\[
\max_{\bm s \in \mathbb{S}^{D - 1}}
\overline{\EI}_N(\bm s) =
\max_{\| \bm w \| \leq 1}
\widetilde{\EI}_N(\bm w) \, .
\]
Given a maximiser $\bm w_N^{\widetilde \EI}$ of the right-hand-side and any unit direction $\bm e\in\mathcal V_N^\perp$ orthogonal to the in-span, a maximiser of $\overline{\EI}_N$
can be reconstructed as
$$
\bm s_{N+1}
=
V\bm w_N^{\widetilde \EI}
+
t\bm e \text{\qquad where \qquad} t = \sqrt{1-\|\bm w_N^{\widetilde \EI}\|^2} \, .
$$

\paragraph{Prescribed off-span exploration.}
SABLE instead separates these decisions by prescribing $t_N$ and letting EI
choose only the in-span direction, equivalently restricting acquisition
optimisation to
\[
\bm w_N^\star
\in
\argmax_{\|\bm w\|=\sqrt{1-t_N^2}}
\widetilde \EI_N(\bm w).
\]
The acquisition function is unchanged, and the query is reconstructed as above using $\bm w_N^\star$ and $t_N$.
For $\bm e$, we take successive directions from a randomised orthogonal frame using the type-II discrete cosine transform (DCT-II), which lie off-span by construction. This avoids favouring particular latent orientations while remaining efficient at large $D$; see Appendix~\ref{app:orthogonal-frame}.

We separate these decisions because conditioning reduces uncertainty along data-informed directions, whereas directions in $\mathcal V_N^\perp$ retain their prior variance.
Within hyperparameter atom $k$, the
resulting uncertainty contrast depends on the fitted ratio
$\lambda_k:=\sigma_{1,k}^2/\sigma_{\varepsilon,k}^2$.
Hence the norm selected
by EI over the reduced unit ball, and therefore its implied off-span
allocation, can depend strongly on the fitted variance scales. Because
higher-order objective variation contributes to $\sigma_{\varepsilon,k}^2$
as surrogate discrepancy, the fitted ratios $\lambda_k$ need not provide a
reliable basis for controlling exploration; see
Appendix~\ref{app:effective-informed-dimension} for the corresponding
variance analysis and Appendix~\ref{app:acquisition-machinery} for
optimisation details.

\paragraph{Finite-horizon reference schedule.}
We prescribe $t_N$ using a finite-horizon value-of-information-based proxy:
allocating norm off-span sacrifices some immediate exploitation of learned
directions, but can reveal a new slope component that later evaluations can
exploit. The trade-off is controlled by a reference resolution parameter
$\lambda_{\mathrm{ref}}$, chosen independently of the fitted variance scales;
larger $\lambda_{\mathrm{ref}}$ assumes that smaller off-span perturbations provide useful directional information.

Let $h_N:=N_{\max}-N-1$ be the number of evaluations remaining after the current query, and
$R_N$ the cumulative resolved variance
from previous off-span queries, with $R_0=0$.
Treating successive off-span
queries as resolving orthogonal slope components, their contributions add, and $R_N$ is the expected squared norm of the
corresponding posterior mean, so $\sqrt{R_N}$ is a root-mean-square proxy
for the exploitable slope magnitude.
We parameterise the informativeness of a candidate off-span norm $t \in [0,1]$ by $\lambda_{\mathrm{ref}}t^2$; a scalar Gaussian update with unit prior variance then resolves variance
\(
\delta_{\lambda_{\mathrm{ref}}}(t)
:=
\lambda_{\mathrm{ref}}t^2/
(1+\lambda_{\mathrm{ref}}t^2).
\)
The remaining in-span norm is $\sqrt{1-t^2}$, so increasing $t$ trades
current exploitation against directional information that can be used over
the remaining $h_N$ evaluations. We therefore prescribe
\begin{equation}
t_N
:=
\argmax_{t\in[0,1]}
\left\{
\sqrt{1-t^2}\sqrt{R_N}
+
h_N\sqrt{R_N+\delta_{\lambda_{\mathrm{ref}}}(t)}
\right\},
\qquad
R_{N+1}=R_N+\delta_{\lambda_{\mathrm{ref}}}(t_N).
\label{eq:voi-schedule-main}
\end{equation}
The first term represents exploitation by the current query, while the
second represents the value of the enlarged learned slope over the remaining
evaluation horizon. 

The recursion depends only on $(N_{\max},\lambda_{\mathrm{ref}})$ and can
be precomputed. It starts with $t_0=1$, decreases monotonically over the
evaluation horizon, and reaches $t_{N_{\max}-1}=0$; it then remains zero.
Thus the finite horizon anneals off-span exploration.
Appendix~\ref{app:voi-schedule} gives the derivation, stopping criterion,
and further properties. In practice, we use a fixed large
$\lambda_{\mathrm{ref}}$ for all smooth objectives, where small perturbations
remain informative, and a smaller $\lambda_{\mathrm{ref}}$ for noisy or
quantised objectives, where fine perturbations may be obscured or produce no
change. The local-resolvability analysis in
Appendix~\ref{sec:local-resolvability} motivates this distinction and our
experimental values. Once fine perturbations are well resolved,
the large value of $\lambda_{\mathrm{ref}}$ is of secondary
importance: the off-span norm depends only weakly on it.

\subsection{Bounded-cost model refitting}
\label{sec:bounded-state}

Although posterior updating and acquisition optimisation are data-sized, the
fitting representation grows with $N$, making posterior and
hyperparameter refitting computationally costly. We control this growth by
recompressing accumulated observations to $\mathcal O(M)$
pseudo-observations. Compression is performed in observation space: since the
affine likelihood has conditional mean
\(\beta\one_N+X_N\bm v,\)
the design matrix, response vector, and intercept column are compressed
together,
$$
(X_N,\bm y_N,\one_N)
\quad\longrightarrow\quad
(\check X,\check{\bm y},\check{\bm o}).
$$
Recompression exactly preserves the centred slope--response and intercept
cross-statistics, while discarded spectral information is summarised by
pseudo-observations. The resulting representation has size $\mathcal O(M)$
independently of $N$ and can be reused across variance-hyperparameter values.
A log-determinant correction accounts for discarded spectral information in
approximate marginal-likelihood evaluation; Appendix~\ref{app:bounded-refitting}
gives details.

The spectral analysis of Appendix~\ref{app:low-rank-posterior-proof} motivates retaining directions with stronger data support: these produce the largest posterior updates and have the greatest effect on EI, while discarding weak updates perturbs the acquisition only slightly.
Appendix~\ref{app:bounded-refitting-stability} gives stability bounds for the conditional posterior and EI approximation under repeated
recompression. The analysis extends to marginal EI when exact and compressed calculations share hyperparameter weights; changes caused by approximate evidence evaluation are not covered. With compression budget $M$, the posterior-fitting representation is bounded in $M$, rather than growing with $N$.

\subsection{Computational scaling}
\label{sec:complexity}

The exploration schedule presented in Section~\ref{sec:acquisition} increases sample efficiency, i.e. it reduces the number of objective evaluations needed to reach a given value, which we demonstrate in Section~\ref{sec:experiments}.
With respect to \emph{computational} efficiency (algorithmic overhead), with the fast posterior compression presented in Section~\ref{sec:bounded-state}, SABLE addresses a key bottleneck beyond fast acquisitions in~\cite{costofsampling}; the cubic growth of model-refitting cost with the number of evaluations $N$.

Without compression, the posterior-fitting state requires
$\mathcal O(ND)$ storage and, for a fixed hyperparameter grid, per-round refitting costs $\mathcal O(N^2D+N^3)$. With compression
budget $M$, these become $\mathcal O(MD)$ and
$\mathcal O(M^2D+M^3)$, respectively.
Thus, for fixed compression budget $M$, posterior-fitting storage and refitting cost are independent of $N$ once recompression is active.
For $D\gg M$, compressed refitting is dominated
by $\mathcal O(M^2D)$, compared with $\mathcal O(N^2D)$ without compression.
Exact historical posterior-mean readouts defining the EI incumbent require
$\mathcal O(NM)$ auxiliary storage; see
Appendix~\ref{app:frame-ledger}.

\section{Experiments}
\label{sec:experiments}

We evaluate SABLE from its individual components to the complete method:
state compression, prescribed off-span exploration, and scientific
latent-space optimisation. Unless otherwise stated, SABLE marginalises the
variance scales, uses $M=300$, $\lambda_\text{ref}=10^5$, and optimises on the $\sqrt D$-rescaled unit
sphere. The primary optimisation metric is the best observed value; 
curves show means with $5$th--$95$th percentiles. Experimental settings
and additional results are given in Appendices~\ref{app:experimental-setup},~\ref{app:experiments}.

\paragraph{State compression.}

On ZonalFunctionBench (Appendix~\ref{app:zonal-bench}) at $D=10{,}000$,
we vary only the compression budget $M$ across an affine control and
three objectives with mild, moderate, and severe non-affine
structure, using ten repeats. ZonalFunctionBench provides random spherical objectives
with controlled higher-order structure and no preferred coordinate system. All
variants use marginalisation and the prescribed off-span schedule.
\begin{figure}[ht]
\centering
\includegraphics[width=\textwidth]{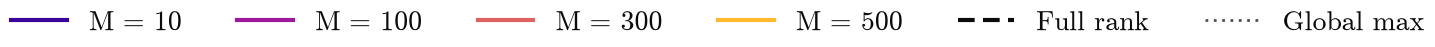}
\vspace{-0.9em}
\begin{minipage}[t]{0.598\textwidth}
\vspace{0pt}
\centering
\setlength{\tabcolsep}{0pt}%
\renewcommand{\arraystretch}{0}%
\begin{tabular}{@{}c@{}c@{}}
\includegraphics[width=0.5\linewidth]{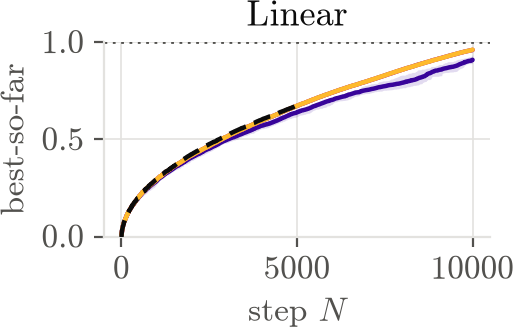} &
\includegraphics[width=0.5\linewidth]{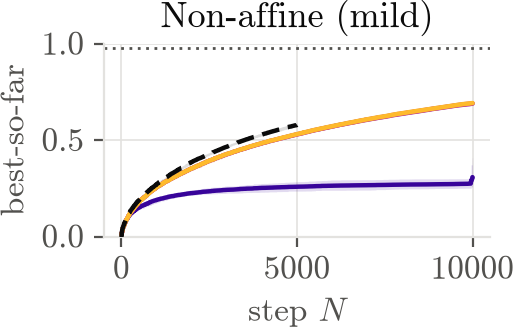} \\
\vspace{0.5em} \\
\includegraphics[width=0.5\linewidth]{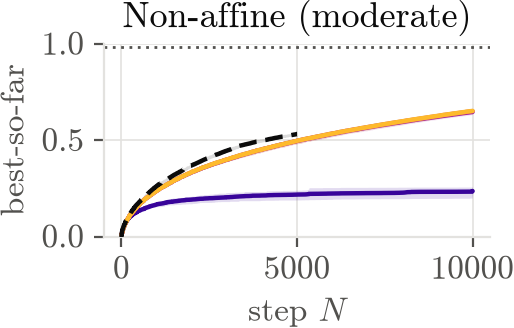} &
\includegraphics[width=0.5\linewidth]{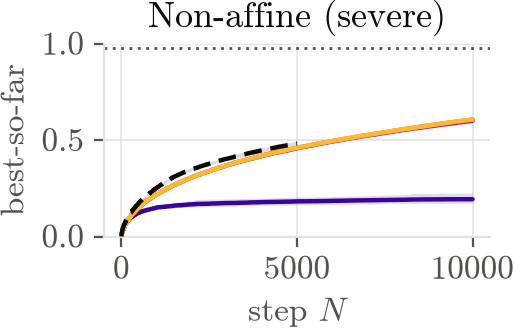} \\
\end{tabular}
\end{minipage}%
\hfill
\begin{minipage}[t]{0.387\textwidth}
\centering
\vspace{0.7em}
\includegraphics[width=\linewidth]{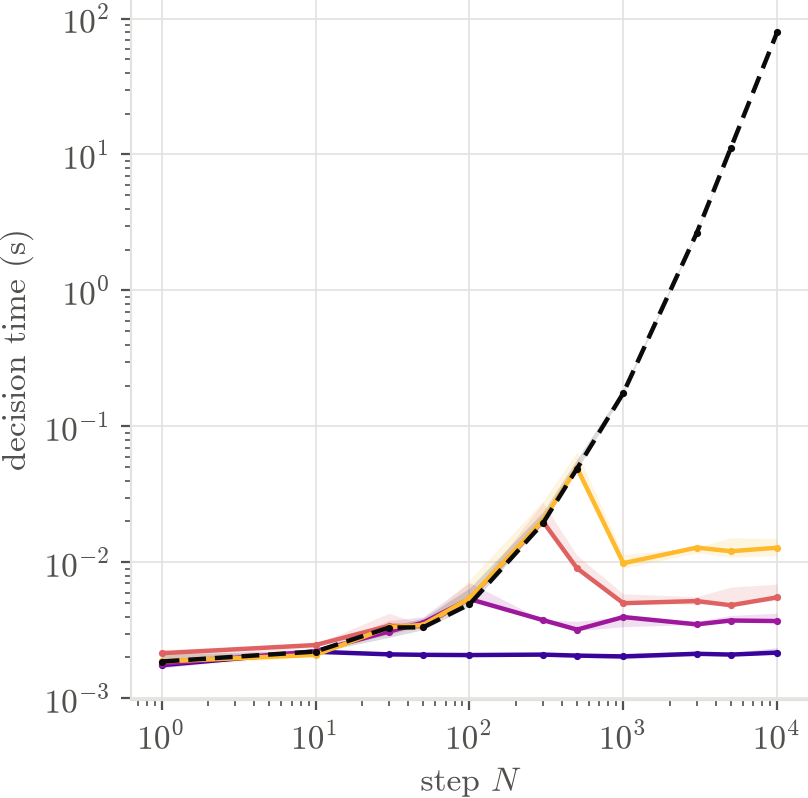}
\end{minipage}
\vspace{1em}
\caption{
\textbf{Effect of state compression.}
\emph{Left:} best observed value for compression budgets
$M\in\{10,100,300,500\}$ at $D=10{,}000$, compared with the uncapped
representation (dashed) and the mean global maximum of the ten objective
draws. Lines show means over ten runs (and objective draws) and bands show
$5$th--$95$th percentiles.
\emph{Right:} measured wall-clock time per decision.
}
\label{fig:c3-rank}
\end{figure}
Figure~\ref{fig:c3-rank} shows that at $N=5{,}000$, the largest shared
budget, the $M\in\{100,300,500\}$ variants differ from one another by at
most $0.003$ in best-so-far value. Relative to the uncapped representation,
their mean gaps are $0.003$, $0.047$, $0.035$, and $0.022$ from the affine
through severe cases, with no systematic growth over $N\in[1000,5000]$.
By contrast, $M=10$ is too
aggressive, trailing the uncapped representation by $0.30$--$0.32$ on the
non-affine objectives.
The compressed variants continue to $N=D=10{,}000$, providing a
long-horizon stress test beyond the target regime $N\ll D$. 
These results support the compression approach of
Section~\ref{sec:bounded-state}: retaining strongly data-informed directions
preserves optimisation performance.

\begin{table}[ht]
\centering
\caption{
\textbf{Model fitting and off-span exploration} at $N=D=1000$ with the
uncapped representation. \emph{Left:} recovery of the known degree-one
component, $\cos(\hat{\bm s},\bm v_\star)$. \emph{Right:} attained value: maximum of the best observed point and the posterior-mean recommendation.
Entries are means over ten seeds. Bold: the column best and every
entry within two standard deviations.
}
\label{tab:c2-degree-one-main}
\footnotesize
\begin{tabular}{llcccccccc}
\toprule
& & \multicolumn{4}{c}{$\cos(\hat{\bm s},\bm v_\star)$}
  & \multicolumn{4}{c}{attained value} \\
\cmidrule(lr){3-6}\cmidrule(lr){7-10}
& & affine & mild & mod. & severe & affine & mild & mod. & severe \\
\midrule
\multirow{3}{*}{Standard EI (no schedule)}
& MLE   & \textbf{1.00} & -0.00 & -0.01 & -0.01 & \textbf{1.00} & 0.48 & 0.32 & 0.16 \\
& MAP   & 0.78 & 0.44 & 0.30 & 0.16 & 0.78 & 0.60 & 0.54 & 0.50 \\
& Marg. & 0.77 & 0.44 & 0.29 & 0.19 & 0.77 & 0.59 & 0.54 & 0.49 \\
\midrule
\multirow{3}{*}{
\makecell[l]{naive model-derived\\schedule: $\lambda_k = \nicefrac{\sigma_{1, k}^2}{\sigma_{\varepsilon, k}^2}$}
}
& MLE   & 0.13 & 0.00 & 0.00 & 0.01 & 0.49 & 0.49 & 0.36 & 0.23 \\
& MAP   & 0.98 & \textbf{0.59} & \textbf{0.41} & \textbf{0.23} & 0.98 & 0.67 & 0.55 & 0.45 \\
& Marg. & 0.98 & \textbf{0.59} & \textbf{0.42} & \textbf{0.25} & 0.98 & 0.66 & 0.56 & 0.45 \\
\midrule
\multirow{3}{*}{
\makecell[l]{SABLE schedule:\\$\lambda_{\mathrm{ref}}=10^5$}
}
& MLE   & 0.90 & -0.00 & 0.01 & 0.00 & \textbf{1.00} & 0.55 & 0.37 & 0.24 \\
& MAP   & \textbf{0.99} & \textbf{0.56} & \textbf{0.38} & \textbf{0.21} & \textbf{0.99} & \textbf{0.75} & \textbf{0.68} & \textbf{0.63} \\
& Marg. & \textbf{0.99} & \textbf{0.57} & \textbf{0.38} & \textbf{0.23} & \textbf{0.99} & \textbf{0.74} & \textbf{0.67} & \textbf{0.63} \\
\bottomrule
\end{tabular}
\end{table}

\paragraph{Separating exploration from model fit.}
On the same four controlled objectives at $N=D=1000$, we cross three
variance-scale treatments (MLE, MAP, and marginalisation) with three
exploration rules: standard EI (no-schedule), a naive schedule using the fitted signal-to-discrepancy ratio, and the SABLE schedule with
$\lambda_{\mathrm{ref}}=10^5$, as prescribed for these smooth, noise-free
objectives by Appendix~\ref{sec:local-resolvability}.
To isolate exploration effects, all runs use the uncapped representation
without compression. The no-schedule MAP configuration recovers the
method of \cite{costofsampling}. Table~\ref{tab:c2-degree-one-main} reports
optimisation performance and recovery of the degree-one component,
measured by $\cos(\hat{\bm s},\bm v_\star)$, which diagnoses whether the search
recovers the global affine direction.

With MLE, the affine model overfits the non-affine
structure, as evidenced by a discrepancy scale near zero, and EI assigns most
query norm to fresh directions; the inferred slope then recovers essentially
none of the degree-one component, with
$\cos(\hat{\bm s},\bm v_\star)$ near zero. 
MAP and
marginalisation recover this component better, but produce the opposite
allocation.
EI assigns almost no norm off-span and the search becomes confined
to the span established early. Driving the schedule by the fitted
variance ratio does not resolve this coupling: although it improves degree-one
recovery, its attained values remain below the SABLE schedule on the
non-affine objectives. This is consistent with the affine model overfitting the observations, making the fitted ratio an
unreliable control for off-span exploration. Appendix~\ref{sec:exp-hyperparameters}
shows diagnostics, including off-span
norm $t$ for each method.

SABLE prescribes the off-span schedule independently of model fit,
without changing the posterior or within-span EI criterion. With MAP or
marginalisation, this improves degree-one recovery and optimisation
performance across the non-affine ladder. It cannot, however, compensate for
a poor surrogate fit: under MLE, the degree-one component remains unrecovered
away from the affine control. The experiment isolates two
requirements: a useful posterior fit and an off-span allocation not determined
by the fitted discrepancy scale. Appendix~\ref{sec:exp-hyperparameters}
reports fixed-$\lambda_{\mathrm{ref}}$ ablations and the noisy-observation
counterpart, while Appendix~\ref{sec:local-resolvability} discusses the
problem-regime choice of $\lambda_{\mathrm{ref}}$.

\paragraph{Controlled long-horizon evaluation.}

Figure~\ref{fig:c4-rank-lambda} evaluates SABLE at
$D=N=10{,}000$, beyond the target regime $N\ll D$, using $M=300$,
marginalised variance scales, and the prescribed
schedule. We compare against the no-schedule
ablation, Thompson sampling using the same compressed posterior, random and
orthogonal design-then-fit baselines, and sep-CMA-ES
\citep{ros2008simple}. The curves report best observed value, while endpoint
markers report attained value, additionally accounting for the posterior-mean
recommendation where available. SABLE attains the highest endpoint value on
all three non-affine objectives. 
See Appendix~\ref{app:c4-endpoint} for fixed-$\lambda_{\mathrm{ref}}$ ablations.

\begin{figure}[ht]
    \centering
    \includegraphics[width=\textwidth]
    {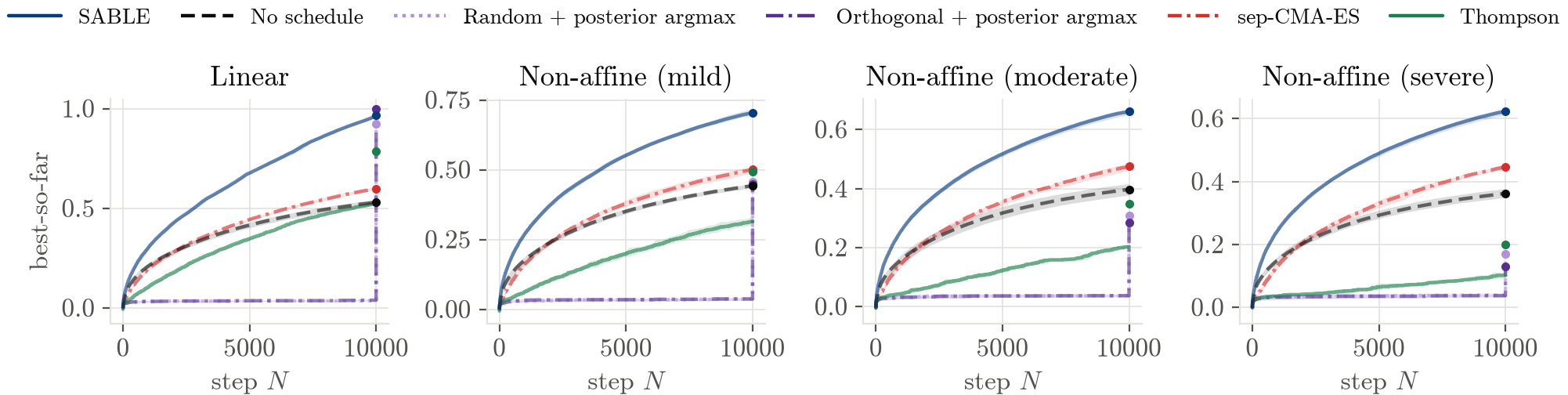}
    \vspace{-1.2em}
    \caption{
    \textbf{Controlled long-horizon evaluation, $D=N=10{,}000$.}
    Lines show mean best observed value over ten runs, with
    $5$th--$95$th percentile bands. Markers at $N=10{,}000$ show attained
    value, the better of the best observed point and the posterior-mean recommendation.
    }
    \label{fig:c4-rank-lambda}
\end{figure}

\paragraph{Scientific latent-space optimisation.}

\begin{figure}[t]
\centering
\scriptsize
\setlength{\tabcolsep}{4.0pt}%
\begin{tabular*}{\textwidth}{@{}c@{\hspace{2\tabcolsep}}l@{\extracolsep{\fill}\hspace{2\tabcolsep}}ccccc@{}}
\toprule
& & Turbulence & Calorimeter DS2 & Peptides & Calorimeter DS3 & decision time (s) \\
& & $D = 6{,}000$ & $D = 6{,}480$ & $D = 13{,}440$ & $D = 40{,}500$ & at $N = 3{,}000$,  \\
& evaluation time (s) & 4.6 & 1.1 & 0.49 & 8.2 & $D = 6{,}480$ \\
\midrule
\multirow{8}{*}{\rotatebox[origin=c]{90}{\emph{Model-free}}}
& Random search
& 0.89 {\scriptsize [0.88, 0.90]}
& 0.43 {\scriptsize [0.40, 0.45]}
& 1.01 {\scriptsize [0.86, 1.11]}
& 0.21 {\scriptsize [0.19, 0.22]}
& $<$0.01 \\
& Orthogonal design
& 0.89 {\scriptsize [0.88, 0.91]}
& 0.42 {\scriptsize [0.39, 0.46]}
& 0.95 {\scriptsize [0.87, 1.03]}
& 0.22 {\scriptsize [0.21, 0.24]}
& $<$0.01 \\
& sep-CMA-ES (sphere)
& 0.92 {\scriptsize [0.91, 0.93]}
& 0.76 {\scriptsize [0.74, 0.78]}
& \textbf{1.25} {\scriptsize [1.20, 1.27]}
& 0.55 {\scriptsize [0.53, 0.57]}
& $<$0.01 \\
& sep-CMA-ES ($\R^D$)
& 0.84 {\scriptsize [0.82, 0.86]}
& 0.55 {\scriptsize [0.53, 0.58]}
& \textbf{1.25} {\scriptsize [1.22, 1.28]}
& 0.28 {\scriptsize [0.27, 0.31]}
& $<$0.01 \\
& CMA-ES (sphere)
& 0.92 {\scriptsize [0.91, 0.93]}
& 0.76 {\scriptsize [0.75, 0.77]}
& ---
& ---
& 3.0 \\
& CMA-ES ($\R^D$)
& 0.84 {\scriptsize [0.82, 0.86]}
& 0.55 {\scriptsize [0.53, 0.56]}
& ---
& ---
& 3.0 \\
& CMA-ES (O3, $K = 90$)
& 0.94 {\scriptsize [0.93, 0.95]}
& 0.51 {\scriptsize [0.46, 0.54]}
& 0.96 {\scriptsize [0.76, 1.18]}
& 0.34 {\scriptsize [0.32, 0.37]}
& $<$0.01 \\
& CMA-ES (O3, $K = 300$)
& 0.90 {\scriptsize [0.89, 0.91]}
& 0.49 {\scriptsize [0.47, 0.50]}
& 1.05 {\scriptsize [0.85, 1.22]}
& 0.34 {\scriptsize [0.32, 0.35]}
& 0.02 \\
\midrule
\multirow{8}{*}{\rotatebox[origin=c]{90}{\emph{GP}}}
& Vanilla BO
& 0.90$^{\dagger}$ {\scriptsize [0.89, 0.92]}
& 0.58$^{\dagger}$ {\scriptsize [0.54, 0.60]}
& 0.99$^{\dagger}$ {\scriptsize [0.76, 1.18]}
& ---
& 500$^{\dagger}$ at $N = 546$ \\
& MSR
& 0.89$^{\dagger}$ {\scriptsize [0.88, 0.91]}
& 0.60$^{\dagger}$ {\scriptsize [0.54, 0.67]}
& 0.95$^{\dagger}$ {\scriptsize [0.75, 1.12]}
& ---
& 900$^{\dagger}$ at $N = 303$ \\
& Linear BO
& 0.89$^{\dagger}$ {\scriptsize [0.88, 0.90]}
& 0.52$^{\dagger}$ {\scriptsize [0.48, 0.58]}
& 1.13$^{\dagger}$ {\scriptsize [0.95, 1.23]}
& ---
& 730$^{\dagger}$ at $N = 395$ \\
& TuRBO
& 0.93 {\scriptsize [0.91, 0.94]}
& 0.83 {\scriptsize [0.82, 0.84]}
& 0.68 {\scriptsize [0.57, 0.80]}
& ---
& 7.8 \\
& TuRBO (O3, $K = 90$)
& 0.94 {\scriptsize [0.93, 0.95]}
& 0.54 {\scriptsize [0.52, 0.55]}
& 0.97 {\scriptsize [0.81, 1.21]}
& 0.36 {\scriptsize [0.35, 0.37]}
& 3.7 \\
& TuRBO (O3, $K = 300$)
& 0.94 {\scriptsize [0.92, 0.95]}
& 0.57 {\scriptsize [0.55, 0.58]}
& 0.95 {\scriptsize [0.73, 1.14]}
& 0.40 {\scriptsize [0.39, 0.41]}
& 5.7 \\
& BAxUS
& 0.92$^{\dagger}$ {\scriptsize [0.90, 0.94]}
& 0.70$^{\dagger}$ {\scriptsize [0.65, 0.74]}
& 0.75$^{\dagger}$ {\scriptsize [0.51, 1.08]}
& 0.45$^{\dagger}$ {\scriptsize [0.43, 0.48]}
& 430$^{\dagger}$ at $N = 1{,}132$ \\
& \cite{costofsampling}
& 0.90 {\scriptsize [0.85, 0.94]}
& 0.83 {\scriptsize [0.80, 0.85]}
& 0.77 {\scriptsize [0.60, 0.98]}
& 0.39 {\scriptsize [0.38, 0.40]}
& 2.5 \\
\midrule
\multirow{2}{*}{\rotatebox[origin=c]{90}{\emph{Ours}}}
& Standard EI (no schedule)
& 0.92 {\scriptsize [0.85, 0.94]}
& 0.83 {\scriptsize [0.81, 0.85]}
& 0.81 {\scriptsize [0.64, 1.12]}
& 0.39 {\scriptsize [0.39, 0.40]}
& 0.027 \\
& SABLE
& \textbf{0.95} {\scriptsize [0.94, 0.96]}
& \textbf{0.86} {\scriptsize [0.84, 0.87]}
& 1.20 {\scriptsize [1.15, 1.23]}
& \textbf{0.66} {\scriptsize [0.64, 0.68]}
& 0.027 \\
\bottomrule
\end{tabular*}
\vspace{1.0em}\\
\includegraphics[width=\textwidth]{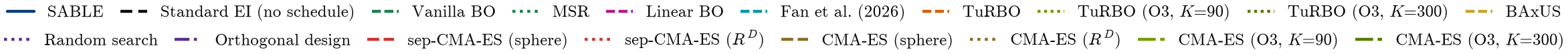}
\vspace{-1.0em}\\
\includegraphics[width=\textwidth]{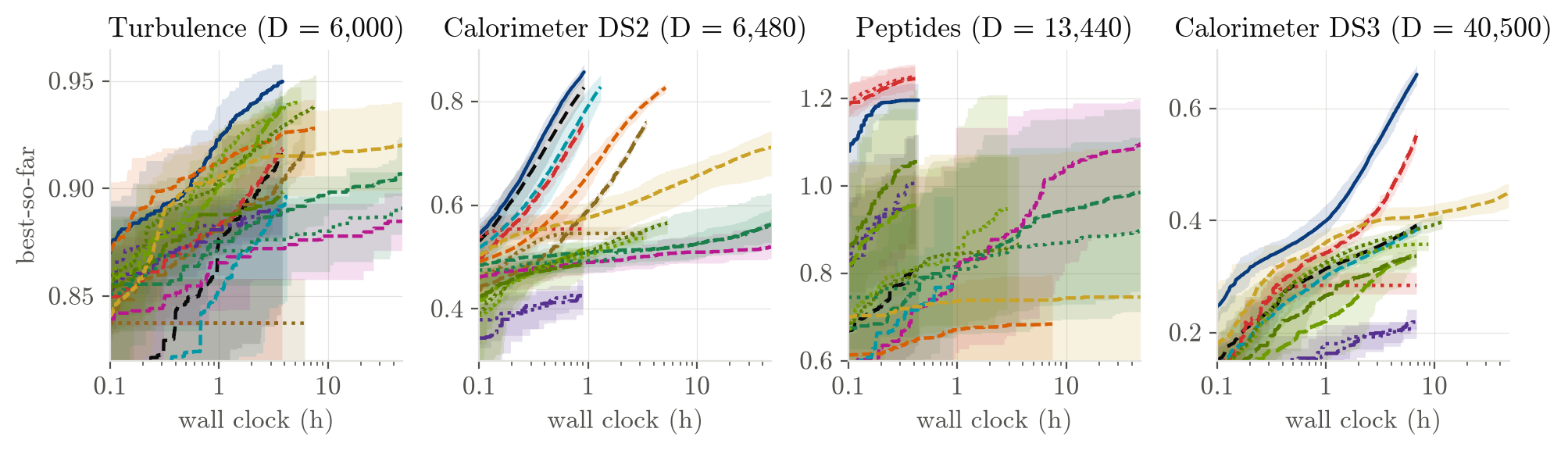}
\vspace{-1.2em}\\
\caption{
\textbf{Scientific latent-space optimisation.}
\emph{Table:} best observed value, reported as mean and
$5$th--$95$th percentile range over ten seeds; bold: within two standard
errors of the column best. $^{\dagger}$: truncated at 48\,h, reaching
$N=303$--$1{,}132$.
\emph{Bottom:} best observed value against wall-clock time.
}
\label{fig:c5-domains}
\end{figure}

We evaluate SABLE on four latent-space optimisation problems from pretrained
scientific generative models, with $D$ from $6{,}000$ to $40{,}000$ (Appendix~\ref{app:benchmark-definitions}): Lagrangian turbulence, antimicrobial peptides,
and CaloDiffusion calorimeter showers at two detector resolutions.
SABLE uses $M=300$, marginalised variance scales,
and the prescribed exploration schedule. 
For the non-smooth peptide objective, we use
$\lambda_{\mathrm{ref}}=10$ (Appendix~\ref{app:defaults}). 
Each method gets $3{,}000$ evaluations per run over ten runs.
Figure~\ref{fig:c5-domains} compares SABLE with model-free search,
CMA-ES variants, GP-based BO methods, and no-schedule ablations on wall-time; Appendix~\ref{app:c5-steps} shows the same runs against evaluation count.
SABLE attains the highest mean best
observed value on every problem except Peptides, where sep-CMA-ES leads. On Calorimeter
DS3 ($D=40{,}500$), SABLE reaches $0.66$ compared with $0.55$ for sep-CMA-ES.
Finally, at a 6.25-fold increase in $D$, the schedule's gain over the no-schedule ablation grows from $0.03$ on DS2 to $0.27$ on DS3.

To make the comparison conservative, we give baselines access to the same typical-radius spherical domain as SABLE. This constraint favours the baselines rather than handicapping them: unconstrained optimisers can vary the latent norm and propose atypical points, which can severely degrade generation quality (see Section~\ref{sec:model}). The ambient-space CMA-ES ablation quantifies this effect, showing that removing the spherical constraint substantially worsens optimisation performance.

The comparison includes random and orthogonal search, separable and full-covariance CMA-ES,
Vanilla BO \citep{hvarfner2024vanilla}, Linear BO~\citep{doumont2026}, TuRBO~\citep{eriksson2019scalable}, MSR~\citep{papenmeier2025understanding}, BAxUS~\citep{papenmeier2022increasing}, O3 using CMA-ES as optimiser~\citep{willis2025sample}, the method of \cite{costofsampling} without our posterior compression or schedule, and our compressed posterior, no-schedule ablation. Vanilla BO, Linear BO, MSR, and BAxUS do not reach the budget within 48 hours and are reported at the evaluations completed by then. GP baselines other than BAxUS are omitted on Calorimeter DS3, and full-covariance 
CMA-ES on Peptides and DS3, where costs are prohibitive. Full configurations are given in Appendix~\ref{app:baselines}.

\paragraph{Scalability.}
Computational differences become substantial at large budgets.
Figure~\ref{fig:c3-rank} isolates bounded-state compression:
at $N=D=10{,}000$, the $M=300$ variant requires $5.5$\,ms per decision,
compared with $79.5$\,s for the uncapped representation used in~\citet{costofsampling}. On scientific tasks, Figure~\ref{fig:c5-domains} reports
$0.027$\,s per decision for SABLE on the $D=6{,}480$ calorimeter problem, small relative to its $1.1$\,s objective evaluation time, whereas Vanilla BO, Linear BO, MSR, and BAxUS require $430$--$900$\,s per decision at the largest $N$ they reach within $48$\,h ($N=303$--$1{,}132$).
We note that compression has no cost to sample efficiency on these problems.
Our no-schedule ablation matches or exceeds \citet{costofsampling} on every task while deciding about 90 times faster ($0.027$\,s against $2.5$\,s).
SABLE's wall-clock trajectories closely follow evaluation-count performance.

\newpage
\begin{wrapfigure}{r}{0.5\textwidth}
    \vspace{-\intextsep}
    \centering
    \includegraphics[width=\linewidth]{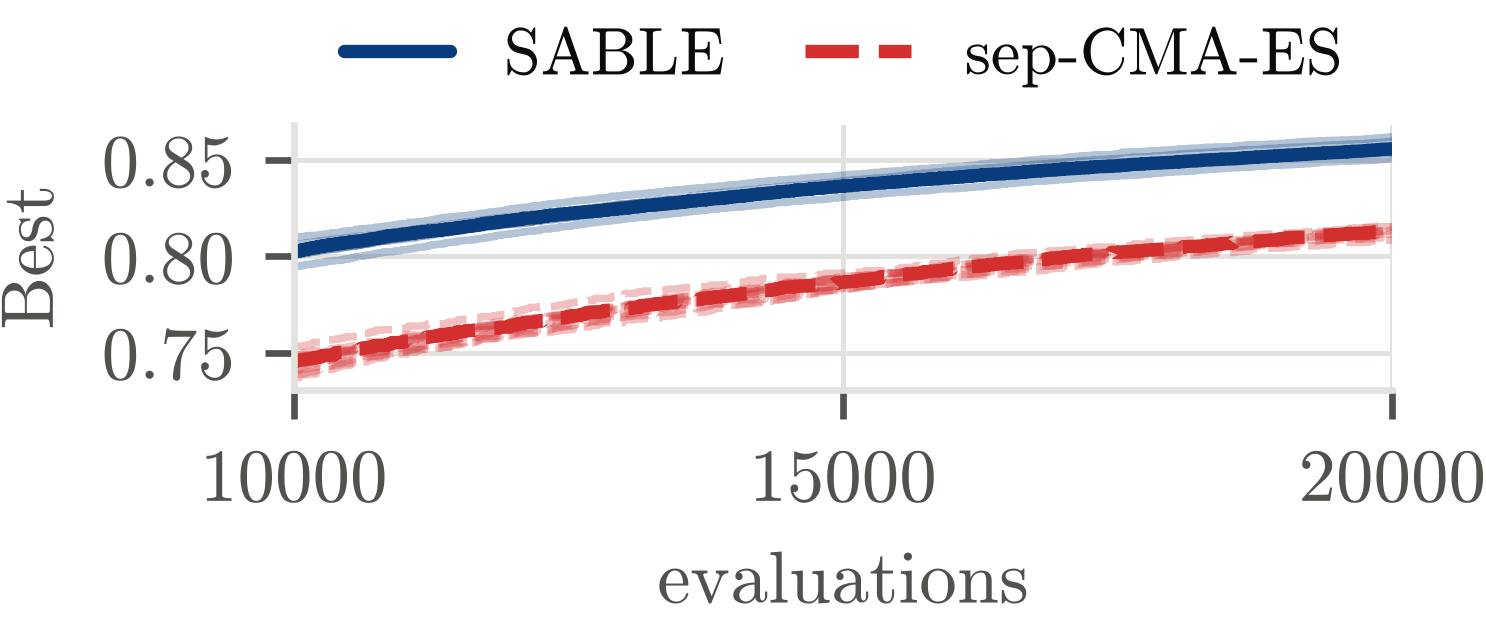}
    \vspace{-0.75em}
    \caption{
    \textbf{High-budget DS3 optimisation}. Best observed
    value against evaluations; thick lines are medians and thin lines the individual runs.
    }
    \label{fig:c9-ds3}
\end{wrapfigure}
\paragraph{Extending the budget without slowdown.}
Because SABLE's per-round cost is independent of $N$ once the compression
budget $M$ is reached, it can run at budgets well beyond those of the GP baselines. On Calorimeter DS3 ($D=40{,}500$) we extend the budget
to $N=20{,}000$ and compare against
sep-CMA-ES, the only competitive baseline at this scale. SABLE
leads on all ten runs at every budget
(Figure~\ref{fig:c9-ds3}), and it
matches sep-CMA-ES's final value at approximately half the budget.

\section{Conclusion}
SABLE addresses latent optimisation when dimension exceeds the budget by combining data-sized acquisition, controlled off-span exploration, and bounded-cost pseudo-observation compression, without restricting queries to a learned subspace.
Across synthetic and scientific tasks, it achieves sample efficiency up to $D=40{,}500$, with tens-of-milliseconds decisions on timing benchmarks.

The work assumes an isotropic latent distribution (e.g. $\mathcal{N}(\bm{0}, I_D)$) and that the affine working model captures a useful signal.
Objectives dominated by higher-order or quantised structure, as in the Peptides benchmark, are a weaker regime.
The off-span schedule is a finite-horizon value-of-information proxy rather than a Bayes-optimal policy, requiring $N_\text{max}$ in advance.
Further evaluation regimes, batch acquisition, and higher-order working models are natural extensions.





\bibliography{references_affine}

@article{watson2023rfdiffusion,
  author  = {Watson, Joseph L. and Juergens, David and Bennett, Nathaniel R. and Trippe, Brian L. and Yim, Jason and Eisenach, Helen E. and Ahern, Woody and Borst, Andrew J. and Ragotte, Robert J. and Milles, Lukas F. and Wicky, Basile I. M. and Hanikel, Nikita and Pellock, Samuel J. and Courbet, Alexis and Sheffler, William and Wang, Jue and Venkatesh, Preetham and Sappington, Isaac and V{\'a}zquez Torres, Susana and Lauko, Anna and De Bortoli, Valentin and Mathieu, Emile and Ovchinnikov, Sergey and Barzilay, Regina and Jaakkola, Tommi S. and DiMaio, Frank and Baek, Minkyung and Baker, David},
  title   = {De novo design of protein structure and function with {RFdiffusion}},
  journal = {Nature},
  volume  = {620},
  number  = {7976},
  pages   = {1089--1100},
  month   = aug,
  year    = {2023},
  issn    = {1476-4687},
  doi     = {10.1038/s41586-023-06415-8}
}

@article{das2021accelerated,
  author  = {Das, Payel and Sercu, Tom and Wadhawan, Kahini and Padhi, Inkit and Gehrmann, Sebastian and Cipcigan, Flaviu and Chenthamarakshan, Vijil and Strobelt, Hendrik and Dos Santos, Cicero and Chen, Pin-Yu and others},
  title   = {Accelerated antimicrobial discovery via deep generative models and molecular dynamics simulations},
  journal = {Nature Biomedical Engineering},
  volume  = {5},
  number  = {6},
  pages   = {613--623},
  year    = {2021}
}

@article{chenthamarakshan2023accelerating,
  author  = {Chenthamarakshan, Vijil and Hoffman, Samuel C. and Owen, C. David and Lukacik, Petra and Strain-Damerell, Claire and Fearon, Daren and Malla, Tika R. and Tumber, Anthony and Schofield, Christopher J. and Duyvesteyn, Helen M. E. and others},
  title   = {Accelerating drug target inhibitor discovery with a deep generative foundation model},
  journal = {Science Advances},
  volume  = {9},
  number  = {25},
  year    = {2023}
}

@inproceedings{tripp2020sampleefficient,
  author    = {Tripp, Austin and Daxberger, Erik and Hern{\'a}ndez-Lobato, Jos{\'e} Miguel},
  title     = {Sample-efficient optimization in the latent space of deep generative models via weighted retraining},
  booktitle = {Advances in Neural Information Processing Systems},
  volume    = {33},
  pages     = {11259--11272},
  year      = {2020}
}

@inproceedings{maus2022local,
  author    = {Maus, Natalie and Jones, Haydn and Moore, Juston and Kusner, Matt J. and Bradshaw, John and Gardner, Jacob},
  title     = {Local latent space Bayesian optimization over structured inputs},
  booktitle = {Advances in Neural Information Processing Systems},
  volume    = {35},
  pages     = {34505--34518},
  publisher = {Curran Associates, Inc.},
  year      = {2022}
}

@inproceedings{moss2025return,
  author    = {Moss, Henry and Ober, Sebastian W. and Diethe, Tom},
  title     = {Return of the latent space {COWBOYS}: Re-thinking the use of {VAEs} for Bayesian optimisation of structured spaces},
  booktitle = {Proceedings of the 42nd International Conference on Machine Learning},
  series    = {Proceedings of Machine Learning Research},
  volume    = {267},
  pages     = {44956--44970},
  publisher = {PMLR},
  year      = {2025}
}

@book{garnett2023bayesian,
  author    = {Garnett, Roman},
  title     = {Bayesian Optimization},
  publisher = {Cambridge University Press},
  year      = {2023}
}

@article{jones1998efficient,
  author  = {Jones, Donald R. and Schonlau, Matthias and Welch, William J.},
  title   = {Efficient global optimization of expensive black-box functions},
  journal = {Journal of Global Optimization},
  volume  = {13},
  pages   = {455--492},
  year    = {1998}
}

@inproceedings{eriksson2019scalable,
  author    = {Eriksson, David and Pearce, Michael and Gardner, Jacob and Turner, Ryan D. and Poloczek, Matthias},
  title     = {Scalable global optimization via local Bayesian optimization},
  booktitle = {Advances in Neural Information Processing Systems},
  volume    = {32},
  year      = {2019}
}

@inproceedings{hvarfner2024vanilla,
  author    = {Hvarfner, Carl and Hellsten, Erik Orm and Nardi, Luigi},
  title     = {Vanilla Bayesian optimization performs great in high dimensions},
  booktitle = {41st International Conference on Machine Learning},
  volume    = {235},
  pages     = {20793--20817},
  publisher = {PMLR},
  year      = {2024}
}

@book{cover2006elements,
  author    = {Cover, Thomas M. and Thomas, Joy A.},
  title     = {Elements of Information Theory},
  edition   = {2},
  publisher = {Wiley-Interscience},
  year      = {2006},
  isbn      = {9780471748816}
}

@book{vershynin2026hdp,
  author    = {Vershynin, Roman},
  title     = {High-Dimensional Probability: An Introduction with Applications in Data Science},
  series    = {Cambridge Series in Statistical and Probabilistic Mathematics},
  publisher = {Cambridge University Press},
  edition   = {2},
  year      = {2026}
}

@article{nalisnick2019typicality,
  author  = {Nalisnick, Eric and Matsukawa, Akihiro and Teh, Yee Whye and Lakshminarayanan, Balaji},
  title   = {Detecting out-of-distribution inputs to deep generative models using typicality},
  journal = {arXiv preprint arXiv:1906.02994},
  year    = {2019}
}

@inproceedings{wang2013bayesian,
author = {Wang, Ziyu and Zoghi, Masrour and Hutter, Frank and Matheson, David and De Freitas, Nando},
title = {Bayesian optimization in high dimensions via random embeddings},
year = {2013},
isbn = {9781577356332},
publisher = {AAAI Press},
booktitle = {Proceedings of the Twenty-Third International Joint Conference on Artificial Intelligence},
pages = {1778–1784},
numpages = {7},
location = {Beijing, China},
series = {IJCAI '13}
}

@InProceedings{kandasamy2015high,
  title = 	 {High Dimensional Bayesian Optimisation and Bandits via Additive Models},
  author = 	 {Kandasamy, Kirthevasan and Schneider, Jeff and Poczos, Barnabas},
  booktitle = 	 {32nd International Conference on Machine Learning},
  pages = 	 {295--304},
  year = 	 {2015},
  volume = 	 {37},
  publisher =    {PMLR},
}

@InProceedings{eriksson2021high,
  title = 	 {High-dimensional {Bayesian} optimization with sparse axis-aligned subspaces},
  author =       {Eriksson, David and Jankowiak, Martin},
  booktitle = 	 {Proceedings of the Thirty-Seventh Conference on Uncertainty in Artificial Intelligence},
  pages = 	 {493--503},
  year = 	 {2021},
  volume = 	 {161},
  series = 	 {Proceedings of Machine Learning Research},
  month = 	 {27--30 Jul},
  publisher =    {PMLR},
}

@inproceedings{
doumont2026,
title={We Still Don{\textquoteright}t Understand High-Dimensional Bayesian Optimization},
author={Colin Doumont and Donney Fan and Natalie Maus and Jacob R. Gardner and Henry Moss and Geoff Pleiss},
booktitle={The 29th International Conference on Artificial Intelligence and Statistics},
year={2026},
}

@inproceedings{papenmeier2025understanding,
  title = 	 {Understanding High-Dimensional {B}ayesian Optimization},
  author =       {Papenmeier, Leonard and Poloczek, Matthias and Nardi, Luigi},
  booktitle = 	 {42nd International Conference on Machine Learning},
  pages = 	 {47902--47923},
  year = 	 {2025},
  volume = 	 {267},
  series = 	 {Proceedings of Machine Learning Research},
  publisher =    {PMLR},
}

@inproceedings{
samuel2023normguided,
title={Norm-guided latent space exploration for text-to-image generation},
author={Dvir Samuel and Rami Ben-Ari and Nir Darshan and Haggai Maron and Gal Chechik},
booktitle={Thirty-seventh Conference on Neural Information Processing Systems},
year={2023},
}

@article{costofsampling,
title={Search at the Cost of Sampling: Nearly-Instant Latent Space Bayesian Optimization},
author={Donney Fan and Colin Doumont and Aleksandra Kalisz and Paul Duckworth and Jacob R. Gardner and Henry Moss and Geoff Pleiss},
journal = {arXiv preprint arXiv:2609.19476}, 
year=2026,
}

@InProceedings{krishnamoorthy2023,
  title = 	 {Diffusion Models for Black-Box Optimization},
  author =    {Krishnamoorthy, Siddarth and Mashkaria, Satvik Mehul and Grover, Aditya},
  booktitle = 	 {40th International Conference on Machine Learning},
  pages = 	 {17842--17857},
  year = 	 {2023},
  volume = 	 {202},
  publisher =    {PMLR},
}

@inproceedings{
fan2023,
title={Reinforcement Learning for Fine-tuning Text-to-Image Diffusion Models},
author={Ying Fan and Olivia Watkins and Yuqing Du and Hao Liu and Moonkyung Ryu and Craig Boutilier and Pieter Abbeel and Mohammad Ghavamzadeh and Kangwook Lee and Kimin Lee},
booktitle={Thirty-seventh Conference on Neural Information Processing Systems},
year={2023},
url={https://openreview.net/forum?id=8OTPepXzeh}
}

@inproceedings{
denker2025,
title={Iterative Importance Fine-tuning of Diffusion Models},
author={Alexander Denker and Shreyas Padhy and Francisco Vargas and Johannes Hertrich},
booktitle={Frontiers in Probabilistic Inference: Learning meets Sampling},
year={2025},
url={https://openreview.net/forum?id=HLaFozI6It}
}

@inproceedings{
yun2025,
title={Posterior Inference with Diffusion Models for High-dimensional Black-box Optimization},
author={Taeyoung Yun and Kiyoung Om and Jaewoo Lee and Sujin Yun and Jinkyoo Park},
booktitle={Forty-second International Conference on Machine Learning},
year={2025},
}

@inproceedings{Gruver2023,
 author = {Gruver, Nate and Stanton, Samuel and Frey, Nathan and Rudner, Tim G. J. and Hotzel, Isidro and Lafrance-Vanasse, Julien and Rajpal, Arvind and Cho, Kyunghyun and Wilson, Andrew G},
 booktitle = {Advances in Neural Information Processing Systems},
 doi = {10.52202/075280-0547},
 pages = {12489--12517},
 publisher = {Curran Associates, Inc.},
 title = {Protein Design with Guided Discrete Diffusion},
volume = {36},
 year = {2023}
}

@inproceedings{
yuan2025paretoflow,
title={ParetoFlow: Guided Flows in Multi-Objective Optimization},
author={Ye Yuan and Can Chen and Christopher Pal and Xue Liu},
booktitle={The Thirteenth International Conference on Learning Representations},
year={2025},
}

@article{li2024synthetic,
  title     = {Synthetic Lagrangian turbulence by generative diffusion models},
  author    = {Li, Tianyi and Biferale, Luca and Bonaccorso, Fabio and Scarpolini, Martino Andrea and Buzzicotti, Michele},
  journal   = {Nature Machine Intelligence},
  volume    = {6},
  number    = {4},
  pages     = {393--403},
  year      = {2024},
  publisher = {Nature Publishing Group},
}

@inproceedings{
chen2024ampdiffusion,
title={{AMP}-Diffusion: Integrating Latent Diffusion with Protein Language Models for Antimicrobial Peptide Generation},
author={Tianlai Chen and Pranay Vure and Rishab Pulugurta and Pranam Chatterjee},
booktitle={NeurIPS 2023 Generative AI and Biology (GenBio) Workshop},
year={2023},
}

@article{torres2025generative,
  title     = {Generative latent diffusion language modeling yields anti-infective synthetic peptides},
  author    = {Torres, Marcelo D. T. and Chen, Leo Tianlai and Wan, Fangping and Chatterjee, Pranam and de la Fuente-Nunez, Cesar},
  journal   = {Cell Biomaterials},
  volume    = {1},
  number    = {9},
  pages     = {100183},
  year      = {2025},
}

@article{lin2023esm2,
  title     = {Evolutionary-scale prediction of atomic-level protein structure with a language model},
  author    = {Lin, Zeming and Akin, Halil and Rao, Roshan and Hie, Brian and Zhu, Zhongkai and Lu, Wenting and Smetanin, Nikita and Verkuil, Robert and Kabeli, Ori and Shmueli, Yaniv and dos Santos Costa, Allan and Fazel-Zarandi, Maryam and Sercu, Tom and Candido, Salvatore and Rives, Alexander},
  journal   = {Science},
  volume    = {379},
  number    = {6637},
  pages     = {1123--1130},
  year      = {2023},
}

@article{eisenberg1984moment,
  title     = {Analysis of membrane and surface protein sequences with the hydrophobic moment plot},
  author    = {Eisenberg, David and Schwarz, Emil and Komaromy, Matthew and Wall, Roger},
  journal   = {Journal of Molecular Biology},
  volume    = {179},
  number    = {1},
  pages     = {125--142},
  year      = {1984},
}

@article{yeaman2003mechanisms,
  title     = {Mechanisms of antimicrobial peptide action and resistance},
  author    = {Yeaman, Michael R. and Yount, Nannette Y.},
  journal   = {Pharmacological Reviews},
  volume    = {55},
  number    = {1},
  pages     = {27--55},
  year      = {2003},
}

@article{amram2023calodiffusion,
  title     = {Denoising diffusion models with geometry adaptation for high fidelity calorimeter simulation},
  author    = {Amram, Oz and Pedro, Kevin},
  journal   = {Physical Review D},
  volume    = {108},
  number    = {7},
  pages     = {072014},
  year      = {2023},
  publisher = {American Physical Society},
}

@misc{calochallenge_dataset2,
  title     = {Fast Calorimeter Simulation Challenge 2022 --- Dataset 2},
  author    = {Giannelli, Michele Faucci and Kasieczka, Gregor and Krause, Claudius and Nachman, Benjamin and Salamani, Dalila and Shih, David and Zaborowska, Anna},
  year      = {2022},
  publisher = {Zenodo},
  url      = {https://calochallenge.github.io/homepage/},
}

@misc{calochallenge_dataset3,
  title     = {Fast Calorimeter Simulation Challenge 2022 --- Dataset 3},
  author    = {Giannelli, Michele Faucci and Kasieczka, Gregor and Krause, Claudius and Nachman, Benjamin and Salamani, Dalila and Shih, David and Zaborowska, Anna},
  year      = {2022},
  publisher = {Zenodo},
  url      = {https://calochallenge.github.io/homepage/},
}

@inproceedings{song2021ddim,
    title={Denoising Diffusion Implicit Models},
    author={Jiaming Song and Chenlin Meng and Stefano Ermon},
    booktitle={International Conference on Learning Representations},
    year={2021},
}

@article{lu2022dpmsolverpp,
  title   = {{DPM-Solver++}: Fast Solver for Guided Sampling of Diffusion Probabilistic Models},
  author  = {Lu, Cheng and Zhou, Yuhao and Bao, Fan and Chen, Jianfei and Li, Chongxuan and Zhu, Jun},
  journal = {Machine Intelligence Research},
  year    = {2025},
  issue={22},
  pages={730-751},
}

@inproceedings{ros2008simple,
  title={A simple modification in CMA-ES achieving linear time and space complexity},
  author={Ros, Raymond and Hansen, Nikolaus},
  booktitle={International conference on parallel problem solving from nature},
  pages={296--305},
  year={2008},
  organization={Springer}
}

@article{gomez2018automatic,
  title={Automatic chemical design using a data-driven continuous representation of molecules},
  author={G{\'o}mez-Bombarelli, Rafael and Wei, Jennifer N and Duvenaud, David and Hern{\'a}ndez-Lobato, Jos{\'e} Miguel and S{\'a}nchez-Lengeling, Benjam{\'\i}n and Sheberla, Dennis and Aguilera-Iparraguirre, Jorge and Hirzel, Timothy D and Adams, Ryan P and Aspuru-Guzik, Al{\'a}n},
  journal={ACS central science},
  volume={4},
  number={2},
  pages={268},
  year={2018}
}

@inproceedings{kusner2017grammar,
  title={Grammar variational autoencoder},
  author={Kusner, Matt J and Paige, Brooks and Hern{\'a}ndez-Lobato, Jos{\'e} Miguel},
  booktitle={International conference on machine learning},
  pages={1945--1954},
  year={2017},
  organization={PMLR}
}

@article{binois2022survey,
  title={A survey on high-dimensional Gaussian process modeling with application to Bayesian optimization},
  author={Binois, Mickael and Wycoff, Nathan},
  journal={ACM Transactions on Evolutionary Learning and Optimization},
  volume={2},
  number={2},
  pages={1--26},
  year={2022},
  publisher={ACM New York, NY}
}

@article{griffiths2020constrained,
  title={Constrained Bayesian optimization for automatic chemical design using variational autoencoders},
  author={Griffiths, Ryan-Rhys and Hern{\'a}ndez-Lobato, Jos{\'e} Miguel},
  journal={Chemical science},
  volume={11},
  number={2},
  pages={577--586},
  year={2020},
  publisher={The Royal Society of Chemistry}
}

@inproceedings{hansen1996adapting,
  title={Adapting arbitrary normal mutation distributions in evolution strategies: The covariance matrix adaptation},
  author={Hansen, Nikolaus and Ostermeier, Andreas},
  booktitle={Proceedings of IEEE international conference on evolutionary computation},
  pages={312--317},
  year={1996},
  organization={IEEE}
}

@article{willis2025sample,
  title   = {Sample-Efficient Optimisation over the Outputs of Generative Models},
  author  = {Willis, Samuel
             and Duckworth, Paul
             and Simons, Jack
             and Kalisz, Aleksandra
             and Sinkovics, Krisztina
             and Ghenassia, Noam
             and Surana, Shikha
             and Oldroyd, Henry T.
             and Stere, Alexandru I.
             and Margineantu, Dragos D.
             and Ek, Carl Henrik
             and Moss, Henry
             and Bodin, Erik},
  journal = {arXiv preprint arXiv:2509.23800},
  year    = {2025}
}

@inproceedings{bodin2025linear,
  title={Linear combinations of latents in generative models: subspaces and beyond},
  author={Bodin, Erik and Stere, Alexandru and Margineantu, Dragos and Ek, Carl and Moss, Henry},
  booktitle={International Conference on Learning Representations},
  volume={2025},
  pages={8014--8043},
  year={2025}
}

@article{balandat2020botorch,
  title={BoTorch: A framework for efficient Monte-Carlo Bayesian optimization},
  author={Balandat, Maximilian and Karrer, Brian and Jiang, Daniel and Daulton, Samuel and Letham, Ben and Wilson, Andrew G and Bakshy, Eytan},
  journal={Advances in neural information processing systems},
  volume={33},
  pages={21524--21538},
  year={2020}
}

@article{papenmeier2022increasing,
  title={Increasing the scope as you learn: Adaptive bayesian optimization in nested subspaces},
  author={Papenmeier, Leonard and Nardi, Luigi and Poloczek, Matthias},
  journal={Advances in Neural Information Processing Systems},
  volume={35},
  pages={11586--11601},
  year={2022}
}
\bibliographystyle{iclr2027_conference}

\clearpage
\appendix

\section{Affine working model and posterior construction}\label{app:affine-foundations}

\subsection{Spherical affine decomposition}\label{app:objective-projection}
Let \(\nu_D\) denote the uniform probability measure on
\(\mathbb S^{D-1}\) and 
let $\langle u,v\rangle_{L^2(\nu_D)}:=\int_{\Sph}u(\bm s)v(\bm s)\,d\nu_D(\bm s)$. Rotational symmetry of $\nu_D$ gives
\[
\int_{\Sph}\bm s\,d\nu_D(\bm s)=\bm0,
\qquad
\int_{\Sph}\bm s\bm s^\top\,d\nu_D(\bm s)=\frac{1}{D}I_D.
\]
Consequently, the $L^2(\nu_D)$ projection of $F$ onto the constant functions is
\[
\beta_0=\int_{\Sph}F(\bm s)\,d\nu_D(\bm s),
\]
and its projection onto the degree-one space $\{\bm a^\top\bm s:\bm a\in\R^D\}$ has coefficient
\[
\bm v_\star
=
D\int_{\Sph}F(\bm s)\bm s\,d\nu_D(\bm s).
\]
Define $\Delta(\bm s):=F(\bm s)-\beta_0-\bm v_\star^\top\bm s$. Then
\[
\int_{\Sph}\Delta(\bm s)\,d\nu_D(\bm s)=0,
\qquad
\int_{\Sph}\Delta(\bm s)\bm s\,d\nu_D(\bm s)=\bm0.
\]
Hence $\Delta$ is orthogonal in $L^2(\nu_D)$ to every constant function and every degree-one function $\bm a^\top\bm s$. Therefore
\[
F(\bm s)=\beta_0+\bm v_\star^\top\bm s+\Delta(\bm s)
\]
is precisely the orthogonal decomposition of $F$ into its degree-zero, degree-one, and degree-at-least-two spherical-harmonic components. The decomposition is unique in $L^2(\nu_D)$ and does not require the higher-order residual $\Delta$ to be small.

\subsection{Prior rotational symmetry}\label{app:prior-rotation}
For every orthogonal matrix $R\in O(D)$, the isotropic prior satisfies $R\bm v\overset{d}{=}\bm v$. The affine likelihood is equivariant under the simultaneous transformation $\bm s_i\mapsto R\bm s_i$ and $\bm v\mapsto R\bm v$. Thus, before observing data, the working model has no preferred coefficient orientation; conditioning can introduce directional preference only through the geometry of the evaluated design.

\subsection{Analytical treatment of the intercept}\label{app:intercept-marginalisation}
Let $\bm\theta=(\sigma_1^2,\sigma_\varepsilon^2)$. Integrating the Gaussian slope coefficient $\bm v$ gives
\[
\bm y_N\mid\beta,\bm\theta
\sim
\mathcal N\!\left(\beta\one_N,C_N(\bm\theta)\right),
\qquad
C_N(\bm\theta):=\sigma_1^2X_NX_N^\top+\sigma_\varepsilon^2I_N.
\]
This covariance enters the marginal likelihood for the variance parameters and intercept. For fixed $\bm\theta$, define $q_N(\bm\theta)=\one_N^\top C_N(\bm\theta)^{-1}\one_N$ and
\[
\widehat\beta(\bm\theta)
=
\frac{\one_N^\top C_N(\bm\theta)^{-1}\bm y_N}
{\one_N^\top C_N(\bm\theta)^{-1}\one_N}.
\]
Under the flat prior, $\beta\mid\bm\theta,\bm y_N\sim\mathcal N(\widehat\beta(\bm\theta),q_N(\bm\theta)^{-1})$. Integrating $\beta$ gives
\[
\begin{split}
p(\bm y_N\mid\bm\theta)
&\propto |C_N(\bm\theta)|^{-1/2}q_N(\bm\theta)^{-1/2}\\
&\quad\times\exp\!\left\{-\frac12
(\bm y_N-\widehat\beta\one_N)^\top
C_N(\bm\theta)^{-1}
(\bm y_N-\widehat\beta\one_N)\right\},
\end{split}
\]
where the dependence of $\widehat\beta$ on $\bm\theta$ is suppressed inside the exponent. Hence $p(\bm\theta\mid\bm y_N)\propto p(\bm y_N\mid\bm\theta)p(\sigma_1^2)p(\sigma_\varepsilon^2)$. The factor $q_N(\bm\theta)^{-1/2}$ distinguishes analytical marginalisation from profiling out the intercept.

\subsection{Hyperposterior discretisation}\label{app:atoms}

We place log-normal priors on the signal and discrepancy scales,
\[
\log\sigma_1
\sim
\mathcal N(\log\bar\sigma_1,h_1^2),
\qquad
\log\sigma_\varepsilon
\sim
\mathcal N(\log\bar\sigma_\varepsilon,h_\varepsilon^2).
\]
Here $\bar\sigma_1,\bar\sigma_\varepsilon$ are prior medians and
$h_1,h_\varepsilon$ are log-scale widths.

To keep the prior scale of the global affine variation comparable across
ambient dimensions, we parameterise the signal prior through
\(
\hat\sigma_1:=\sigma_1\sqrt D .
\)
Indeed, under the isotropic slope prior,
$\|\bm v\|$ is of order $\sigma_1\sqrt D$, while
$\bm v^\top\bm s$ has standard deviation $\sigma_1$ for any fixed
$\bm s\in\Sph$. We therefore specify a dimension-independent prior median
$\hat{\bar\sigma}_1$ and set
\(
\bar\sigma_1=\tfrac{\hat{\bar\sigma}_1}{\sqrt D}.
\)

We approximate the hyperposterior on a fixed
$n_g\times n_g$ grid in
$(\log\sigma_1,\log\sigma_\varepsilon)$.
Each axis spans $\eta$ prior standard deviations around its prior median;
the adopted implementation extends the upper end of the discrepancy axis
slightly further, as specified in Appendix~\ref{app:defaults}.
For each grid point, the marginal likelihood is evaluated using
Appendix~\ref{app:intercept-marginalisation}.

These evaluations can be shared efficiently across the grid.
Let
\(
X_NX_N^\top
=
U\operatorname{diag}(\ell_1,\ldots,\ell_N)U^\top,
\)
and define
$\tilde{\bm o}:=U^\top\one_N$ and
$\tilde{\bm y}:=U^\top\bm y_N$.
Then the quadratic forms
$\one_N^\top C_N^{-1}\one_N$,
$\one_N^\top C_N^{-1}\bm y_N$,
$\bm y_N^\top C_N^{-1}\bm y_N$,
and $\log|C_N|$
are each obtained from $\mathcal O(N)$ sums over the eigenvalues
$\ell_j$. For the fixed grid used here, these likelihood evaluations consist of
scalar eigenbasis calculations once the Gram spectrum is available.
In the streaming implementation the spectrum is maintained incrementally;
the resulting per-round refitting complexity is given in
Appendix~\ref{sec:complexity}.

Multiplying the grid likelihoods by the prior densities and normalising gives
a discrete approximation to $p(\bm\theta\mid\mathcal D_N)$.
For acquisition, we compress this distribution to $K$ equally weighted
hyperparameter atoms using deterministic stratified resampling.
If $W_j$ denotes the cumulative posterior mass in a fixed ordering of the
grid cells, atom $k$ is assigned to the first cell satisfying
\(
W_j\geq (k-\tfrac12)/K.
\)
Each atom $\bm\theta_k$ retains the corresponding conditional intercept
posterior
\[
\beta\mid\bm\theta_k,\mathcal D_N
\sim
\mathcal N\!\left(
\widehat\beta(\bm\theta_k),
q_N(\bm\theta_k)^{-1}
\right).
\]
The default prior and discretisation parameters are collected in
Appendix~\ref{app:defaults}. The same construction can be used with other
smooth prior densities on the log-scales.

\subsection{Recommendation readout}\label{app:readout}

The query selected for the next evaluation need not coincide with the current
estimate of the maximiser. We therefore distinguish the next query
$\bm s_{N+1}$ from a recommendation $\hat{\bm s}$ returned when a point
estimate of the optimum is required.

We form the recommendation from the hyperparameter-marginalised posterior
mean of the affine slope,
\[
\overline{\bm m}_N
=
\frac1K\sum_{k=1}^K
\bm m_N(\bm\theta_k),
\qquad
\hat{\bm s}
=
\frac{\overline{\bm m}_N}
{\|\overline{\bm m}_N\|},
\]
where $\bm m_N(\bm\theta_k)$ is the conditional posterior slope mean from
Appendix~\ref{app:data-informed-geometry}.
Thus $\hat{\bm s}$ maximises the marginal posterior mean of the affine
working surrogate over the sphere.

If $\overline{\bm m}_N=\bm0$, as can occur before the data identify any
slope direction, the posterior-mean recommendation is undefined and we
return the best observed query instead.

\section{Posterior geometry under finite data}
\label{app:posterior-geometry}

\subsection{Data-informed coefficient subspace}
\label{app:data-informed-geometry}

\begin{proof}[Proof of Proposition~\ref{prop:explored-span}]
Let
\[
M_N
:=
I_N-\frac{1}{N}\one_N\one_N^\top,
\qquad
\bm\mu_N
:=
\frac{1}{N}\sum_{i=1}^N\bm s_i .
\]
The rows of $M_NX_N$ are the centred evaluated directions
$(\bm s_i-\bm\mu_N)^\top$. Hence
\[
A_N
:=
X_N^\top M_NX_N
=
\sum_{i=1}^N
(\bm s_i-\bm\mu_N)(\bm s_i-\bm\mu_N)^\top,
\]
and
\[
\mathcal S_N
:=
\operatorname{range}(A_N)
=
\operatorname{span}
\{\bm s_i-\bm\mu_N:i=1,\ldots,N\}.
\]
Because the centred directions sum to zero,
\[
r_N:=\dim(\mathcal S_N)=\rank(A_N)\leq N-1.
\]

Fix
$\bm\theta=(\sigma_1^2,\sigma_\varepsilon^2)$.
Under the affine working model,
\[
\bm y_N\mid\beta,\bm v,\bm\theta
\sim
\mathcal N
\left(
\beta\one_N+X_N\bm v,\,
\sigma_\varepsilon^2I_N
\right).
\]
Integrating the flat intercept gives, up to a factor independent of
$\bm v$,
\[
p(\bm y_N\mid\bm v,\bm\theta)
\propto
\exp\!\left\{
-\frac{1}{2\sigma_\varepsilon^2}
(\bm y_N-X_N\bm v)^\top
M_N
(\bm y_N-X_N\bm v)
\right\}.
\]
Combining this likelihood with
$\bm v\mid\sigma_1^2\sim
\mathcal N(\bm0,\sigma_1^2I_D)$
therefore gives
\[
\bm v\mid\mathcal D_N,\bm\theta
\sim
\mathcal N
\bigl(
\bm m_N(\bm\theta),
\Sigma_N(\bm\theta)
\bigr),
\]
where
\[
\Sigma_N(\bm\theta)
=
\left(
\sigma_1^{-2}I_D
+
\sigma_\varepsilon^{-2}A_N
\right)^{-1},
\qquad
\bm m_N(\bm\theta)
=
\sigma_\varepsilon^{-2}
\Sigma_N(\bm\theta)
X_N^\top M_N\bm y_N .
\]

Since $A_N$ is symmetric positive semidefinite,
\[
\R^D
=
\mathcal S_N\oplus\mathcal S_N^\perp,
\qquad
\mathcal S_N^\perp=\ker(A_N).
\]
Moreover,
$X_N^\top M_N\bm y_N\in\mathcal S_N$, and
$\Sigma_N(\bm\theta)$ preserves both
$\mathcal S_N$ and $\mathcal S_N^\perp$.
It follows that
$\bm m_N(\bm\theta)\in\mathcal S_N$.

Let $P_N$ denote the orthogonal projector onto $\mathcal S_N$.
Since $A_N$ vanishes on $\mathcal S_N^\perp$, the posterior precision
there remains equal to the prior precision. Thus
\[
\Sigma_N(\bm\theta)
=
\sigma_1^2(I_D-P_N)
+
P_N\Sigma_N(\bm\theta)P_N .
\]
This proves the stated posterior decomposition.
In particular, for any $\bm s\in\Sph$,
\[
\begin{aligned}
\Var[
\bm v^\top\bm s
\mid
\mathcal D_N,\bm\theta
]
&=
\sigma_1^2
\|(I_D-P_N)\bm s\|^2 +
(P_N\bm s)^\top
\Sigma_N(\bm\theta)
(P_N\bm s).
\end{aligned}
\]
Hence every unit direction
$\bm s\in\mathcal S_N^\perp$
has conditional posterior mean zero and slope variance $\sigma_1^2$,
exactly as under the prior.
\end{proof}

\subsection{General residual-covariance extension}
\label{app:general-residual-covariance}

The rank confinement is not specific to independent, equal-variance
discrepancy. Suppose instead that, conditional on $\beta$ and $\bm v$,
\[
\bm y_N
\sim
\mathcal N
\left(
\beta\one_N+X_N\bm v,\,
\Omega_N
\right),
\qquad
\Omega_N\succ0,
\]
and write $W_N:=\Omega_N^{-1}$.
Integrating the flat intercept gives the residual precision
\[
\widetilde W_N
:=
W_N
-
W_N\one_N
(\one_N^\top W_N\one_N)^{-1}
\one_N^\top W_N,
\]
so that, up to a factor independent of $\bm v$,
\[
p(\bm y_N\mid\bm v,\Omega_N)
\propto
\exp\!\left\{
-\frac12
(\bm y_N-X_N\bm v)^\top
\widetilde W_N
(\bm y_N-X_N\bm v)
\right\}.
\]
The data-dependent contribution to the posterior precision of the slope is
therefore $ \mathcal I_N :=X_N^\top\widetilde W_NX_N$.
The matrix $\widetilde W_N$ is symmetric positive semidefinite and has rank
$N-1$. Indeed, if
\[
\bm u_N
:=
\frac{W_N^{1/2}\one_N}
{\sqrt{\one_N^\top W_N\one_N}},
\]
then $ \widetilde W_N = W_N^{1/2} (I_N-\bm u_N\bm u_N^\top) W_N^{1/2}$.
Consequently,
$\mathcal I_N$ is positive semidefinite with
$\rank(\mathcal I_N)\leq N-1$.
Define the corresponding data-informed coefficient subspace by
\[
\mathcal S_N^{(\Omega)}
:=
\operatorname{range}(\mathcal I_N).
\]
The posterior precision differs from the isotropic prior precision only on
$\mathcal S_N^{(\Omega)}$, and the posterior mean also lies in this
subspace. Thus, conditional on the covariance parameters, posterior updating
of $\bm v$ is confined to at most $N-1$ directions.

A more general Gaussian residual model may therefore change the metric and
the particular coefficient directions informed by the data, but it does not
change the finite-data rank bound.

\subsection{Post-data rotational symmetry}
\label{app:post-data-rotation}

\begin{corollary}[Residual rotational symmetry after observing data]
\label{cor:post-data-rotation}
Let $P_N$ be the orthogonal projector onto $\mathcal S_N$.
Conditional on
$\bm\theta=(\sigma_1^2,\sigma_\varepsilon^2)$,
let $R\in O(D)$ act as the identity on $\mathcal S_N$ and arbitrarily on
$\mathcal S_N^\perp$. Then
\[
R\bm v\mid\mathcal D_N,\bm\theta
\overset{d}{=}
\bm v\mid\mathcal D_N,\bm\theta.
\]
Consequently, any two subspaces of the same dimension contained in
$\mathcal S_N^\perp$ are posterior-equivalent under the affine working
model. The same invariance holds after marginalising the scalar variance
parameters.
\end{corollary}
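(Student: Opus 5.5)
We will prove the corollary directly from the Gaussian form of the conditional posterior established in the proof of Proposition~\ref{prop:explored-span}. Conditional on $\bm\theta$, that proof gives $\bm v\mid\mathcal D_N,\bm\theta\sim\mathcal N(\bm m_N(\bm\theta),\Sigma_N(\bm\theta))$ with $\bm m_N(\bm\theta)\in\mathcal S_N$ and $\Sigma_N(\bm\theta)=\sigma_1^2(I_D-P_N)+P_N\Sigma_N(\bm\theta)P_N$. Since $R$ is linear, $R\bm v$ is Gaussian with mean $R\bm m_N$ and covariance $R\Sigma_NR^\top$. It therefore suffices to show $R\bm m_N=\bm m_N$ and $R\Sigma_NR^\top=\Sigma_N$.

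The first step uses the block structure of $R$. Because $R$ is orthogonal and fixes $\mathcal S_N$ pointwise, it maps $\mathcal S_N^\perp$ into itself: for $\bm x\in\mathcal S_N^\perp$ and $\bm u\in\mathcal S_N$, we have $\langle R\bm x,\bm u\rangle=\langle R\bm x,R\bm u\rangle=\langle\bm x,\bm u\rangle=0$. Hence $RP_N=P_NR=P_N$, and also $R(I_D-P_N)R^\top=I_D-P_N$, since $R$ restricted to $\mathcal S_N^\perp$ is orthogonal there.

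The mean is then fixed because $\bm m_N\in\mathcal S_N$. For the covariance, the two terms are handled separately. The term $RP_N\Sigma_NP_NR^\top$ equals $P_N\Sigma_NP_N$ because $RP_N=P_N$ and $P_NR^\top=(RP_N)^\top=P_N$. The term $\sigma_1^2R(I_D-P_N)R^\top$ equals $\sigma_1^2(I_D-P_N)$ by the previous step. This gives equality in distribution, conditional on $\bm\theta$.

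For the claim about subspaces, we take two subspaces $\mathcal U_1,\mathcal U_2\subseteq\mathcal S_N^\perp$ of equal dimension. We choose an orthogonal map of $\mathcal S_N^\perp$ sending an orthonormal basis of $\mathcal U_1$ to one of $\mathcal U_2$, extending both bases within $\mathcal S_N^\perp$. We then extend this map by the identity on $\mathcal S_N$ to obtain an admissible $R$. The projections $\bm v\mapsto$ (coordinates of $\bm v$ in $\mathcal U_1$) and (coordinates in $\mathcal U_2$) then have the same posterior law, and likewise for any joint functional of $P_N\bm v$ and these coordinates. Here ``posterior-equivalent'' is interpreted as this equality in law.

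For the marginalised statement, $\mathcal S_N$ depends only on $X_N$ and not on $\bm\theta$. So a single $R$ works for all $\bm\theta$, and the mixture over $p(\bm\theta\mid\mathcal D_N)$ (or over the $K$ atoms) of identical conditional laws remains invariant. The only point needing care, and the main potential obstacle, is this $\bm\theta$-independence of $\mathcal S_N$ together with the invariance of $R\mathcal S_N^\perp=\mathcal S_N^\perp$. Both follow from the argument above, so no step should be hard.
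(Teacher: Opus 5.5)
Your proof is correct and follows the same route as the paper: you take the Gaussian posterior from Proposition~\ref{prop:explored-span}, show $R\bm m_N=\bm m_N$ and $R\Sigma_N R^\top=\Sigma_N$ using $RP_N=P_NR=P_N$, build an admissible $R$ that maps one subspace to the other, and pass to the mixture over $\bm\theta$. You also make explicit two steps the paper leaves implicit: that $R$ preserves $\mathcal S_N^\perp$, and that $\mathcal S_N$ does not depend on $\bm\theta$.
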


\begin{proof}[Proof of Corollary~\ref{cor:post-data-rotation}]
Appendix~\ref{app:data-informed-geometry} gives
\[
\bm v\mid\mathcal D_N,\bm\theta
\sim
\mathcal N
\bigl(
\bm m_N(\bm\theta),
\Sigma_N(\bm\theta)
\bigr),
\]
with
\[
\bm m_N(\bm\theta)\in\mathcal S_N,
\qquad
\Sigma_N(\bm\theta)
=
\sigma_1^2(I_D-P_N)
+
P_N\Sigma_N(\bm\theta)P_N .
\]
Because $R$ fixes $\mathcal S_N$ pointwise,
\[
R\bm m_N(\bm\theta)=\bm m_N(\bm\theta),
\qquad
RP_N=P_NR=P_N,
\]
and therefore
\[
R\Sigma_N(\bm\theta)R^\top
=
\Sigma_N(\bm\theta).
\]
The Gaussian conditional posterior is therefore invariant under $R$.

For any two $L$-dimensional subspaces
$E,E'\subseteq\mathcal S_N^\perp$,
there exists such an orthogonal transformation mapping $E$ to $E'$.
This proves their posterior equivalence. Marginalising over
$\bm\theta$ preserves the invariance because every conditional posterior in
the mixture has the same rotational symmetry.
\end{proof}

\subsection{Effective informed dimension and uncertainty contrast}
\label{app:effective-informed-dimension}

The rank $r_N$ records how many coefficient directions can be updated, but
not how strongly the data update them. Let
$a_{1,N},\ldots,a_{r_N,N}>0$
be the nonzero eigenvalues of $A_N$, with corresponding orthonormal
eigenvectors $\bm u_{j,N}\in\mathcal S_N$.
For fixed
$\bm\theta=(\sigma_1^2,\sigma_\varepsilon^2)$,
define the directional signal-to-discrepancy ratio
$\lambda:=\frac{\sigma_1^2}{\sigma_\varepsilon^2}$.
Along eigen-direction $j$,
\[
\Var[
\bm u_{j,N}^\top\bm v
\mid
\mathcal D_N,\bm\theta
]
=
\frac{\sigma_1^2}{1+\lambda a_{j,N}}.
\]
Relative to the prior variance $\sigma_1^2$, the corresponding variance
reduction is
\begin{equation}
c_{j,N}
:=
1-
\frac{
\Var[
\bm u_{j,N}^\top\bm v
\mid
\mathcal D_N,\bm\theta
]
}{
\sigma_1^2
}
=
\frac{\lambda a_{j,N}}
{1+\lambda a_{j,N}}.
\label{eq:cjn}
\end{equation}

A convenient summary of the total strength of the posterior update is
\[
d_{\mathrm{eff},N}(\lambda)
:=
\sum_{j=1}^{r_N}c_{j,N}
=
\tr\!\left[
\lambda A_N
(I_D+\lambda A_N)^{-1}
\right].
\]
For finite $\lambda>0$ and $r_N>0$,
\(
0
<
d_{\mathrm{eff},N}(\lambda)
<
r_N
\leq
N-1,
\)
while $d_{\mathrm{eff},N}(\lambda)=0$ when $r_N=0$.
Thus $d_{\mathrm{eff},N}$ distinguishes the algebraic rank of the
data-informed subspace from the strength of the updates within that
subspace.

The same quantities describe the uncertainty contrast relevant to off-span
exploration. Since
$\mathcal V_N^\perp\subseteq\mathcal S_N^\perp$, every unit direction in
$\mathcal V_N^\perp$ retains conditional slope variance $\sigma_1^2$.
By contrast, the posterior variance along $\bm u_{j,N}$ is reduced by the
relative amount $c_{j,N}$. For a fixed design spectrum, increasing
$\lambda$ therefore strengthens the variance contrast between data-informed
and off-span directions, whereas decreasing $\lambda$ weakens it.

This contrast affects the exploration behaviour of EI. A large fitted
$\lambda$ can make off-span queries attractive because their slope
uncertainty remains at the prior level, while a small fitted $\lambda$ leaves
substantial uncertainty in the data-informed directions and reduces the
relative advantage of moving off-span. Under the affine working model,
however, omitted higher-order structure can be absorbed into
$\sigma_\varepsilon^2$. The fitted ratio $\lambda$ therefore need not be a
reliable control signal for allocating query norm off-span. This motivates
the separate finite-horizon reference schedule used in
Section~\ref{sec:acquisition}.

The reductions $c_{j,N}$ also determine which posterior updates are retained
by the low-rank approximation; see
Appendix~\ref{app:low-rank-posterior-proof}.

\section{Sequential acquisition in data-sized coordinates}
\label{app:acquisition-machinery}

This appendix gives the computational details underlying the sequential
acquisition of Section~\ref{sec:acquisition}. We first express the predictive
distribution in the design-span basis, then construct candidate directions
for EI and score them at the prescribed off-span norm, and finally reconstruct
the selected query in the ambient latent space. The motivation and derivation
of the finite-horizon off-span schedule are given in
Appendix~\ref{app:voi-schedule}.

\subsection{Working-basis representation}
\label{app:working-basis}

This appendix constructs the design-span basis used in
Section~\ref{sec:datasized-bo} and derives the reduced predictive moments
used by EI.

\subsubsection{Working basis}

Let $ X_NX_N^\top = U\operatorname{diag}(\ell_1,\ldots,\ell_N)U^\top$,
with
$\ell_1,\ldots,\ell_{d_N}>0$ and
$\ell_{d_N+1}=\cdots=\ell_N=0$.
An orthonormal basis
$V\in\R^{D\times d_N}$ of
$\mathcal V_N=\operatorname{range}(X_N^\top)$ is
\[
V_{\cdot j}
=
\ell_j^{-1/2}X_N^\top U_{\cdot j},
\qquad
j=1,\ldots,d_N.
\]

Define
\[
\tilde{\bm o}:=U^\top\one_N,
\qquad
\tilde{\bm y}:=U^\top\bm y_N,
\]
and, for hyperparameter atom
$\bm\theta_k=(\sigma_{1,k}^2,\sigma_{\varepsilon,k}^2)$,
\[
\psi_{k,j}
:=
\sigma_{1,k}^2\ell_j+\sigma_{\varepsilon,k}^2,
\qquad
j=1,\ldots,N.
\]
The intercept quantities of
Appendix~\ref{app:intercept-marginalisation} are then
\[
q_N(\bm\theta_k)
=
\sum_{j=1}^N
\frac{\tilde o_j^2}{\psi_{k,j}},
\qquad
\widehat\beta(\bm\theta_k)
=
q_N(\bm\theta_k)^{-1}
\sum_{j=1}^N
\frac{\tilde o_j\tilde y_j}{\psi_{k,j}}.
\]

Let $\tilde{\bm m}_k := V^\top\bm m_N(\bm\theta_k)$,
whose components are
\[
\tilde m_{k,j}
=
\sigma_{1,k}^2\ell_j^{1/2}
\frac{
\tilde y_j-\widehat\beta(\bm\theta_k)\tilde o_j
}{
\psi_{k,j}
},
\qquad
j=1,\ldots,d_N.
\]
Define $\tilde R_k = \operatorname{diag}(\bm d_k) - \kappa_k\bm b_k\bm b_k^\top$,
where, for $j=1,\ldots,d_N$,
\[
d_{k,j}
=
\frac{\sigma_{1,k}^4\ell_j}{\psi_{k,j}},
\qquad
b_{k,j}
=
\frac{\ell_j^{1/2}\tilde o_j}{\psi_{k,j}},
\]
and
\[
\kappa_k
=
\frac{\sigma_{1,k}^4}{q_N(\bm\theta_k)},
\qquad
\tilde{\bm g}_k
=
\frac{\kappa_k}{\sigma_{1,k}^2}\bm b_k.
\]
The matrix $\tilde R_k$ is diagonal minus rank one, so linear systems
involving $\tilde R_k+\nu\mathbf I$ can be evaluated using
Sherman--Morrison rather than a separate dense factorisation for each
candidate.  

\subsubsection{Predictive mean and variance}

Recall the decomposition
$\bm s=V\bm w+t\bm e$ from \eqref{eq:decomposition}.
The posterior slope mean lies in $\mathcal V_N$, so the off-span component
does not contribute to the predictive mean:
\[
m_k(\bm s)
=
\widehat\beta(\bm\theta_k)
+\tilde{\bm m}_k^\top\bm w
=:\tilde m_k(\bm w).
\]

For the predictive variance, the posterior covariance blocks of
$(\beta,\bm v)$ are
\[
\operatorname{Var}(\beta\mid\mathcal D_N,\bm\theta_k)
=q_N(\bm\theta_k)^{-1},
\qquad
\operatorname{Cov}(\bm v,\beta\mid\mathcal D_N,\bm\theta_k)
=-V\tilde{\bm g}_k,
\]
and
\[
\operatorname{Var}(\bm v\mid\mathcal D_N,\bm\theta_k)
=
\sigma_{1,k}^2I_D
-
V\tilde R_kV^\top.
\]
Therefore,
\begin{align*}
s_k^2(\bm s)
&=
\operatorname{Var}(\beta+\bm v^\top\bm s
\mid\mathcal D_N,\bm\theta_k)
\\
&=
q_N(\bm\theta_k)^{-1}
+\sigma_{1,k}^2\|\bm s\|^2
-2\tilde{\bm g}_k^\top\bm w
-\bm w^\top\tilde R_k\bm w
\\
&=
q_N(\bm\theta_k)^{-1}
+\sigma_{1,k}^2
-2\tilde{\bm g}_k^\top\bm w
-\bm w^\top\tilde R_k\bm w =: \tilde s_k^2 (\bm w),
\end{align*}
where we used $V^\top\bm s=\bm w$ and
$\|\bm s\|^2=1$. Thus the conditional predictive distribution at any query on the sphere
can be seen as a function of the $d_N\leq N$ design-span coordinates $\bm w$ only.

\subsection{Candidate directions}
\label{app:pencil}

For a fixed hyperparameter atom and fixed norm $\|\bm w\|$, the first-order
condition for EI gives a one-parameter family of candidate directions.
At a candidate $\bm w$, define
\[
\zeta_k
:=
\frac{m_k(\bm w)-\tau_N}{s_k(\bm w)},
\qquad
\varrho_k
:=
\frac{\phi(\zeta_k)}
{s_k(\bm w)\Phi(\zeta_k)}.
\]
An interior stationary direction satisfies
\begin{equation}
\bm w
\ \propto\
(\tilde R_k+\nu\mathbf I)^{-1}
\left(
\tilde{\bm m}_k-\varrho_k\tilde{\bm g}_k
\right),
\qquad
\nu>-\lambda_{\min}(\tilde R_k).
\label{eq:pencil}
\end{equation}
Because $\varrho_k$ depends on the predictive mean and variance at the
resulting candidate, we resolve it by a short fixed-point iteration for each
value of $\nu$. We evaluate $\nu$ on a grid above the spectral endpoint and
also include the posterior-mean direction
$\tilde{\bm m}_k/\|\tilde{\bm m}_k\|$ whenever it is defined.

\Eqref{eq:pencil} characterises stationary directions for the
single-atom EI problem. A stationary point of the hyperparameter-marginalised
acquisition need not be stationary for any individual atom, because the atom
gradients can cancel. We therefore use the union of the per-atom families as
a candidate construction rather than claiming that it exhausts the stationary
points of marginal EI. Every candidate is subsequently rescored using the
full marginal acquisition \eqref{eq:marginal-ei}. Thus the approximation is
in candidate generation, not in acquisition evaluation.

\subsection{Acquisition scoring at the prescribed norm}
\label{app:norm-rays}

Under the finite-horizon schedule, define $\alpha_N:=\sqrt{1-t_N^2} $.
Each unit candidate direction $\bm u$ from
Appendix~\ref{app:pencil} is evaluated at $ \bm w=\alpha_N\bm u$.
All candidates are scored using the full hyperparameter-marginalised EI, and
the candidate with the largest value is selected. The prescribed schedule
therefore fixes only the norm split between the design span and its orthogonal
complement; it does not modify the predictive model or the acquisition value
assigned to any feasible candidate.

For the unconstrained-EI comparison, where the off-span norm is left to the
acquisition, each candidate direction is instead extended along the ray $ \bm w=\alpha\bm u$, for $ \alpha\in[0,1]$.
We evaluate a radial grid resolving both $\alpha$ and
$t=\sqrt{1-\alpha^2}$. Including a grid uniform in $t$ gives adequate
resolution when the acquisition-selected off-span norm becomes small, where
a grid uniform only in $\alpha$ is coarse.

\subsection{Query reconstruction}
\label{app:reconstruction}

Let $\bm w^\star$ denote the selected reduced coordinates. Under the
prescribed schedule, $ \|\bm w^\star\| = \sqrt{1-t_N^2}$.
If $t_N>0$, let $\bm e\in\mathcal V_N^\perp$ be the next unused
off-span direction from the orthogonal-frame ledger of
Appendix~\ref{app:frame-ledger}, and reconstruct
\begin{equation}
\bm s_{N+1}
=
V\bm w^\star+t_N\bm e.
\label{eq:reconstruct}
\end{equation}
The predictive distribution and acquisition value are invariant to the
orientation of $\bm e$, so this choice is required only to materialise the
selected query while preserving fresh directions across recompression.
If $t_N=0$, no off-span orientation is required. Hence the acquisition optimisation itself is
data-sized; the ambient dimension enters only in the final reconstruction.

\section{Finite-horizon reference schedule for off-span exploration}
\label{app:voi-schedule}

This appendix derives the prescribed off-span schedule of
\eqref{eq:voi-schedule-main}. The construction is a finite-horizon
value-of-information proxy rather than an exact Bayesian lookahead under the
fitted affine surrogate. A scalar reference model quantifies the information
acquired by an off-span component, and a root-mean-square approximation trades
this information against exploitation over the remaining evaluation budget.

\subsection{Scalar reference model}

Consider a unit-variance reference slope coefficient
$\xi\sim\mathcal N(0,1)$ associated with an off-span direction. A query with
off-span norm $t$ is represented by the scalar observation
\begin{equation}
r=t\xi+\epsilon,
\qquad
\epsilon\sim\mathcal N(0,\lambda_{\mathrm{ref}}^{-1}),
\label{eq:voi-reference-observation}
\end{equation}
where $\lambda_{\mathrm{ref}}>0$ is the reference directional
signal-to-noise ratio of a fully off-span query. The corresponding
signal-to-noise ratio is therefore $\lambda_{\mathrm{ref}}t^2$.

Gaussian conjugacy gives
\[
\Var(\xi\mid r)
=
\frac{1}{1+\lambda_{\mathrm{ref}}t^2}.
\]
Hence the fraction of reference prior variance resolved by the observation is
\begin{equation}
\delta_{\lambda_{\mathrm{ref}}}(t)
:=
1-\Var(\xi\mid r)
=
\frac{\lambda_{\mathrm{ref}}t^2}
{1+\lambda_{\mathrm{ref}}t^2}.
\label{eq:voi-resolved-increment}
\end{equation}
By the law of total variance, $\E_r\!\left[\E(\xi\mid r)^2\right] = \delta_{\lambda_{\mathrm{ref}}}(t)$,
so the same quantity is the expected squared posterior mean acquired along the reference direction.

Successive off-span queries are treated as probing orthogonal reference
coefficients. Let $\bm\mu_N^{\mathrm{ref}}$ collect their scalar posterior
means after $N$ such updates, and define $R_N := \E\bigl[\|\bm\mu_N^{\mathrm{ref}}\|^2\bigr]$, where
$R_0=0$.
Because the reference directions are orthogonal, their contributions add.
A query with off-span norm $t_N$ therefore gives
\begin{equation}
R_{N+1}
=
R_N+\delta_{\lambda_{\mathrm{ref}}}(t_N).
\label{eq:voi-R-update}
\end{equation}
Thus $R_N$ is an auxiliary state of the reference schedule: it is the
expected squared norm of the collection of resolved reference slope
components, not the squared norm of the fitted posterior mean used by EI.

\subsection{Finite-horizon objective}

Let
\(
h_N:=N_{\max}-N-1
\)
be the number of evaluations remaining after the query currently being
selected. We use $\sqrt{R_N}$ as a root-mean-square proxy for the currently
exploitable reference slope magnitude.

Allocating norm $t$ off-span leaves in-span norm $\sqrt{1-t^2}$. The
current-query exploitation term is therefore approximated by
\(
\sqrt{1-t^2}\sqrt{R_N}.
\)
After the off-span observation, the resolved reference mass becomes
$R_N+\delta_{\lambda_{\mathrm{ref}}}(t)$. Under an explore-then-commit
approximation, each of the $h_N$ remaining evaluations is valued at the
updated root-mean-square magnitude. This gives the finite-horizon proxy
\begin{equation}
J_N(t)
:=
\sqrt{1-t^2}\sqrt{R_N}
+
h_N
\sqrt{
R_N+\delta_{\lambda_{\mathrm{ref}}}(t)
},
\qquad
t\in[0,1].
\label{eq:voi-objective}
\end{equation}
Maximising this objective gives
\[
t_N
=
\argmax_{t\in[0,1]}J_N(t),
\]
together with the recursion
\eqref{eq:voi-R-update}, which is exactly the schedule stated in
\eqref{eq:voi-schedule-main}.

The approximation has three deliberate simplifications. First,
$\sqrt{R_N}$ is a root-mean-square proxy rather than the exact expected
exploitation value. Second, the calculation values the information acquired
by the current off-span component over the remaining horizon but does not
model further information acquisition by those later evaluations. Third,
the scalar reference observation ignores residual uncertainty in the in-span
contribution. These choices make the schedule depend only on
$(N_{\max},\lambda_{\mathrm{ref}})$ and not on fitted hyperparameters or
realised responses.

\subsection{Schedule properties}

The finite-horizon rule automatically anneals the prescribed off-span
allocation. As reference information accumulates and the remaining horizon
shrinks, the optimal off-span norm decreases until exploration terminates.

\begin{proposition}[Monotone decay and stopping]
\label{prop:voi-monotone}
Fix $\lambda_{\mathrm{ref}}>0$ and $N_{\max}\geq2$, and let
$(t_N,R_N)$ be generated by \eqref{eq:voi-schedule-main}, with
$h_N:=N_{\max}-N-1$. Then
\[
t_0=1,
\qquad
t_{N+1}\leq t_N.
\]
For every $N$ with $R_N>0$,
\begin{equation}
t_N>0
\quad\Longleftrightarrow\quad
R_N<h_N\lambda_{\mathrm{ref}}.
\label{eq:voi-stopping}
\end{equation}
Moreover, $R_N$ increases strictly while $t_N>0$ and remains constant
once $t_N=0$. Hence off-span exploration cannot restart, and in particular
$t_{N_{\max}-1}=0$.
\end{proposition}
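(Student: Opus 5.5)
The plan is to reparametrise by $u:=t^2\in[0,1]$ and show that, for $R_N>0$, the objective is strictly concave in $u$. Then the sign of its derivative at $u=0$ decides whether $t_N>0$, and a pointwise comparison of derivatives between consecutive rounds gives monotonicity. Write $\delta(u):=\lambda_{\mathrm{ref}}u/(1+\lambda_{\mathrm{ref}}u)$ and
\[
\tilde J_N(u):=\sqrt{R_N}\sqrt{1-u}+h_N\sqrt{R_N+\delta(u)} ,
\]
so that $\tilde J_N(t^2)=J_N(t)$ in \eqref{eq:voi-objective}.

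\emph{Concavity and the initial step.} The map $\delta$ is increasing and concave on $[0,1]$. Composing it with the concave increasing map $x\mapsto\sqrt{R_N+x}$ keeps it concave, and $\sqrt{1-u}$ is strictly concave. Hence $\tilde J_N$ is concave, and strictly concave whenever $R_N>0$, so the maximiser is unique.

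For $N=0$ we have $R_0=0$ and $h_0=N_{\max}-1\ge1$. Then $\tilde J_0(u)=h_0\sqrt{\delta(u)}$ is strictly increasing, so $t_0=1$ and $R_1=\delta(1)>0$. From then on $R_N>0$.

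\emph{Stopping criterion.} For $R_N>0$ the derivative is
\[
\tilde J_N'(u)=-\frac{\sqrt{R_N}}{2\sqrt{1-u}}+\frac{h_N\,\delta'(u)}{2\sqrt{R_N+\delta(u)}} ,
\]
with $\delta'(0)=\lambda_{\mathrm{ref}}$. Therefore $\tilde J_N'(0)=\bigl(h_N\lambda_{\mathrm{ref}}-R_N\bigr)/(2\sqrt{R_N})$. Since $\tilde J_N'(u)\to-\infty$ as $u\to1$, the maximiser lies in $[0,1)$.

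By strict concavity, the maximiser is interior, i.e.\ $t_N>0$, exactly when $\tilde J_N'(0)>0$. This is \eqref{eq:voi-stopping}. The case $h_N=0$ is covered by the same formula: there $\tilde J_N$ is decreasing, so $t_N=0$.

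\emph{Monotonicity.} This is the step needing the most care. I would show that the derivative function is pointwise non-increasing along the recursion, i.e.
\[
\tilde J_{N+1}'(u)\le \tilde J_N'(u)\quad\text{for all } u\in[0,1).
\]
Going from $N$ to $N+1$, $R$ weakly increases and $h$ drops by one. The first term of the derivative becomes more negative as $R$ grows. The second term is decreasing in $R$ and increasing in $h$. Hence the inequality holds.

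For strictly concave functions the maximiser is determined by where the derivative changes sign, or by the endpoint $0$ when $\tilde J'(0)\le0$. A pointwise smaller derivative therefore yields a weakly smaller maximiser, giving $t_{N+1}\le t_N$. For the step $N=0$ the bound $t_1\le1=t_0$ is trivial.

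\emph{Remaining claims.} If $t_N>0$ then $\delta(t_N^2)>0$, so $R$ strictly increases; if $t_N=0$ then $\delta(0)=0$ and $R_{N+1}=R_N$.

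Once $t_N=0$, monotonicity gives $t_{N'}=0$ for all $N'\ge N$, so exploration cannot restart. Equivalently, $R$ stays fixed while $h\lambda_{\mathrm{ref}}$ decreases, so the condition $R<h\lambda_{\mathrm{ref}}$ stays false.

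Finally, at $N=N_{\max}-1$ we have $h=0$ and $R>0$ (because $N_{\max}\ge2$ and $R_1>0$), so $t_{N_{\max}-1}=0$ by \eqref{eq:voi-stopping}.
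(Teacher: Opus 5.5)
Your proof is correct and follows essentially the same route as the paper's. Both reparametrise by $t^2$, use strict concavity for $R_N>0$, read the stopping rule off the sign of $\tilde J_N'(0)$, and obtain $t_{N+1}\le t_N$ from the pointwise comparison $\tilde J_{N+1}'\le\tilde J_N'$. Your separate handling of the $N=0$ step and your explicit check that the maximiser lies in $[0,1)$ make the write-up slightly more careful than the paper's, without changing the argument.
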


\begin{proof}
Set $x=t^2$ and write
\[
J_N(x)
=
\sqrt{R_N}\sqrt{1-x}
+
h_N
\sqrt{
R_N+
\frac{\lambda_{\mathrm{ref}}x}
     {1+\lambda_{\mathrm{ref}}x}
},
\qquad x\in[0,1].
\]
For $R_N>0$, $J_N$ is strictly concave in $x$ and therefore has a unique
maximiser. At $N=0$, $R_0=0$ and $h_0>0$, so $J_0$ is increasing and
$t_0=1$.

For $R_N>0$,
\[
J_N'(x)
=
-\frac{\sqrt{R_N}}{2\sqrt{1-x}}
+
\frac{
h_N\lambda_{\mathrm{ref}}
}{
2(1+\lambda_{\mathrm{ref}}x)^2
\sqrt{
R_N+\lambda_{\mathrm{ref}}x/(1+\lambda_{\mathrm{ref}}x)
}
}.
\]
For fixed $x$, this derivative decreases with $R_N$ and increases with
$h_N$. Since $ R_{N+1}=R_N+\delta_{\lambda_{\mathrm{ref}}}(t_N)\geq R_N$ and $h_{N+1}=h_N-1$,
we have $J_{N+1}'(x)\leq J_N'(x)$. The unique maximiser therefore cannot
move to the right, which gives $t_{N+1}\leq t_N$.

At $x=0$,
\[
J_N'(0)
=
\frac{h_N\lambda_{\mathrm{ref}}-R_N}{2\sqrt{R_N}},
\]
so strict concavity gives \eqref{eq:voi-stopping}. Finally,
$\delta_{\lambda_{\mathrm{ref}}}(t)>0$ exactly when $t>0$, so $R_N$
increases while exploration is active and is constant thereafter. Since
$h_N$ continues to decrease, once \eqref{eq:voi-stopping} fails it cannot
become active again. The final round has $h_{N_{\max}-1}=0$, and hence
$t_{N_{\max}-1}=0$.
\end{proof}

When \eqref{eq:voi-stopping} holds, the maximiser is interior and satisfies
\[
\frac{\sqrt{R_N}}
{\sqrt{1-t_N^2}}
=
\frac{
h_N\lambda_{\mathrm{ref}}
}{
(1+\lambda_{\mathrm{ref}}t_N^2)^2
\sqrt{
R_N+\delta_{\lambda_{\mathrm{ref}}}(t_N)
}
}.
\]
The left-hand side increases and the right-hand side decreases in $t_N^2$,
so the solution is unique and can be obtained by one-dimensional bisection.
Since the recursion depends only on $(N_{\max},\lambda_{\mathrm{ref}})$,
the complete schedule can be precomputed.

\begin{corollary}[Persistence of off-span exploration]
\label{cor:voi-persistence}
Let $N_\star$ be the first round with $t_{N_\star}=0$. Then
\[
N_\star
\geq
\left\lceil
\frac{1+\lambda_{\mathrm{ref}}}
     {2+\lambda_{\mathrm{ref}}}
(N_{\max}-1)
\right\rceil.
\]
\end{corollary}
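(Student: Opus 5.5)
The plan is to combine the stopping characterisation \eqref{eq:voi-stopping} of Proposition~\ref{prop:voi-monotone} with an upper bound on how fast the auxiliary state $R_N$ can grow. The first exploration-free round must have accumulated enough resolved variance to reach the threshold $h_N\lambda_{\mathrm{ref}}$. However, each round contributes at most a bounded amount, so this cannot happen too early.

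\emph{Step 1 (well-posedness).} By Proposition~\ref{prop:voi-monotone}, $t_{N_{\max}-1}=0$, so $N_\star$ exists and $N_\star\le N_{\max}-1$. Since $t_0=1$, we also have $N_\star\ge1$. Consequently $R_{N_\star}\ge R_1=\delta_{\lambda_{\mathrm{ref}}}(1)>0$, because $R_N$ is nondecreasing. This places us in the regime $R_N>0$ where \eqref{eq:voi-stopping} applies.

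\emph{Step 2 (lower bound on $R_{N_\star}$).} Applying \eqref{eq:voi-stopping} at $N=N_\star$, the condition $t_{N_\star}=0$ gives
\[
R_{N_\star}\ \ge\ h_{N_\star}\lambda_{\mathrm{ref}}=(N_{\max}-N_\star-1)\lambda_{\mathrm{ref}}.
\]

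\emph{Step 3 (upper bound on $R_{N_\star}$).} The increment $\delta_{\lambda_{\mathrm{ref}}}(t)=\lambda_{\mathrm{ref}}t^2/(1+\lambda_{\mathrm{ref}}t^2)$ is increasing in $t\in[0,1]$. It is therefore at most $\delta_{\lambda_{\mathrm{ref}}}(1)=\lambda_{\mathrm{ref}}/(1+\lambda_{\mathrm{ref}})$. Unrolling the recursion $R_{N+1}=R_N+\delta_{\lambda_{\mathrm{ref}}}(t_N)$ from $R_0=0$ gives
\[
R_{N_\star}\le N_\star\,\frac{\lambda_{\mathrm{ref}}}{1+\lambda_{\mathrm{ref}}}.
\]

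\emph{Step 4 (combine).} Chaining Steps 2 and 3 and dividing by $\lambda_{\mathrm{ref}}>0$ gives
\[
\frac{N_\star}{1+\lambda_{\mathrm{ref}}}\ \ge\ N_{\max}-1-N_\star.
\]
Rearranging, this is $N_\star\,(2+\lambda_{\mathrm{ref}})/(1+\lambda_{\mathrm{ref}})\ge N_{\max}-1$, that is,
\[
N_\star\ \ge\ \frac{1+\lambda_{\mathrm{ref}}}{2+\lambda_{\mathrm{ref}}}\,(N_{\max}-1).
\]
Since $N_\star$ is an integer, the ceiling follows.

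The only delicate point is Step 1: confirming that \eqref{eq:voi-stopping} may be invoked at $N_\star$. This needs $R_{N_\star}>0$, which is exactly why we first show $N_\star\ne0$ using $t_0=1$. The remaining steps are direct inequalities.
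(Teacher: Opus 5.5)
Your proof is correct and follows the paper's argument: unroll the recursion to get $R_{N_\star}\le N_\star\,\delta_{\lambda_{\mathrm{ref}}}(1)$, combine this with the stopping requirement $R_{N_\star}\ge h_{N_\star}\lambda_{\mathrm{ref}}$ from Proposition~\ref{prop:voi-monotone}, and rearrange. Your Step~1, which checks that $N_\star\ge 1$ so that $R_{N_\star}>0$ and the stopping characterisation applies, fills in a detail the paper leaves implicit without changing the route.
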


\begin{proof}
Each query resolves at most
$\delta_{\lambda_{\mathrm{ref}}}(1)
=\lambda_{\mathrm{ref}}/(1+\lambda_{\mathrm{ref}})$, so
\[
R_N
\leq
N\frac{\lambda_{\mathrm{ref}}}
        {1+\lambda_{\mathrm{ref}}}.
\]
Combining this with the stopping requirement
$R_N\geq h_N\lambda_{\mathrm{ref}}$ gives the result.
\end{proof}

Thus larger $\lambda_{\mathrm{ref}}$ places a later lower bound on when
off-span exploration can terminate. In the high-reference-SNR regime this
permits very small, but necessarily persistent, off-span allocations.

\subsection{Interpretation of the reference resolution parameter}

The parameter $\lambda_{\mathrm{ref}}$ controls the assumed resolution
attainable from a fresh off-span probe and is deliberately distinct from the
fitted signal--discrepancy ratio.
For a query with off-span norm $t$,
\[
\delta_{\lambda_{\mathrm{ref}}}(t)
=
\frac{\lambda_{\mathrm{ref}}t^2}
     {1+\lambda_{\mathrm{ref}}t^2}
\]
is the reference fraction of a unit-variance directional coefficient
resolved by that query. 
Thus larger $\lambda_{\mathrm{ref}}$ represents greater information gained
from a given off-span allocation, rather than directly prescribing a larger
or smaller value of $t_N$.

This interpretation also clarifies the effect of
$\lambda_{\mathrm{ref}}$ over the finite horizon.
Corollary~\ref{cor:voi-persistence} shows that increasing
$\lambda_{\mathrm{ref}}$ moves the earliest possible termination of
off-span exploration towards the end of the budget. Hence a large reference
SNR corresponds to potentially small but persistent fresh-direction probes,
whereas a smaller value permits off-span exploration to terminate earlier.

We keep $\lambda_{\mathrm{ref}}$ external to the fitted affine
hyperparameters because $\sigma_\varepsilon^2$ can also absorb discrepancy
from higher-order objective structure omitted by the working model. The
fitted signal--discrepancy ratio may therefore reflect model
misspecification as well as directional uncertainty and need not provide a
reliable control signal for the off-span allocation.

For additional intuition, consider an interior solution in the regime
\[
\lambda_{\mathrm{ref}}t_N^2\gg1,
\qquad
t_N^2\ll1,
\qquad
R_N\gg\delta_{\lambda_{\mathrm{ref}}}(t_N).
\]
Under these approximations, the first-order condition reduces to
\[
t_N
\approx
\left(
\frac{h_N}
     {\lambda_{\mathrm{ref}}R_N}
\right)^{1/4}.
\]
Thus the prescribed off-span norm decreases as resolved reference
information accumulates and increases with the remaining horizon. For fixed
$(R_N,h_N)$, the approximation also shows how a large reference SNR can
support a small off-span norm while retaining active exploration. This
asymptotic relation is used only for interpretation; the schedule itself is
computed from the original one-dimensional objective.

The rule is horizon-dependent rather than anytime: changing $N_{\max}$
changes the recursion and therefore requires the schedule to be recomputed.
It should be viewed as a finite-horizon value-of-information proxy, not as an
exact Bayes-optimal policy under the potentially misspecified affine working
model.

\subsection{Local resolvability and reference resolution regimes}
\label{sec:local-resolvability}

The finite-horizon schedule is parameterised by the reference resolution
parameter $\lambda_{\mathrm{ref}}$, with characteristic reference scale
$\chi_{\mathrm{ref}}:=\lambda_{\mathrm{ref}}^{-1/2}$. Thus larger
$\lambda_{\mathrm{ref}}$ corresponds to finer reference resolution, or
equivalently to a smaller characteristic reference scale, without
implying a smaller realised off-span norm $t_N$. It is a scheduler control
parameter rather than an estimate of the physical signal-to-noise ratio of
the objective.
The relevant question is whether a small perturbation in a fresh direction is
likely to reveal useful information. Three qualitatively different behaviours
can occur. For a smooth objective, nearby points have similar values and
smaller perturbations produce correspondingly smaller changes in the
objective. With stochastic observations, this relationship eventually breaks
down because the observed variation stops shrinking with the perturbation
size. For objectives with discrete decoding or local plateaux, a sufficiently
small perturbation may instead produce exactly no change in the objective.

\subsubsection{Characterising local resolvability}

We use two complementary measures to distinguish these behaviours. The
statistic $Q$ measures how objective changes vary as the perturbation size is
reduced, while $Z$ measures how often a perturbation leaves the objective
exactly unchanged. Together they indicate whether directional information
remains resolved at fine perturbation scales or is only revealed by broader
perturbations.

We define these quantities using paired local perturbations on the sphere.
Let $\bm s_a\in\mathbb S^{D-1}$ be an anchor and let
$\bm e_{aj}\perp\bm s_a$ be a unit direction. For perturbation scale
$\chi\in(0,1)$, define
\[
    \bm s_{aj}^{\pm}(\chi)
    =
    \sqrt{1-\chi^2}\,\bm s_a
    \pm
    \chi\,\bm e_{aj},
    \qquad
    d_{aj}(\chi)
    =
    \frac{
        F(\bm s_{aj}^{+}(\chi))
        -
        F(\bm s_{aj}^{-}(\chi))
    }{2}.
\]
The two perturbed points remain on $\mathbb S^{D-1}$ and lie at equal
angular distance $\arcsin(\chi)$ from the anchor. Aggregating their
directional contrasts over $A$ anchors and $J$ directions gives the
root-mean-square local response
\[
    P(\chi)
    =
    \left[
        \frac{1}{AJ}
        \sum_{a=1}^{A}\sum_{j=1}^{J}
        d_{aj}(\chi)^2
    \right]^{1/2}.
\]

The statistic
\[
    Q
    =
    \frac{P(\chi_{\mathrm{s}})/\chi_{\mathrm{s}}}
         {P(\chi_{\ell})/\chi_{\ell}}
\]
compares the objective change per unit perturbation at a fine and a broader
scale. For a smooth deterministic objective, reducing the perturbation size
also reduces the objective change proportionally, giving $Q\approx1$.
Stochastic variation does not shrink with the perturbation size in the same
way, so it contributes relatively more at $\chi_{\mathrm{s}}$ and tends to
increase
$Q$ above one.
We therefore use $Q$ as a measure of fine-scale resolvability rather than
as a direct estimate of noise magnitude.

A different failure of local resolvability occurs when the objective is
locally constant. This can arise, for example, when a continuous latent
perturbation does not change the generated output, or when several generated outputs receive exactly the same objective value. In such cases, moving a
small distance in a fresh direction can produce no observable change in the
objective, so that direction is unresolved at that perturbation scale.
We measure this behaviour by the exact-zero fraction
\[
    Z(\chi)
    =
    \frac{1}{AJ}
    \sum_{a=1}^{A}\sum_{j=1}^{J}
    \mathbb I\!\left\{d_{aj}(\chi)=0\right\}.
\]
Thus $Z(\chi)$ is the fraction of probed directions for which the paired
perturbation produces exactly no change in the objective. 

The role of $\lambda_{\mathrm{ref}}$ is best understood through the cost of
probing a fresh direction. Allocating norm off-span reduces the norm available
to exploit structure already learned within the design span. If a small
off-span allocation already reveals how the objective changes in a fresh
direction, the optimiser can acquire this information while retaining most of
the query norm for exploitation. A larger $\lambda_{\mathrm{ref}}$
corresponds to assuming that such small off-span allocations remain
informative. This arises when the objective is smooth and locally
resolved, since small perturbations continue to produce systematic changes in
the objective.
By contrast, under stochastic observations a small directional change can be
masked by observation variability, while for locally constant or quantised
objectives a small perturbation can produce no change at all. In these cases,
fine probes provide less reliable information about a fresh direction, so a
large $\lambda_{\mathrm{ref}}$ would assume finer directional
resolution than the objective supports. A smaller $\lambda_{\mathrm{ref}}$, corresponding to a coarser
reference scale, reflects the need for broader perturbations before
directional changes are reliably resolved.
The measures $Q$ and $Z$ provide a diagnostic for assigning an objective to
one of two fixed reference-resolution regimes, rather than for estimating
$\lambda_{\mathrm{ref}}$ continuously. 
When $Q\approx1$ and $Z(\chi_{\mathrm{s}})=0$, the
objective remains smooth and resolved at fine perturbation scales, supporting
the fine-resolution regime. When
$Q$ is elevated, fine perturbations are increasingly obscured by stochastic
variation; when $Z(\chi_{\mathrm{s}})>0$, they can be completely unresolved
because of local plateaux or quantisation. In either case, the coarse-resolution regime is appropriate.

When the local-resolvability regime is not known from the objective
characteristics, these statistics provide an empirical diagnostic using $A=5$ independently drawn anchors
and $J=2$ fresh directions per anchor, with
\(
\chi_{\mathrm{s}}=0.00625,
\)
and 
\(
\chi_{\ell}=0.05.
\)
Evaluating paired perturbations at both scales requires $4AJ=40$ objective
evaluations. In the experiments reported in this paper, the regime is assigned
from known objective characteristics; the $Q$--$Z$ analysis below is used to
validate these assignments rather than as an additional calibration step in
each optimisation run.
We validated this diagnostic on the objectives of
Section~\ref{sec:benchmarks} using $2{,}000$ random $A=5$, $J=2$
sub-designs per objective. Resolved objectives gave median $Q\approx1$, with
$Z(\chi_{\mathrm{s}})=0$ on every draw. Noise-dominated objectives gave
median $Q\approx8$, as expected when the local response becomes approximately
independent of perturbation scale, for which
$Q\approx\chi_{\ell}/\chi_{\mathrm{s}}=8$. Locally unresolved objectives
were instead identified by $Z(\chi_{\mathrm{s}})>0$. Across all $2{,}000$
random sub-designs per objective, the diagnostic outcomes for objectives
assigned to the fine- and coarse-resolution regimes did not overlap.

\subsubsection{Reference-resolution regimes}

\paragraph{Fine-resolution regime.}
The precise value in the fine-resolution regime becomes progressively less important
once $\lambda_{\mathrm{ref}}$ is large. For an active interior allocation
$t_N\in(0,1)$, differentiating the schedule objective in
\eqref{eq:voi-schedule-main} with respect to $t$ gives the stationarity
condition
\[
\frac{\sqrt{R_N}}{\sqrt{1-t_N^2}}
=
\frac{
h_N\lambda_{\mathrm{ref}}
}{
(1+\lambda_{\mathrm{ref}}t_N^2)^2
\sqrt{R_N+\delta_{\lambda_{\mathrm{ref}}}(t_N)}
}.
\]
For fixed $R_N>0$ and $h_N>0$, this implies
\begin{equation}
t_N\to0,
\qquad
\lambda_{\mathrm{ref}}t_N^2\to\infty.
\label{eq:high-lambda-limits}
\end{equation}
Indeed, if $t_N$ remained bounded away from zero, the right-hand side would
vanish while the left-hand side remains positive. Conversely, if
$\lambda_{\mathrm{ref}}t_N^2$ remained bounded, then $t_N\to0$ and the
right-hand side would diverge while the left-hand side tends to
$\sqrt{R_N}$. This establishes \eqref{eq:high-lambda-limits}.
Using \eqref{eq:high-lambda-limits} implies
$\delta_{\lambda_{\mathrm{ref}}}(t_N)\to1$ and
$\sqrt{1-t_N^2}\to1$.  Substituting these limits into the stationarity
condition yields
\[
t_N^4
\sim
\frac{h_N}
{\lambda_{\mathrm{ref}}\sqrt{R_N(R_N+1)}}.
\]
Thus
\(
t_N=O(\lambda_{\mathrm{ref}}^{-1/4}).
\)
Moreover,
\[
1-\delta_{\lambda_{\mathrm{ref}}}(t_N)
=
\frac{1}{1+\lambda_{\mathrm{ref}}t_N^2}
\sim
\frac{[R_N(R_N+1)]^{1/4}}
{\sqrt{h_N\lambda_{\mathrm{ref}}}}
=
O(\lambda_{\mathrm{ref}}^{-1/2}).
\]
Thus the reference-information deficit of an active probe vanishes as
$\lambda_{\mathrm{ref}}^{-1/2}$, while its off-span norm varies only with
the fourth root of $\lambda_{\mathrm{ref}}$. The high-resolution schedule
therefore exhibits diminishing sensitivity to further increases in an
already large $\lambda_{\mathrm{ref}}$. For a smooth, well-resolved
objective, we consequently use $\lambda_{\mathrm{ref}}=10^5$ as a fixed
representative of this regime rather than as a tuned optimum.

\paragraph{Coarse-resolution regime.}

For the coarse regime, we likewise avoid tuning
$\lambda_{\mathrm{ref}}$ to an objective-specific optimum. Fine
perturbations may be poorly resolved, but a sufficiently broad probe should
still be capable of resolving a fresh direction. Since
$\delta_{\lambda_{\mathrm{ref}}}(t)$ increases with $t$ over the feasible
range $t\in[0,1]$, the quantity
$\delta_{\lambda_{\mathrm{ref}}}(1)$ is the largest reference-information
fraction attainable by any off-span probe. This is also the information
credited to the initial allocation, for which $t_0=1$.
We therefore require this maximally off-span probe to resolve at least $90\%$
of the reference directional variance,
\[
\delta_{\lambda_{\mathrm{ref}}}(1)
=
\frac{\lambda_{\mathrm{ref}}}{1+\lambda_{\mathrm{ref}}}
\geq0.9.
\]
This gives $\lambda_{\mathrm{ref}}\geq9$, and we use the convenient decade
value $\lambda_{\mathrm{ref}}=10$. The criterion ensures that, although
smaller probes receive less reference-information credit in the coarse-resolution
regime, a sufficiently broad probe can still substantially resolve a fresh
direction.
The value $\lambda_{\mathrm{ref}}=10$ is not intended as a finely calibrated optimum. Indeed,
\[
\frac{d\log \delta_{\lambda_{\mathrm{ref}}}(1)}
     {d\log \lambda_{\mathrm{ref}}}
=
\frac{1}{1+\lambda_{\mathrm{ref}}},
\]
so moderate 
changes in $\lambda_{\mathrm{ref}}$ have only a weak effect on the
property used to define the coarse regime. We consequently treat
$\lambda_{\mathrm{ref}}=10$ as a fixed representative of that regime rather
than as a tuned objective-specific value.

\section{Bounded-cost model refitting}
\label{app:bounded-refitting}

The data-informed posterior rank can grow with the evaluation horizon even
though it is independent of the ambient latent dimension. We first quantify
the effect of discarding weak data-induced posterior updates and derive
finite-sample stability bounds for EI. We then use the same spectral
ordering to motivate a bounded-state observation representation for online
refitting.

\subsection{Ideal spectral truncation}
\label{app:low-rank-posterior-proof}

Recall from Appendix~\ref{app:effective-informed-dimension} the nonzero
eigenpairs
$(a_{j,N},\bm u_{j,N})$, $j=1,\ldots,r_N$, of $A_N$, ordered so that
\(
a_{1,N}\geq\cdots\geq a_{r_N,N}>0,
\)
and the corresponding relative posterior variance reductions $c_{j,N}$
defined in \eqref{eq:cjn}. Conditional on
$\bm\theta=(\sigma_1^2,\sigma_\varepsilon^2)$, the exact posterior
covariance is
\[
\Sigma_N
=
\sigma_1^2
\left(
I_D
-
\sum_{j=1}^{r_N}
c_{j,N}\bm u_{j,N}\bm u_{j,N}^\top
\right).
\]

For $0\leq r\leq r_N$, retain the $r$ strongest updates and define
\[
\bm m_N^{(r)}
:=
\sum_{j=1}^r
\bm u_{j,N}\bm u_{j,N}^\top\bm m_N,
\qquad
\Sigma_N^{(r)}
:=
\sigma_1^2
\left(
I_D
-
\sum_{j=1}^r
c_{j,N}\bm u_{j,N}\bm u_{j,N}^\top
\right).
\]
Thus the retained directions preserve their posterior updates, while the
discarded directions revert to their prior mean and variance. Define
\[
T_{N,r}
:=
\sum_{j=r+1}^{r_N}c_{j,N},
\qquad
c_{r_N+1,N}:=0.
\]

\begin{proposition}[Posterior error under spectral truncation]
\label{prop:low-rank-posterior}
For every $0\leq r\leq r_N$,
\[
\Sigma_N^{(r)}-\Sigma_N
=
\sigma_1^2
\sum_{j=r+1}^{r_N}
c_{j,N}\bm u_{j,N}\bm u_{j,N}^\top
\succeq0.
\]
Consequently,
\[
\left\|
\Sigma_N^{(r)}-\Sigma_N
\right\|_{\mathrm{op}}
=
\sigma_1^2c_{r+1,N},
\qquad
\tr\!\left(
\Sigma_N^{(r)}-\Sigma_N
\right)
=
\sigma_1^2T_{N,r}.
\]
Under the affine working model at fixed design and hyperparameters,
\[
\E\!\left[
\|\bm m_N-\bm m_N^{(r)}\|^2
\right]
=
\sigma_1^2T_{N,r}.
\]
\end{proposition}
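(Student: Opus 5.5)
The plan is to work entirely in the eigenbasis $\{\bm u_{j,N}\}$ of $A_N$ and to use the representation of $\Sigma_N$ from Appendix~\ref{app:data-informed-geometry}. For the covariance statements, subtract the two displayed expressions directly:
\[
\Sigma_N^{(r)}-\Sigma_N=\sigma_1^2\sum_{j=r+1}^{r_N}c_{j,N}\bm u_{j,N}\bm u_{j,N}^\top .
\]
Two facts then finish this part.
\begin{itemize}
\item Since $c_{j,N}=\lambda a_{j,N}/(1+\lambda a_{j,N})\in(0,1)$, the difference is positive semidefinite.
\item Because the $\bm u_{j,N}$ are orthonormal, this is already a spectral decomposition. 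Its largest eigenvalue is $\sigma_1^2\max_{j>r}c_{j,N}$, and its trace is $\sigma_1^2T_{N,r}$. The map $a\mapsto\lambda a/(1+\lambda a)$ is increasing and the $a_{j,N}$ are ordered decreasingly, so the maximum is $c_{r+1,N}$. The convention $c_{r_N+1,N}=0$ covers $r=r_N$.
\end{itemize}

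For the mean statement, first note that $\bm m_N\in\mathcal S_N=\operatorname{span}\{\bm u_{j,N}\}$ by Proposition~\ref{prop:explored-span}. Hence
\[
\bm m_N-\bm m_N^{(r)}=\sum_{j>r}\bm u_{j,N}\bm u_{j,N}^\top\bm m_N ,
\]
and by orthonormality
\[
\|\bm m_N-\bm m_N^{(r)}\|^2=\sum_{j>r}(\bm u_{j,N}^\top\bm m_N)^2 .
\]
It therefore suffices to show $\E[(\bm u_{j,N}^\top\bm m_N)^2]=\sigma_1^2c_{j,N}$ for each $j$. I would get this from the law of total variance applied to the scalar $\xi_j:=\bm u_{j,N}^\top\bm v$. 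Its prior mean is $0$ and its prior variance is $\sigma_1^2$. Its posterior mean is $\bm u_{j,N}^\top\bm m_N$, and its posterior variance is $\sigma_1^2/(1+\lambda a_{j,N})=\sigma_1^2(1-c_{j,N})$, which does not depend on the data. Hence
\[
\E[(\bm u_{j,N}^\top\bm m_N)^2]=\sigma_1^2-\sigma_1^2(1-c_{j,N})=\sigma_1^2c_{j,N}.
\]
Summing over $j>r$ gives $\sigma_1^2T_{N,r}$. An alternative is a direct computation from $\bm m_N=\sigma_\varepsilon^{-2}\Sigma_NX_N^\top M_N\bm y_N$ with the prior-predictive covariance of $M_N\bm y_N$; I would use it only as a check.

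The main obstacle is making "the expectation under the affine working model" well defined when the intercept prior is flat and hence improper. I would handle this by replacing $\bm y_N$ with the centred data $\bm z:=M_N\bm y_N$. This vector does not involve $\beta$, since $M_N\one_N=\bm 0$. Under $\bm v\sim\mathcal N(\bm0,\sigma_1^2I_D)$ it has the proper joint Gaussian law
\[
\bm z\mid\bm v\sim\mathcal N\bigl(M_NX_N\bm v,\ \sigma_\varepsilon^2M_N\bigr).
\]
Two checks are then needed, both using $M_N^2=M_N$ and $M_N^\dagger=M_N$.
\begin{itemize}
\item The posterior of $\bm v$ given $\bm z$ coincides with the intercept-marginalised posterior of Appendix~\ref{app:data-informed-geometry}. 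The likelihood $\exp\{-\tfrac{1}{2\sigma_\varepsilon^2}(\bm y_N-X_N\bm v)^\top M_N(\bm y_N-X_N\bm v)\}$ depends on $\bm y_N$ only through $M_N\bm y_N$, and it is the density of $\bm z$ on $\operatorname{range}(M_N)$.
\item $\bm m_N$ is a function of $\bm z$ alone, because $X_N^\top M_N\bm y_N=X_N^\top M_N\bm z$.
\end{itemize}
With these, the total-variance identity above is a statement about a proper jointly Gaussian pair $(\bm v,\bm z)$, and the expectation is taken over $\bm z$ at fixed design and hyperparameters.
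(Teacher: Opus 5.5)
Your proof is correct. The covariance part matches the paper's argument, but for the mean identity you take a different route.

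The paper computes $\Cov(\bm m_N)$ directly. It writes $M_N\bm y_N=M_NX_N\bm v+M_N\bm\varepsilon_N$, pushes this through the linear map $\bm m_N=\sigma_\varepsilon^{-2}\Sigma_NX_N^\top M_N\bm y_N$, and diagonalises. This amounts to verifying $\sigma_\varepsilon^{-4}\Sigma_N(\sigma_1^2A_N^2+\sigma_\varepsilon^2A_N)\Sigma_N=\sigma_1^2\sum_j c_{j,N}\bm u_{j,N}\bm u_{j,N}^\top$, which is exactly the computation you relegate to a check.

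You instead use the law of total variance on each $\bm u_{j,N}^\top\bm v$. In matrix form this gives $\Cov(\bm m_N)=\sigma_1^2I_D-\Sigma_N$ with no algebra. It also makes visible why the mean error equals the covariance error $\tr(\Sigma_N^{(r)}-\Sigma_N)$, linking the two halves of the proposition. It is the same device the paper uses for its scalar reference model in Appendix~\ref{app:voi-schedule}. Your reduction to the centred data $\bm z=M_N\bm y_N$ also makes rigorous what the paper does only implicitly: the paper's expression for $M_N\bm y_N$ already eliminates $\beta$, but it never says so.

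The price of your shortcut is that it needs the data to be generated by the very model, at the very $\bm\theta$, used to form the posterior. The paper's direct propagation uses only the covariance of $M_N\bm y_N$. So it adapts immediately to a data-generating law that differs from the working model (for instance, different true variance scales), where the total-variance identity no longer applies. Under the proposition's stated hypotheses, both arguments are valid.
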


\begin{proof}
The covariance identities follow directly from the exact and truncated
spectral representations.

For the posterior mean,
\(
\bm m_N
=
\sigma_\varepsilon^{-2}
\Sigma_NX_N^\top M_N\bm y_N.
\)
Under the affine working model at fixed design,
\(
M_N\bm y_N
=
M_NX_N\bm v+M_N\bm\varepsilon_N,
\)
with
$\bm v\sim\mathcal N(\bm0,\sigma_1^2I_D)$ and
$\bm\varepsilon_N\sim
\mathcal N(\bm0,\sigma_\varepsilon^2I_N)$.
Diagonalising in the eigenbasis of $A_N$ gives
\[
\Cov(\bm m_N)
=
\sigma_1^2
\sum_{j=1}^{r_N}
c_{j,N}\bm u_{j,N}\bm u_{j,N}^\top.
\]
Projecting onto the discarded eigenspace and taking the trace yields
\[
\E\!\left[
\|\bm m_N-\bm m_N^{(r)}\|^2
\right]
=
\sigma_1^2T_{N,r}.
\]
\end{proof}

Proposition~\ref{prop:low-rank-posterior} controls the two posterior
quantities affected by truncation: the discarded posterior-mean component
through $T_{N,r}$ and the largest omitted covariance update through
$c_{r+1,N}$. We now propagate these errors to the predictive distribution
and hence to EI. This yields both realised-data bounds on the
acquisition functions and expected bounds on the loss from optimising their
rank-$r$ approximations.

For a fixed realised response vector, the full and truncated predictive
distributions differ only through
$(\bm m_N,\Sigma_N)$ and
$(\bm m_N^{(r)},\Sigma_N^{(r)})$, respectively. After intercept
marginalisation,
\[
\mu_N(\bm s)
=
\overline y_N
+
(\bm s-\bm\mu_N)^\top\bm m_N,
\qquad
s_N^2(\bm s)
=
\frac{\sigma_\varepsilon^2}{N}
+
(\bm s-\bm\mu_N)^\top
\Sigma_N
(\bm s-\bm\mu_N),
\]
with $\mu_N^{(r)}$ and $s_N^{(r)}$ obtained by replacing
$(\bm m_N,\Sigma_N)$ with
$(\bm m_N^{(r)},\Sigma_N^{(r)})$.

\begin{proposition}[Low-rank acquisition stability]
\label{prop:low-rank-acquisition}
For fixed $\mathcal D_N$ and $\bm\theta$,
\[
\sup_{\bm s\in\Sph}
\left|
\operatorname{EI}_N(\bm s)
-
\operatorname{EI}_N^{(r)}(\bm s)
\right|
\leq
2\|\bm m_N-\bm m_N^{(r)}\|
+
2\sigma_1
\sqrt{\frac{c_{r+1,N}}{2\pi}}.
\]

If the full and truncated acquisitions are maximised over the same feasible
query domain, then under the affine working model at fixed design and
hyperparameters,
\begin{equation}
\E\!\left[
\operatorname{EI}_N(\bm s_N^\star)
-
\operatorname{EI}_N(\bm s_{N,r}^\star)
\right]
\leq
4\sigma_1
\left(
\sqrt{T_{N,r}}
+
\sqrt{\frac{c_{r+1,N}}{2\pi}}
\right).
\label{eq:ei-low-rank-bound}
\end{equation}
\end{proposition}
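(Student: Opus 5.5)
The plan is to obtain the uniform bound from the Lipschitz structure of the Gaussian EI formula, and then to derive the expected optimisation-loss bound by a standard argmax comparison combined with Proposition~\ref{prop:low-rank-posterior}. Write $\operatorname{EI}(\gamma,s)=\gamma\Phi(\gamma/s)+s\phi(\gamma/s)$. Two derivative facts do the work: $\partial_\gamma\operatorname{EI}=\Phi(\gamma/s)\in[0,1]$ and $\partial_s\operatorname{EI}=\phi(\gamma/s)\leq 1/\sqrt{2\pi}$. By the mean value theorem along the segment joining $(\gamma,s)$ to $(\gamma^{(r)},s^{(r)})$ (with $s,s^{(r)}>0$, since $\sigma_\varepsilon^2/N>0$), we get
\[
|\operatorname{EI}_N(\bm s)-\operatorname{EI}_N^{(r)}(\bm s)|\leq|\gamma_N(\bm s)-\gamma_N^{(r)}(\bm s)|+\frac{1}{\sqrt{2\pi}}|s_N(\bm s)-s_N^{(r)}(\bm s)|.
\]
The proof therefore reduces to bounding the two moment perturbations uniformly over $\Sph$.

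\emph{Mean/incumbent term.} The incumbent is $\tau_N=\max_i\mu_N(\bm s_i)$, and it also changes under truncation. Set $\Delta\bm m:=\bm m_N-\bm m_N^{(r)}$. From the displayed intercept-marginalised form, $\mu_N(\bm s)-\mu_N^{(r)}(\bm s)=(\bm s-\bm\mu_N)^\top\Delta\bm m$. The difference of two maxima lies between the minimum and maximum of the pointwise differences, so $\tau_N-\tau_N^{(r)}=(\bm s_i-\bm\mu_N)^\top\Delta\bm m$ for some observed index $i$; more precisely, it lies between two such values. In $\gamma$ the $\bm\mu_N$ terms then cancel, giving
\[
|\gamma_N(\bm s)-\gamma_N^{(r)}(\bm s)|\leq\max_i|(\bm s-\bm s_i)^\top\Delta\bm m|\leq 2\|\Delta\bm m\|,
\]
because $\bm s$ and $\bm s_i$ are unit vectors. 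This cancellation is the point needing most care: a naive bound treating $\mu_N$ and $\tau_N$ separately gives $4\|\Delta\bm m\|$, not the stated constant $2$.

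\emph{Variance term.} By Proposition~\ref{prop:low-rank-posterior},
\[
(s_N^{(r)})^2-s_N^2=\sigma_1^2\sum_{j>r}c_{j,N}\bigl(\bm u_{j,N}^\top(\bm s-\bm\mu_N)\bigr)^2\in\bigl[0,\;\sigma_1^2c_{r+1,N}\|\bm s-\bm\mu_N\|^2\bigr].
\]
Since $\bm\mu_N$ is an average of unit vectors, $\|\bm\mu_N\|\le1$ and hence $\|\bm s-\bm\mu_N\|\leq2$. Using $|\sqrt a-\sqrt b|\leq\sqrt{|a-b|}$ we obtain $|s_N-s_N^{(r)}|\leq2\sigma_1\sqrt{c_{r+1,N}}$. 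Inserting both perturbation bounds into the Lipschitz inequality gives the first claim, uniformly in $\bm s$.

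\emph{Expected optimisation loss.} Let $\bm s_N^\star$ and $\bm s_{N,r}^\star$ maximise $\operatorname{EI}_N$ and $\operatorname{EI}_N^{(r)}$ respectively over the common feasible domain. Since $\operatorname{EI}_N^{(r)}(\bm s_{N,r}^\star)\geq\operatorname{EI}_N^{(r)}(\bm s_N^\star)$,
\[
\operatorname{EI}_N(\bm s_N^\star)-\operatorname{EI}_N(\bm s_{N,r}^\star)\leq[\operatorname{EI}_N-\operatorname{EI}_N^{(r)}](\bm s_N^\star)+[\operatorname{EI}_N^{(r)}-\operatorname{EI}_N](\bm s_{N,r}^\star)\leq 2\sup_{\bm s}|\operatorname{EI}_N-\operatorname{EI}_N^{(r)}|.
\]
Taking expectations under the affine working model, only $\|\Delta\bm m\|$ is random, since $c_{r+1,N}$ depends only on the design and $\bm\theta$. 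Jensen's inequality together with Proposition~\ref{prop:low-rank-posterior} gives $\E\|\Delta\bm m\|\leq(\E\|\Delta\bm m\|^2)^{1/2}=\sigma_1\sqrt{T_{N,r}}$. This yields \eqref{eq:ei-low-rank-bound}.
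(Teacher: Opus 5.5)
Your proposal is correct and follows the same route as the paper. Both arguments use the Lipschitz constants $1$ and $1/\sqrt{2\pi}$ of Gaussian EI in the mean and standard deviation, bound the moment perturbations using $\|\bm s-\bm\mu_N\|\leq 2$ and Proposition~\ref{prop:low-rank-posterior}, apply the factor-$2$ argmax comparison, and finish with Jensen's inequality. The one place you go beyond the paper is the incumbent: the paper's proof only bounds $|\mu_N-\mu_N^{(r)}|$ and so implicitly treats $\tau_N$ as shared, whereas your observation that $\tau_N-\tau_N^{(r)}$ lies between the extreme values of $(\bm s_i-\bm\mu_N)^\top\Delta\bm m$, giving $|\gamma_N-\gamma_N^{(r)}|\leq\max_i|(\bm s-\bm s_i)^\top\Delta\bm m|\leq 2\|\Delta\bm m\|$, shows the stated constant still holds when the threshold is recomputed from the truncated mean.
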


\begin{proof}
Since $\bm\mu_N$ is an average of unit vectors,
$\|\bm s-\bm\mu_N\|\leq2$, and therefore $|\mu_N(\bm s)-\mu_N^{(r)}(\bm s)| \leq 2\|\bm m_N-\bm m_N^{(r)}\|$.
Proposition~\ref{prop:low-rank-posterior} similarly gives $|s_N^{(r)}(\bm s)-s_N(\bm s)| \leq 2\sigma_1\sqrt{c_{r+1,N}}$.
Gaussian EI is Lipschitz in its predictive mean with constant $1$ and in
its predictive standard deviation with constant $1/\sqrt{2\pi}$, yielding
the EI bound.

If two acquisitions differ uniformly by at most $\varepsilon$, maximising
one loses at most $2\varepsilon$ under the other. Finally,
Proposition~\ref{prop:low-rank-posterior} and Jensen's inequality give $\E\|\bm m_N-\bm m_N^{(r)}\| \leq \sigma_1\sqrt{T_{N,r}}$,
which yields \eqref{eq:ei-low-rank-bound}.
\end{proof}

These ideal truncation results provide an a priori retention principle:
directions associated with larger eigenvalues of $A_N$ produce stronger
posterior updates for every variance ratio and are therefore retained
preferentially. The actual bounded-state algorithm, however, repeatedly
compresses an observation-space representation so that the variance
hyperparameters can continue to be refitted; we describe that representation
next.

\subsection{Compressed observation state}
\label{app:reservoir-motivation}

Let $M$ denote the compression budget. The live state consists of
pseudo-observations
$(\check X,\check{\bm y},\check{\bm o})$, where
$\check X\in\R^{m\times D}$ and
$\check{\bm y},\check{\bm o}\in\R^m$.
The vector $\check{\bm o}$ replaces the original ones vector after
observation-space compression.

The compression is constructed to preserve
\begin{equation}
\begin{aligned}
\check X^\top(\check{\bm y}-\beta\check{\bm o})
&=
X_N^\top(\bm y_N-\beta\one_N)
\qquad \text{for all }\beta,\\
\|\check{\bm y}\|^2
&=
\|\bm y_N\|^2,\\
\check{\bm o}^\top\check{\bm y}
&=
\one_N^\top\bm y_N,\\
\|\check{\bm o}\|^2
&=
N.
\end{aligned}
\label{eq:invariants}
\end{equation}
These identities preserve the slope--response cross-products and scalar
residual quantities required for intercept marginalisation. The true
observation count and acquisition incumbent are tracked separately from the
pseudo-observations.

\paragraph{Orthogonal observation-space compression.}
\label{app:reservoir-validity}
If $W$ has orthonormal columns, replacing
$(X,\bm y,\bm o)$ by
$(W^\top X,W^\top\bm y,W^\top\bm o)$
is an exact change of coordinates on the retained observation subspace.
Gaussian residuals remain white under this transformation, so the affine
likelihood and posterior formulas remain valid. The ones vector becomes the
general coordinate vector $W^\top\bm o$, which is why
$\check{\bm o}$ must be carried explicitly.

\subsection{Recompression and evidence correction}
\label{app:reservoir-truncation}

When recompression is triggered, we diagonalise its Gram matrix
and retain a bounded collection of eigendirections. Leading eigendirections
are retained preferentially because they correspond to the strongest
posterior variance reductions for every fixed variance ratio. A sparse
selection from the remaining spectrum preserves some memory of weaker
directions introduced by continued off-span exploration.

The discarded spectrum is additionally summarised by a small collection of
augmentation rows. Within each spectral bin, these rows are constructed
from inverse-moment vectors of the observation and intercept coordinates,
\[
\Lambda^{-k}\check{\bm y},
\qquad
\Lambda^{-k}\check{\bm o},
\qquad
k=0,\ldots,\mu-1,
\]
restricted to that bin, where $\Lambda$ contains the corresponding
discarded Gram eigenvalues. The zeroth moments enforce the exact statistics
in \eqref{eq:invariants}, while higher moments improve the approximation of
the shrinkage-weighted quantities entering the posterior and marginal
likelihood across hyperparameter values. The numbers of retained
directions, spectral bins, and moments used in the experiments are given in
Appendix~\ref{app:experimental-setup}.

The pseudo-state alone does not preserve the full determinant contribution
to the marginal likelihood because discarded Gram eigenvalues are no longer
represented explicitly. We therefore maintain a hyperparameter-dependent
log-determinant correction. If $g_j^{\mathrm{drop}}$ are eigenvalues removed
from the live state and $g_i^{\mathrm{aug}}$ are eigenvalues represented by
the augmentation rows, we update
\begin{equation}
\Delta_{\det}(\bm\theta)
\mathrel{+}=
\sum_{j\,\mathrm{drop}}
\log(\sigma_1^2g_j^{\mathrm{drop}}+\sigma_\varepsilon^2)
-
\sum_{i\,\mathrm{aug}}
\log(\sigma_1^2g_i^{\mathrm{aug}}+\sigma_\varepsilon^2).
\label{eq:logdet-acc}
\end{equation}
This corrects the leading determinant bias that would otherwise arise from
evaluating the evidence using only the compressed state.

\subsection{Stability under bounded-cost refitting}
\label{app:bounded-refitting-stability}

The preceding results analyse ideal spectral truncation of the posterior at a
fixed dataset. The bounded-state algorithm instead repeatedly recompresses an
observation-space representation while new observations continue to arrive.
We now characterise the approximation induced by this actual streaming
procedure. The results are first conditional on fixed variance
hyperparameters, and are then extended to hyperparameter-marginalised EI when
the hyperparameter distribution is held fixed. The additional effect of
refitting the hyperposterior from the compressed marginal likelihood is
discussed separately below.

At round $N$, let
\[
M_N
:=
I_N-\frac1N\one_N\one_N^\top,
\qquad
A_N
:=
X_N^\top M_NX_N,
\qquad
\bm b_N
:=
X_N^\top M_N\bm y_N .
\]
For the current pseudo-observation state
$(\check X_N,\check{\bm y}_N,\check{\bm o}_N)$ with $m$ rows, define
\[
\check M_N
:=
\mathbf I_m-\frac1N
\check{\bm o}_N\check{\bm o}_N^\top,
\qquad
\check A_N
:=
\check X_N^\top\check M_N\check X_N.
\]
Because $\|\check{\bm o}_N\|^2=N$, $\check M_N$ is the orthogonal
projector that removes the compressed intercept direction.

The invariants of \eqref{eq:invariants} imply $ \check X_N^\top\check M_N\check{\bm y}_N = X_N^\top M_N\bm y_N = \bm b_N$.
Thus the centred slope--response statistic is preserved exactly through
recompression. Moreover, writing
\[
G_N:=X_N^\top X_N,
\qquad
\check G_N:=\check X_N^\top\check X_N,
\]
and using
\(
\check X_N^\top\check{\bm o}_N=X_N^\top\one_N
\),
we obtain
\(
A_N-\check A_N
=
G_N-\check G_N.
\)
Hence, conditional on the variance hyperparameters, the only approximate
sufficient statistic entering the slope posterior is the centred design Gram
matrix $\check A_N$; the centred slope--response statistic is preserved
exactly.

\begin{proposition}[Accumulation of streaming Gram error]
\label{prop:streaming-gram-error}
Let
\(
E_N:=A_N-\check A_N.
\)
Suppose the pseudo-observation representation is initially exact, and let
$J_N$ denote the number of recompressions performed up to round $N$.
For recompression $j$, let $\check X_j^-$ and $\check X_j^+$ denote the
pseudo-design matrices immediately before and after recompression, and define
the local Gram defect
\[
D_j
:=
(\check X_j^-)^\top\check X_j^-
-
(\check X_j^+)^\top\check X_j^+.
\]
Then
\begin{equation}
E_N
=
\sum_{j=1}^{J_N}D_j .
\label{eq:streaming-gram-sum}
\end{equation}
Consequently,
\begin{equation*}
\rho_N
:=
\|E_N\|_{\mathrm{op}}
\leq
\sum_{j=1}^{J_N}\|D_j\|_{\mathrm{op}}.
\end{equation*}
\end{proposition}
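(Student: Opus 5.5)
The proof will be a telescoping argument on the uncentred Gram matrices. The paragraph preceding the proposition already gives $A_N-\check A_N=G_N-\check G_N$: the invariants \eqref{eq:invariants} guarantee $\check X_N^\top\check{\bm o}_N=X_N^\top\one_N$ and $\|\check{\bm o}_N\|^2=N$. Hence the rank-one centring terms $\frac1N X_N^\top\one_N\one_N^\top X_N$ and $\frac1N\check X_N^\top\check{\bm o}_N\check{\bm o}_N^\top\check X_N$ coincide. So it suffices to show $G_N-\check G_N=\sum_{j=1}^{J_N}D_j$. The operator-norm bound then follows from the triangle inequality.

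To track $G_N-\check G_N$ over rounds, there are two kinds of events.
\begin{enumerate}
\item When a new observation $(\bm s_{N+1},y_{N+1})$ arrives without recompression, the same row is appended to both the exact design $X_N$ and the pseudo-design $\check X_N$, with $1$ appended to $\check{\bm o}$. Both Gram matrices then increase by $\bm s_{N+1}\bm s_{N+1}^\top$, so $G-\check G$ is unchanged.
\item A recompression leaves the exact design untouched and replaces $\check X_j^-$ by $\check X_j^+$. The difference $G-\check G$ therefore increases by exactly $(\check X_j^-)^\top\check X_j^- - (\check X_j^+)^\top\check X_j^+ = D_j$.
\end{enumerate}

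Starting from an exact representation, where $G-\check G=0$, induction over the sequence of arrivals and recompressions up to round $N$ yields \eqref{eq:streaming-gram-sum}. The invariants must continue to hold after every arrival and every recompression, so that the centring identity is valid at round $N$. Arrivals preserve them trivially, since they append identical rows, a response, and a one. Recompressions preserve them by construction, as stated in Appendix~\ref{app:reservoir-truncation}.

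The main point needing care is the first step: confirming that the centring identity holds for the current state regardless of how many recompressions occurred. This requires the intercept column invariant $\check X^\top\check{\bm o}=X^\top\one_N$, which is the $\beta$-coefficient of the first line of \eqref{eq:invariants}, together with the norm invariant. With that in place, the rest is bookkeeping, and $\rho_N\le\sum_j\|D_j\|_{\mathrm{op}}$ is immediate from subadditivity of the operator norm.
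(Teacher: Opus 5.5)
Your proposal is correct and matches the paper's proof essentially step for step. The preserved intercept cross-statistic $\check X_N^\top\check{\bm o}_N=X_N^\top\one_N$ reduces $E_N$ to $G_N-\check G_N$; appends leave this difference unchanged, recompression $j$ adds exactly $D_j$, and induction over the events followed by the triangle inequality finishes the argument. As a minor remark, the norm invariant $\|\check{\bm o}_N\|^2=N$ is not actually needed for the centring identity, since $\check M_N$ is defined with the true count $N$, but invoking it does no harm.
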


\begin{proof}
Because the centring terms are preserved exactly,
$E_N=G_N-\check G_N$.
Appending a raw observation $(\bm s,y)$ adds the same rank-one matrix
$\bm s\bm s^\top$ to both $G_N$ and the live pseudo-state Gram matrix
$\check G_N$. The quantities $N$ and
$X_N^\top\one_N=\check X_N^\top\check{\bm o}_N$ are also updated exactly.
Hence appending an observation leaves $E_N$ unchanged.

At recompression $j$, the exact raw-data Gram matrix is unchanged, whereas
the pseudo-state Gram matrix changes from
$(\check X_j^-)^\top\check X_j^-$ to
$(\check X_j^+)^\top\check X_j^+$. The error therefore changes by exactly
$D_j$. Induction over append and recompression operations gives
\eqref{eq:streaming-gram-sum}, and the bound follows from 
the triangle inequality.
\end{proof}

Proposition~\ref{prop:streaming-gram-error} separates the approximation
introduced by recompression from subsequent data arrival. Later observations
can change the posterior consequence of an existing Gram perturbation, but
they do not enlarge the perturbation itself: additional Gram error enters
only when another recompression is performed. The result does not depend on
how the post-compression pseudo-observations are constructed and therefore
applies directly to the spectral retention and augmentation procedure used
by the bounded-state algorithm.

\begin{proposition}[Conditional posterior stability under streaming compression]
\label{prop:streaming-posterior}
Fix
\(
\bm\theta=(\sigma_1^2,\sigma_\varepsilon^2)
\)
and write
\(
\lambda:=\sigma_1^2/\sigma_\varepsilon^2.
\)
Define
\[
R_N:=(I_D+\lambda A_N)^{-1},
\qquad
\check R_N:=(I_D+\lambda\check A_N)^{-1}.
\]
The exact and compressed conditional slope posteriors satisfy $ \Sigma_N=\sigma_1^2R_N$ and  $\check{\Sigma}_N=\sigma_1^2\check R_N$.
Because $\bm b_N$ is preserved exactly, we have $\bm m_N=\lambda R_N\bm b_N$ and  $\check{\bm m}_N=\lambda\check R_N\bm b_N$.
Then
\begin{equation}
\|\Sigma_N-\check{\Sigma}_N\|_{\mathrm{op}}
\leq
\sigma_1^2\lambda\rho_N,
\label{eq:streaming-cov-bound}
\end{equation}
and
\begin{equation}
\|\bm m_N-\check{\bm m}_N\|
\leq
\lambda\rho_N\|\check{\bm m}_N\|.
\label{eq:streaming-mean-bound}
\end{equation}
\end{proposition}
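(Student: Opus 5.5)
The natural route is the resolvent identity. Since $A_N$ and $\check A_N$ are symmetric positive semidefinite, both $I_D+\lambda A_N$ and $I_D+\lambda\check A_N$ are symmetric with all eigenvalues at least one. Hence $\|R_N\|_{\mathrm{op}}\leq1$ and $\|\check R_N\|_{\mathrm{op}}\leq1$. The formulas $\Sigma_N=\sigma_1^2R_N$ and $\bm m_N=\lambda R_N\bm b_N$ come from the proof of Proposition~\ref{prop:explored-span}. There, $\Sigma_N=(\sigma_1^{-2}I_D+\sigma_\varepsilon^{-2}A_N)^{-1}$ and $\bm m_N=\sigma_\varepsilon^{-2}\Sigma_N\bm b_N$. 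Factoring out $\sigma_1^{-2}$ gives both formulas.

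The compressed versions follow the same way. Replacing $A_N$ by $\check A_N$ gives the compressed posterior, and the right-hand vector is still $\bm b_N$ because $\check X_N^\top\check M_N\check{\bm y}_N=\bm b_N$ holds exactly, as noted above. The one point to check is that the compressed posterior really has this form. This uses the orthogonal-compression argument: $\check M_N$ is a projector and residuals stay white. So the flat-intercept integration reproduces the centred quadratic form with $\check A_N$ in place of $A_N$.

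Next, apply the second resolvent identity
\[
R_N-\check R_N=R_N\bigl[(I_D+\lambda\check A_N)-(I_D+\lambda A_N)\bigr]\check R_N=-\lambda R_N E_N\check R_N,
\]
where $E_N=A_N-\check A_N$ and $\rho_N=\|E_N\|_{\mathrm{op}}$ as in Proposition~\ref{prop:streaming-gram-error}. Taking operator norms and using the contraction bounds gives $\|R_N-\check R_N\|_{\mathrm{op}}\leq\lambda\rho_N$. Multiplying by $\sigma_1^2$ gives \eqref{eq:streaming-cov-bound}.

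For the mean, I would write $\bm m_N-\check{\bm m}_N=\lambda(R_N-\check R_N)\bm b_N$ and substitute the identity above. This gives $-\lambda R_NE_N(\lambda\check R_N\bm b_N)=-\lambda R_NE_N\check{\bm m}_N$. Then $\|R_N\|_{\mathrm{op}}\leq1$ yields \eqref{eq:streaming-mean-bound}.

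The bound is stated relative to $\|\check{\bm m}_N\|$ rather than $\|\bm m_N\|$. This is exactly what the chosen ordering $R_N E_N\check R_N$ produces, so I must apply the resolvent identity in that order and not the other.

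The main obstacle is not the algebra but justifying that the compressed posterior has precisely the form $\check\Sigma_N=\sigma_1^2\check R_N$. Specifically, $\check A_N$ must be positive semidefinite, so that $\|\check R_N\|_{\mathrm{op}}\leq1$ holds. This follows because $\check M_N$ is an orthogonal projector, which needs $\|\check{\bm o}_N\|^2=N$ from \eqref{eq:invariants}. Then $\check A_N=(\check M_N\check X_N)^\top(\check M_N\check X_N)\succeq0$.
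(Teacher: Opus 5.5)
Your proposal is correct and matches the paper's proof. It uses the same resolvent identity $R_N-\check R_N=-\lambda R_NE_N\check R_N$ in the same order, the contraction bounds $\|R_N\|_{\mathrm{op}},\|\check R_N\|_{\mathrm{op}}\leq1$, and the substitution $\lambda\check R_N\bm b_N=\check{\bm m}_N$ for the mean bound. Your check that $\check A_N=(\check M_N\check X_N)^\top(\check M_N\check X_N)\succeq0$ via $\|\check{\bm o}_N\|^2=N$ only makes explicit the positive semidefiniteness that the paper asserts without comment.
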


\begin{proof}
The resolvent identity gives $R_N-\check R_N = -\lambda R_NE_N\check R_N $.
Since $A_N$ and $\check A_N$ are positive semidefinite, $\|R_N\|_{\mathrm{op}}\leq1$ and $\|\check R_N\|_{\mathrm{op}}\leq1$.
Hence $\|R_N-\check R_N\|_{\mathrm{op}} \leq \lambda\rho_N$,
and multiplication by $\sigma_1^2$ gives
\eqref{eq:streaming-cov-bound}.
For the posterior mean,
\[
\bm m_N-\check{\bm m}_N
=
\lambda(R_N-\check R_N)\bm b_N
=
-\lambda R_NE_N\check{\bm m}_N,
\]
which gives \eqref{eq:streaming-mean-bound}. 
\end{proof}

\paragraph{Predictive sufficient statistics.}
For the raw data, write
\[
\overline y_N
:=
\frac{\one_N^\top\bm y_N}{N},
\qquad
\bm\mu_N
:=
\frac{X_N^\top\one_N}{N}.
\]
Their compressed counterparts are naturally defined through the compressed
intercept coordinate,
\[
\check{\overline y}_N
:=
\frac{\check{\bm o}_N^\top\check{\bm y}_N}{N},
\qquad
\check{\bm\mu}_N
:=
\frac{\check X_N^\top\check{\bm o}_N}{N}.
\]
The invariants in \eqref{eq:invariants} give
\begin{equation}
\check{\overline y}_N=\overline y_N,
\qquad
\check{\bm\mu}_N=\bm\mu_N.
\label{eq:streaming-centroid-exact}
\end{equation}
Thus these quantities remain exact even though the pseudo-state has
$m$ rather than $N$ rows.

\begin{corollary}[Predictive and EI stability under streaming compression]
\label{cor:streaming-ei}
For fixed $\bm\theta$, let
$\mu_N(\bm s)$ and $s_N(\bm s)$ denote the exact predictive mean and
standard deviation, and let
$\check\mu_N(\bm s)$ and $\check s_N(\bm s)$ denote their compressed
counterparts. Then, uniformly over $\bm s\in\Sph$,
\begin{align}
|\mu_N(\bm s)-\check\mu_N(\bm s)|
&\leq
2\lambda\rho_N\|\check{\bm m}_N\|,
\label{eq:streaming-pred-mean}\\
|s_N^2(\bm s)-\check s_N^2(\bm s)|
&\leq
4\sigma_1^2\lambda\rho_N,
\label{eq:streaming-pred-var}\\
|s_N(\bm s)-\check s_N(\bm s)|
&\leq
2\sigma_1\sqrt{\lambda\rho_N}.
\label{eq:streaming-pred-sd}
\end{align}
Let
\[
\tau_N:=\max_{1\leq i\leq N}\mu_N(\bm s_i),
\qquad
\check\tau_N:=\max_{1\leq i\leq N}\check\mu_N(\bm s_i),
\]
and let $\operatorname{EI}_N$ and
$\check{\operatorname{EI}}_N$ denote Gaussian EI computed from the exact
and compressed predictive distributions and their corresponding thresholds.
Then
\begin{equation}
\sup_{\bm s\in\Sph}
\left|
\operatorname{EI}_N(\bm s)
-
\check{\operatorname{EI}}_N(\bm s)
\right|
\leq
\varepsilon_N(\bm\theta),
\label{eq:streaming-ei-uniform}
\end{equation}
where
\begin{equation*}
\varepsilon_N(\bm\theta)
:=
4\lambda\rho_N\|\check{\bm m}_N\|
+
\frac{2\sigma_1}{\sqrt{2\pi}}
\sqrt{\lambda\rho_N}.
\end{equation*}
If
\[
\bm s_N^\star\in
\argmax_{\bm s\in\Sph}\operatorname{EI}_N(\bm s),
\qquad
\check{\bm s}_N^\star
\in
\argmax_{\bm s\in\Sph}\check{\operatorname{EI}}_N(\bm s),
\]
then
\begin{equation}
\operatorname{EI}_N(\bm s_N^\star)
-
\operatorname{EI}_N(\check{\bm s}_N^\star)
\leq
2\varepsilon_N(\bm\theta).
\label{eq:streaming-ei-optimisation}
\end{equation}
\end{corollary}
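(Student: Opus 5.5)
The plan is to reduce everything to Proposition~\ref{prop:streaming-posterior}, using the exactness of the centroids in \eqref{eq:streaming-centroid-exact}. After intercept marginalisation, the predictive moments take the centred form
\[
\mu_N(\bm s)=\overline y_N+(\bm s-\bm\mu_N)^\top\bm m_N,
\qquad
s_N^2(\bm s)=\frac{\sigma_\varepsilon^2}{N}+(\bm s-\bm\mu_N)^\top\Sigma_N(\bm s-\bm\mu_N).
\]
The compressed moments are the same expressions with $\check{\bm m}_N$ and $\check\Sigma_N$. This holds because $\check{\overline y}_N=\overline y_N$, $\check{\bm\mu}_N=\bm\mu_N$, and $\|\check{\bm o}_N\|^2=N$, so the intercept variance term $\sigma_\varepsilon^2/N$ is unchanged. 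One step needs checking: that the compressed intercept-marginalised predictive really has this centred form. I would verify it by repeating the derivation in Appendix~\ref{app:data-informed-geometry} with $\one_N$ replaced by $\check{\bm o}_N$, using the fact that $\check M_N$ is the projector removing $\check{\bm o}_N$.

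Given that form, the differences involve only the slope terms. Since $\bm\mu_N$ averages unit vectors, $\|\bm s-\bm\mu_N\|\le 2$. Combining Cauchy--Schwarz with \eqref{eq:streaming-mean-bound} gives \eqref{eq:streaming-pred-mean}. For the variance, $|(\bm s-\bm\mu_N)^\top(\Sigma_N-\check\Sigma_N)(\bm s-\bm\mu_N)|\le 4\|\Sigma_N-\check\Sigma_N\|_{\mathrm{op}}$, and \eqref{eq:streaming-cov-bound} then gives \eqref{eq:streaming-pred-var}. For the standard deviation I would use $|\sqrt a-\sqrt b|\le\sqrt{|a-b|}$ for $a,b\ge0$, which yields $2\sigma_1\sqrt{\lambda\rho_N}$ directly.

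For EI, write $\operatorname{EI}(\mu,s,\tau)=s\,h((\mu-\tau)/s)$ with $h(z)=z\Phi(z)+\phi(z)$. Its partial derivatives are $\partial_\mu=\Phi\in[0,1]$, $\partial_\tau=-\Phi$, and $\partial_s=\phi\le 1/\sqrt{2\pi}$. Integrating along the segment joining the exact and compressed triples therefore gives
\[
|\operatorname{EI}-\check{\operatorname{EI}}|\le|\mu-\check\mu|+|\tau-\check\tau|+\frac{|s-\check s|}{\sqrt{2\pi}}.
\]
The threshold error is handled by the fact that a maximum of finitely many functions is $1$-Lipschitz in sup norm: $|\tau_N-\check\tau_N|\le\max_i|\mu_N(\bm s_i)-\check\mu_N(\bm s_i)|$. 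This is at most $2\lambda\rho_N\|\check{\bm m}_N\|$ by \eqref{eq:streaming-pred-mean} applied at the design points, which lie on $\Sph$. Summing the mean and threshold contributions gives $4\lambda\rho_N\|\check{\bm m}_N\|$, and the standard-deviation contribution gives $2\sigma_1\sqrt{\lambda\rho_N}/\sqrt{2\pi}$. Together these are exactly $\varepsilon_N(\bm\theta)$, proving \eqref{eq:streaming-ei-uniform}.

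Finally, \eqref{eq:streaming-ei-optimisation} follows from the standard argument, chaining three inequalities:
\[
\operatorname{EI}_N(\bm s_N^\star)\le\check{\operatorname{EI}}_N(\bm s_N^\star)+\varepsilon_N,
\qquad
\check{\operatorname{EI}}_N(\bm s_N^\star)\le\check{\operatorname{EI}}_N(\check{\bm s}_N^\star),
\qquad
\check{\operatorname{EI}}_N(\check{\bm s}_N^\star)\le\operatorname{EI}_N(\check{\bm s}_N^\star)+\varepsilon_N.
\]
I expect the main obstacle to be the Lipschitz step for EI when $s$ is near zero. I would handle it by noting that $s_N^2,\check s_N^2\ge\sigma_\varepsilon^2/N>0$, so both triples stay in the region where EI is smooth. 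The segment between them also stays in that region because $s>0$ along it, so the mean-value argument applies without degeneracy.
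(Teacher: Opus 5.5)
Your proposal is correct and follows essentially the same route as the paper. Both arguments use the centred predictive forms justified by the exact centroids, the bound $\|\bm s-\bm\mu_N\|\le 2$ combined with Proposition~\ref{prop:streaming-posterior}, the inequality $|\sqrt a-\sqrt b|\le\sqrt{|a-b|}$, the Lipschitz constants of Gaussian EI in mean, threshold and standard deviation, and the standard $2\varepsilon$ maximiser argument. Your additional checks, namely the centred form of the compressed intercept-marginalised predictive and the positivity $s^2\ge\sigma_\varepsilon^2/N$ along the interpolating segment, make explicit details the paper leaves implicit.
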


\begin{proof}
Using \eqref{eq:streaming-centroid-exact}, the exact and compressed
predictive means may be written
\[
\mu_N(\bm s)
=
\overline y_N
+
(\bm s-\bm\mu_N)^\top\bm m_N,
\]
and
\[
\check\mu_N(\bm s)
=
\overline y_N
+
(\bm s-\bm\mu_N)^\top\check{\bm m}_N.
\]
Similarly,
\[
s_N^2(\bm s)
=
\frac{\sigma_\varepsilon^2}{N}
+
(\bm s-\bm\mu_N)^\top
\Sigma_N
(\bm s-\bm\mu_N),
\]
with the compressed expression obtained by replacing
$\Sigma_N$ by $\check{\Sigma}_N$.
Since $\bm s$ and all evaluated designs have unit norm,
$\|\bm\mu_N\|\leq1$ and therefore
$\|\bm s-\bm\mu_N\|\leq2$.
Equations~\eqref{eq:streaming-pred-mean} and
\eqref{eq:streaming-pred-var} follow from
Proposition~\ref{prop:streaming-posterior}.
For nonnegative $a,b$, $|\sqrt a-\sqrt b| \leq \sqrt{|a-b|}$,
which gives \eqref{eq:streaming-pred-sd}.
The maximum of two functions differs by at most their uniform difference, so $
|\tau_N-\check\tau_N| \leq 2\lambda\rho_N\|\check{\bm m}_N\|$.
Gaussian EI is $1$-Lipschitz in the predictive mean,
$1$-Lipschitz in the improvement threshold, and
$1/\sqrt{2\pi}$-Lipschitz in the predictive standard deviation.
Combining these properties with
\eqref{eq:streaming-pred-mean}--\eqref{eq:streaming-pred-sd}
gives \eqref{eq:streaming-ei-uniform}.
Finally, uniformly perturbing an objective by at most
$\varepsilon_N(\bm\theta)$ changes the value attained by its maximiser under
the original objective by at most $2\varepsilon_N(\bm\theta)$, giving
\eqref{eq:streaming-ei-optimisation}.
\end{proof}

\paragraph{Hyperparameter-marginalised EI with fixed weights.}
The preceding corollary treats one fixed variance-hyperparameter value.
The acquisition used by SABLE instead averages conditional EI over
hyperparameter atoms and defines its incumbent from the corresponding
marginal posterior mean. The same streaming argument extends to this setting
provided the exact and compressed calculations use the same hyperparameter
distribution.

Let $\pi$ be any common distribution over
$\bm\theta=(\sigma_1^2,\sigma_\varepsilon^2)$, including the finite
equal-weight atom distribution used by the algorithm, and define
\[
\overline\mu_N(\bm s)
:=
\E_{\bm\theta\sim\pi}
\bigl[\mu_N(\bm s;\bm\theta)\bigr],
\qquad
\check{\overline\mu}_N(\bm s)
:=
\E_{\bm\theta\sim\pi}
\bigl[\check\mu_N(\bm s;\bm\theta)\bigr].
\]
Let
\[
\overline\tau_N
:=
\max_{1\leq i\leq N}
\overline\mu_N(\bm s_i),
\qquad
\check{\overline\tau}_N
:=
\max_{1\leq i\leq N}
\check{\overline\mu}_N(\bm s_i),
\]
and define the corresponding marginal EI acquisitions by
\[
\overline{\operatorname{EI}}_N(\bm s)
:=
\E_{\bm\theta\sim\pi}
\left[
\operatorname{EI}
\bigl(
\mu_N(\bm s;\bm\theta),
s_N(\bm s;\bm\theta);
\overline\tau_N
\bigr)
\right],
\]
with $\check{\overline{\operatorname{EI}}}_N$ defined analogously.

\begin{corollary}[Marginal EI stability for a common hyperparameter distribution]
\label{cor:streaming-marginal-ei}
For any common distribution $\pi$ over the variance hyperparameters,
\begin{equation}
\sup_{\bm s\in\Sph}
\left|
\overline{\operatorname{EI}}_N(\bm s)
-
\check{\overline{\operatorname{EI}}}_N(\bm s)
\right|
\leq
\overline\varepsilon_N(\pi),
\label{eq:streaming-marginal-ei-bound}
\end{equation}
where
\begin{equation*}
\overline\varepsilon_N(\pi)
:=
4\rho_N\,
\E_{\bm\theta\sim\pi}
\left[
\lambda(\bm\theta)
\|\check{\bm m}_N(\bm\theta)\|
\right]
+
\frac{2\sqrt{\rho_N}}{\sqrt{2\pi}}\,
\E_{\bm\theta\sim\pi}
\left[
\sigma_1(\bm\theta)\sqrt{\lambda(\bm\theta)}
\right].
\end{equation*}
\end{corollary}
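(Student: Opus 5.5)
The plan is to reduce Corollary~\ref{cor:streaming-marginal-ei} to the per-atom bounds of Corollary~\ref{cor:streaming-ei}, being careful about one new feature: the incumbent $\overline\tau_N$ is now shared across atoms and defined from the \emph{marginal} posterior mean, not from each atom's own mean. For each fixed $\bm\theta$, Proposition~\ref{prop:streaming-posterior} and \eqref{eq:streaming-pred-mean}--\eqref{eq:streaming-pred-sd} give, uniformly over $\bm s\in\Sph$,
\[
|\mu_N(\bm s;\bm\theta)-\check\mu_N(\bm s;\bm\theta)|\leq2\lambda(\bm\theta)\rho_N\|\check{\bm m}_N(\bm\theta)\|,
\qquad
|s_N(\bm s;\bm\theta)-\check s_N(\bm s;\bm\theta)|\leq2\sigma_1(\bm\theta)\sqrt{\lambda(\bm\theta)\rho_N}.
\]
The key observation is that $\rho_N=\|A_N-\check A_N\|_{\mathrm{op}}$ depends only on the design and the compression history, not on $\bm\theta$ (Proposition~\ref{prop:streaming-gram-error}). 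So these bounds hold simultaneously for all $\bm\theta$ with the same $\rho_N$, and can be integrated against $\pi$.

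\textbf{Step 1: the incumbent.} Averaging the mean bound over $\pi$ gives
\[
\sup_{\bm s}|\overline\mu_N(\bm s)-\check{\overline\mu}_N(\bm s)|\leq2\rho_N\E_\pi[\lambda\|\check{\bm m}_N\|].
\]
Since the maximum of two functions over the common finite set $\{\bm s_i\}$ differs by at most their uniform difference, $|\overline\tau_N-\check{\overline\tau}_N|$ obeys the same bound.

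\textbf{Step 2: pointwise EI comparison.} Fix $\bm s$ and $\bm\theta$. Gaussian EI, $\operatorname{EI}(\mu,s;\tau)$, is $1$-Lipschitz in $\mu$, $1$-Lipschitz in $\tau$ (it depends on them only through $\mu-\tau$, with derivative $\Phi\in[0,1]$), and $1/\sqrt{2\pi}$-Lipschitz in $s$ (derivative $\phi\leq1/\sqrt{2\pi}$). Hence
\[
|\operatorname{EI}(\mu,s;\overline\tau)-\operatorname{EI}(\check\mu,\check s;\check{\overline\tau})|
\leq|\mu-\check\mu|+|\overline\tau-\check{\overline\tau}|+\tfrac{1}{\sqrt{2\pi}}|s-\check s|.
\]

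\textbf{Step 3: integrate over $\pi$.} Taking $\E_\pi$ and using $|\E f-\E g|\leq\E|f-g|$:
\begin{itemize}
\item the first term gives $2\rho_N\E_\pi[\lambda\|\check{\bm m}_N\|]$;
\item the second term is a $\bm\theta$-independent constant bounded by the same quantity from Step 1, so the two together give $4\rho_N\E_\pi[\lambda\|\check{\bm m}_N\|]$;
\item the third term gives $\frac{2\sqrt{\rho_N}}{\sqrt{2\pi}}\E_\pi[\sigma_1\sqrt\lambda]$.
\end{itemize}
This is exactly $\overline\varepsilon_N(\pi)$. Taking the supremum over $\bm s$ yields \eqref{eq:streaming-marginal-ei-bound}.

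\textbf{Main obstacle.} The step needing most care is the shared incumbent. One must not bound $|\overline\tau_N-\check{\overline\tau}_N|$ atom by atom inside the expectation. It is instead a single deterministic quantity controlled by the averaged mean error, and it adds exactly once more the mean-error term; that is why the coefficient is $4$ rather than $2$. Measurability and integrability of $\bm\theta\mapsto\lambda\|\check{\bm m}_N\|$ are automatic for the finite atom distribution. For general $\pi$, we assume the expectations are finite, since otherwise the bound is vacuous.
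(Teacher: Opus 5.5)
Your proposal is correct and follows the paper's proof essentially step for step. Like the paper, you average the per-atom mean bound \eqref{eq:streaming-pred-mean} over $\pi$ to control the shared incumbent $|\overline\tau_N-\check{\overline\tau}_N|$, then apply the Lipschitz properties of Gaussian EI inside the expectation together with \eqref{eq:streaming-pred-sd}; your points that $\rho_N$ does not depend on $\bm\theta$ and that the shared incumbent supplies the second copy of the mean term (hence the factor $4$) are details the paper leaves implicit.
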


\begin{proof}
\Eqref{eq:streaming-pred-mean} gives
\[
\sup_{\bm s\in\Sph}
\left|
\overline\mu_N(\bm s)
-
\check{\overline\mu}_N(\bm s)
\right|
\leq
2\rho_N\,
\E_{\bm\theta\sim\pi}
\left[
\lambda(\bm\theta)
\|\check{\bm m}_N(\bm\theta)\|
\right].
\]
Hence the same bound holds for
$|\overline\tau_N-\check{\overline\tau}_N|$.
Applying the Lipschitz properties of Gaussian EI inside the expectation,
then using \eqref{eq:streaming-pred-sd}, yields
\eqref{eq:streaming-marginal-ei-bound}.
\end{proof}

\paragraph{Relation to hyperparameter refitting.}
Corollary~\ref{cor:streaming-marginal-ei} covers the
hyperparameter-marginalised acquisition when the exact and compressed
calculations use the same distribution $\pi$. The actual bounded-state
algorithm additionally refits this distribution from the compressed marginal
likelihood. The results above therefore isolate the remaining source of
approximation: they control the effect of repeated recompression on the
conditional posteriors and on marginal EI at fixed hyperparameter weights,
but do not bound the change in those weights induced by approximate evidence
evaluation. The log-determinant correction of
Appendix~\ref{app:reservoir-truncation} is used to reduce this separate
source of approximation.

\subsection{Preserving fresh off-span directions}
\label{app:frame-ledger}

Because recompression can remove previously informed directions from the
pseudo-observation state, drawing a new off-span direction orthogonal only
to that state could revisit a direction used earlier in the run. We avoid
this using an implicitly represented orthonormal frame
\(
\{\bm u_1,\ldots,\bm u_D\}.
\)
All internally generated exploratory directions, including initial
exploration and off-span components, consume previously unused frame vectors. Every query generated
in this way lies in the span of frame vectors already consumed, so the next
unused vector is orthogonal to the entire preceding query history, including
directions no longer represented in the compressed fitting state.

The frame is generated on demand from a fixed seeded orthogonal transform
and requires only $\mathcal O(D)$ additional state. This replaces
independent uniform draws from the evolving orthogonal complement by a fixed
pseudorandom orthogonal sequence. The specific construction used in the
experiments is given in Appendix~\ref{app:orthogonal-frame}.

When posterior-mean values at earlier raw queries are required, we
optionally retain $Q:=X_N\check X^\top\in\R^{N\times m}$,
so that, for
$\bar{\bm m}=\check X^\top\bar{\bm c}$, we have $X_N\bar{\bm m}=Q\bar{\bm c}$.
The matrix $Q$ is updated alongside the pseudo-state during streaming
updates and recompression. For fixed compression budget $M$, this auxiliary readout
state requires $\mathcal O(NM)$ memory and is separate from the
$\mathcal O(MD)$ posterior-fitting state.

\section{Orthogonal frames for off-span directions}
\label{app:orthogonal-frame}

The reconstruction \eqref{eq:reconstruct} requires a unit direction
$\bm e\in\mathcal V_N^\perp$. By Appendix~\ref{app:post-data-rotation}, its
orientation does not affect the predictive distribution or acquisition
value. We therefore choose off-span directions from a fixed orthonormal
sequence, ensuring that each query explores a fresh direction without
requiring an explicit projection.

\paragraph{Uncompressed posterior.}
Without compression, we use successive columns of a random Haar orthogonal
frame. Equivalently, the next direction can be generated by drawing a
standard Gaussian vector, projecting it orthogonally to all previously used
directions, and normalising. This gives a uniformly random direction in the
remaining orthogonal complement. Since the uncompressed method already
stores $X_N\in\mathbb R^{N\times D}$, retaining the explored span does not
change its $\mathcal O(ND)$ memory scaling.

\paragraph{Bounded-state posterior.}
Under compression, however, retaining all previous directions would require
$\mathcal O(ND)$ memory, defeating the $\mathcal O(MD)$ bounded-state
representation. We therefore replace the Haar frame by a seeded randomised
DCT-II frame,
$$
\Psi
=
\Xi_3 C\Xi_2 C\Xi_1 C\Xi_0\Pi,
$$
where $C$ is the orthonormal DCT-II matrix, the $\Xi_i$ are independent
random diagonal sign matrices, and $\Pi$ is a random permutation matrix.
The $j$th off-span direction is the $j$th column of $\Psi$.

Because every factor is orthogonal, the columns of $\Psi$ are exactly
orthonormal. Hence the next unused column is orthogonal to every previously
used direction and, by induction, to the current design span. The random
signs and permutation also make the columns exchangeable with
$$
\E[\bm u_j]=0,
\qquad
\E[\bm u_j\bm u_j^\top]=D^{-1}I_D,
$$
so no coordinate receives preferential off-span variance. The frame is not
Haar distributed, but its vectors are spread across coordinates rather than
aligned with individual axes.

A column can be generated from its index using three fast cosine transforms,
requiring $\mathcal O(D\log D)$ work and $\mathcal O(D)$ stored state. Thus
the frame memory is independent of the evaluation count $N$, unlike the
$\mathcal O(ND)$ explored-span representation required by sequential Haar
sampling. In our implementation, the frame is determined entirely by a seed
and a counter recording the next unused column.

We use the Haar construction for uncompressed experiments and the randomised
DCT-II frame whenever bounded-state compression is enabled.

\section{Benchmark definitions}\label{sec:benchmarks}

\subsection{ZonalFunctionBench: controlled objectives on the sphere}
\label{app:zonal-bench}

ZonalFunctionBench is a family of rotationally invariant benchmark
distributions on $S^{D-1}$ designed to control the amount of structure lying
outside the affine surrogate class. The construction is based on random
Gegenbauer features associated with the zonal kernel
\[
k(x,y)
=
\sum_{q\geq0}\sigma_q^2 P_q(x^\top y),
\]
where $P_q$ is the degree-$q$ Gegenbauer polynomial normalised so that
$P_q(1)=1$, and $\sigma_q^2$ controls the contribution of
spherical-harmonic degree $q$. Degrees $0$ and $1$ correspond respectively
to a constant and a linear form and therefore lie within the affine surrogate
class, whereas degrees $q\geq2$ introduce non-affine structure. Because the
feature directions are sampled uniformly on the sphere, the benchmark
distribution is rotationally invariant and introduces no preferred coordinate
system or axis-aligned structure.

We use four benchmark settings. The affine control contains only a degree-$1$
component. The remaining three settings are generated from degrees $0$--$3$, with
progressively more spectral power shifted from the degree-$1$ component
towards higher-order structure. For these three settings, the degree-$1$--$3$
weights $(\sigma_1^2,\sigma_2^2,\sigma_3^2)$ are
\[
(1,\tfrac12,\tfrac14)
\quad\text{(mild)},\qquad
(1,1,1)
\quad\text{(moderate)},\qquad
(\tfrac14,\tfrac12,1)
\quad\text{(severe)}.
\]
Thus the severe setting assigns only $1/7$ of its degree-$1$--$3$ spectral
power to the linear component. Within each experiment, all methods are
evaluated seed-for-seed on the same objective draws. When noisy observations
are used, independent Gaussian noise with standard deviation
$\sigma_{\mathrm{obs}}$ is added to each evaluation.

\paragraph{Construction.}
For the mild, moderate, and severe settings, the degree-$1$--$3$ component is
drawn as
\[
f_{\mathrm{raw}}(x)
=
\sum_{q=1}^{3}\sum_{r=1}^{n_q}
\sqrt{\frac{\sigma_q^2 d_q}{n_q}}\,
\xi_{q,r}\,
P_q(x^\top w_{q,r}),
\qquad
w_{q,r}\sim\mathrm{U}(S^{D-1}),
\quad
\xi_{q,r}\sim\mathcal N(0,1),
\]
where $d_q$ is the dimension of the degree-$q$ spherical-harmonic space.
We use $n_{\mathrm{feat}}=\max(1500,D)$ random features in total,
allocated across degrees in proportion to $\sigma_q^2\sqrt{1-1/d_q}$, with
integer counts $n_1+n_2+n_3=n_{\mathrm{feat}}$ chosen by the
largest-remainder rule.

Since $\int_{S^{D-1}}P_q(x^\top w)^2\,d\nu_D(x)=1/d_q$, this scaling gives
$f_{\mathrm{raw}}$ sphere-averaged variance $\sum_q\sigma_q^2$. Its marginal
variance conditional on the orientations,
\[
s_{\mathrm{raw}}^2(x)
:=
\sum_{q=1}^{3}\sum_{r=1}^{n_q}
\frac{\sigma_q^2 d_q}{n_q}\,P_q(x^\top w_{q,r})^2,
\]
is not constant, however: with $n_q\ll d_q$ it rises to
$\sigma_q^2 d_q/n_q$ at each feature orientation, so the raw draw is
dominated by needle-like peaks there. We therefore normalise pointwise,
\[
\tilde f(x)
=
\frac{f_{\mathrm{raw}}(x)}
{s_{\mathrm{raw}}(x)\sqrt{\min(n_{\mathrm{feat}},D)}}.
\]
The first factor gives unit marginal variance at every $x$, and hence a
common scale across the three settings; the second keeps attainable maxima
on an $O(1)$ scale. At random points $s_{\mathrm{raw}}(x)$ is within a few
per cent of its sphere average, so the degree-one component of $\tilde f$
is aligned to well under one degree with that of $f_{\mathrm{raw}}$,
$\sum_{r=1}^{n_1}\sqrt{\sigma_1^2 d_1/n_1}\,\xi_{1,r}\,w_{1,r}$, which is
the direction $\bm v_\star$ used for the recovery diagnostic.

An independent degree-$0$ component is then added,
\[
c\sim\mathcal N(0,\sigma_0^2),
\qquad
\sigma_0
=
\sqrt{\frac{0.2}{\sum_{q=1}^{3}\sigma_q^2}}
\Big/\sqrt{\min(n_{\mathrm{feat}},D)},
\]
so that the constant enters with weight $0.2$ on the scale of the weights
$(\sigma_1^2,\sigma_2^2,\sigma_3^2)$. The resulting objective is
$f(x)=c+\tilde f(x)$; $c$ is drawn from a separate random-number generator,
so adding it does not alter the degree-$1$--$3$ landscape.

\subsection{Generative-model optimisation problems}
\label{app:benchmark-definitions}

We evaluate on four latent optimisation problems constructed from pretrained
scientific generative models. In each case, a pretrained generative model
$g:\R^D\to\mathcal X$ is paired with a black-box objective
$f:\mathcal X\to\R$, giving the spherical latent objective
\(
F(\bm s)=f(g(\sqrt{D}\bm s))
\)
of Section~\ref{sec:model}. The generative models are used without
retraining or fine-tuning for optimisation, and all objectives are maximised.
For each diffusion model, the latent variable is used as the terminal noise
and the reverse sampling procedure is fixed and deterministic, using either
DDIM with $\eta=0$ \citep{song2021ddim} or DPM-Solver++
\citep{lu2022dpmsolverpp}. Hence $g$, and therefore $F$, is deterministic.

The four problems span three scientific domains and latent dimensions from
$6{,}000$ to $40{,}500$. Two of the four form a matched calorimeter pair:
datasets~2 and~3 use the same generative-model family, physical target and
objective, but different detector discretisations. Their latent dimensions are
$6{,}480$ and $40{,}500$, respectively, providing a controlled comparison
across a $6.25$-fold increase in the number of latent voxel coordinates.

\begin{table}[ht]
\centering
\footnotesize
\begin{tabular}{l r l l l r}
\hline
\thead{Problem}
& \thead{$D$}
& \thead{Generative\\model}
& \thead{Generated\\object}
& \thead{Objective}
& \thead{Prior\\median} \\
\hline
Turbulence
& $6{,}000$
& diffusion-lagr
& velocity trajectory
& DNS distribution match
& $0.57$ \\
Peptides
& $13{,}440$
& AMP-Diffusion
& peptide sequence
& hydrophobic moment
& $0.34$ \\
Calorimeter DS2
& $6{,}480$
& CaloDiffusion
& shower voxel grid
& energy-deposition match
& $0.07$ \\
Calorimeter DS3
& $40{,}500$
& CaloDiffusion
& shower voxel grid
& energy-deposition match
& $0.05$ \\
\hline
\end{tabular}
\caption{
Generative-model optimisation problems.
``Prior median'' is the median objective value obtained from independent
draws of the generative model's latent prior
$\mathcal N(\bm0,I_D)$.
Datasets~2 and~3 form a matched calorimeter pair at different detector
discretisations.
}
\label{tab:bench-summary}
\end{table}

\paragraph{Lagrangian turbulence.}
We use diffusion-lagr \citep{li2024synthetic}, a one-dimensional diffusion
model for single-particle velocity trajectories in three-dimensional
homogeneous isotropic turbulence, at the published DM-3c
exponential-moving-average checkpoint. Sampling uses DPM-Solver++ with
$200$ steps. The latent has dimension
\(
D=2{,}000\times3=6{,}000
\)
and is reshaped into the three velocity components of a $2{,}000$-step
trajectory.

The objective measures agreement with the intermittent velocity-increment
statistics of direct numerical simulation. For lag $\tau$, let
$\Delta_\tau u(t)=u(t+\tau)-u(t)$ and let
$P^\tau_{\mathrm{gen}}$ and $P^\tau_{\mathrm{ref}}$ denote the corresponding
generated and reference histograms. Over
$L=\{1,5,10,25,50\}$, we maximise
\[
F
=
\max\!\left\{
-1,\,
1-
\frac{1}{|L|}
\sum_{\tau\in L}
\frac{
\operatorname{JSD}
\!\left(
P^\tau_{\mathrm{gen}}
\,\|\,P^\tau_{\mathrm{ref}}
\right)
}{
J_{\mathrm{gauss}}
}
\right\}.
\]
Here $J_{\mathrm{gauss}}=0.0391$ is the corresponding average divergence
for a Gaussian with matched mean and variance. Thus $F=1$ corresponds to
matching the reference increment distributions, whereas $F=0$ corresponds
to the Gaussian no-intermittency reference. Values below zero are truncated
at $-1$.

The reference distributions are constructed from $256$ ground-truth
trajectories using $101$ fixed histogram bins with $25\%$ padding beyond the
reference increment range; out-of-range values are accumulated in the end
bins. The resulting objective is non-differentiable. Prior draws give values
from approximately $0.26$ to $0.82$, with median $0.57$.

\paragraph{Antimicrobial peptides.}
We use the publicly released AMP-Diffusion checkpoint
\citep{chen2024ampdiffusion,torres2025generative}, a latent diffusion model
over per-residue protein-language-model embeddings built around ESM-2
\citep{lin2023esm2}. The checkpoint uses the $8$M-parameter ESM-2 variant
with $320$-dimensional embeddings. Sampling uses DDIM with $\eta=0$ for
$100$ steps.

The latent contains $42$ token positions,
\(
D=42\times320=13{,}440.
\)
After denoising, the embeddings are mapped through the language-model output
head and decoded by taking the arg-max over the $20$ standard amino acids at
each residue position. This discrete decoding makes the latent objective
piecewise constant and non-differentiable.

We maximise the Eisenberg mean hydrophobic moment for an $\alpha$-helix
\citep{eisenberg1984moment}, a standard measure of amphipathicity relevant
to antimicrobial peptides \citep{yeaman2003mechanisms}. For a sequence
$a_1,\ldots,a_n$,
\[
\mu_H
=
\frac{1}{n}
\left[
\left(
\sum_{k=1}^{n}H(a_k)\sin(k\delta)
\right)^2
+
\left(
\sum_{k=1}^{n}H(a_k)\cos(k\delta)
\right)^2
\right]^{1/2},
\qquad
\delta=100^\circ,
\]
where $H(\cdot)$ is the consensus hydrophobicity scale. The objective is
computed directly from the decoded sequence and requires no auxiliary
predictor. Its median under prior draws is $0.34$.

\paragraph{Calorimeter showers.}
The final two problems use CaloDiffusion
\citep{amram2023calodiffusion} on datasets~2 and~3 of the Fast Calorimeter
Simulation Challenge
\citep{calochallenge_dataset2,calochallenge_dataset3}. Both generate
electromagnetic calorimeter showers at a fixed incident energy of
$100$~GeV. The latent variable represents shower-to-shower variation, and
sampling uses DPM-Solver++ with $100$ steps.

For dataset~2, the detector contains $45$ longitudinal layers, $16$
azimuthal bins and $9$ radial bins, giving
\(
D_{\mathrm{DS2}}
=
45\times16\times9
=
6{,}480.
\)
For dataset~3, the same physical problem is represented using $45$
longitudinal layers, $50$ azimuthal bins and $18$ radial bins, giving
\[
D_{\mathrm{DS3}}
=
45\times50\times18
=
40{,}500.
\]
Dataset~3 therefore contains $6.25$ times as many latent voxel coordinates
as dataset~2.

The objective is evaluated on physical energy deposits. For dataset~2, a
normalised model output $v$ is mapped back to physical energy through
\[
E
=
\frac{
\sigma(1.9123\,v-12.8564)-\varepsilon_0
}{
1-2\varepsilon_0
}
\,2E_{\mathrm{inc}},
\qquad
\varepsilon_0=10^{-6},
\]
where $\sigma$ is the logistic function; negative sub-threshold values are
clamped to zero. Dataset~3 uses the corresponding checkpoint-specific
transformation with
$\sigma(1.1070\,v-13.4753)$ in place of
$\sigma(1.9123\,v-12.8564)$. Its network also uses a preconditioned
denoising head rather than the noise-prediction head of dataset~2; both
decoders are used according to their released implementations.

Let $p$ denote the resulting per-voxel energy distribution, normalised to
sum to one, and let $q$ denote a fixed target shower pattern. For both
datasets we maximise the cosine similarity
\(
F
=
\frac{\langle p,q\rangle}
     {\lVert p\rVert\,\lVert q\rVert},
\)
where
\[
q_{\ell\alpha r}
\propto
\exp\!\left(
-\frac{(\ell-\ell_0)^2}{2\sigma_\ell^2}
\right)
\exp\!\left(
-\frac{(r-r_0)^2}{2\sigma_r^2}
\right),
\]
with $\ell_0=25$, $r_0=11.625$, $\sigma_\ell=5$ and $\sigma_r=8$, and
uniform dependence on the azimuthal coordinate $\alpha$.

The score lies in $[0,1]$. Dataset~2 has prior median $0.071$ and
interquartile range $0.052$--$0.100$, while prior draws for dataset~3 are
concentrated around $0.05$.

\section{Experimental and implementation details}
\label{app:experimental-setup}

This appendix collects the numerical settings and implementation choices
used in the experiments. These choices specify the discretisations and
bounded-state approximation described in the preceding appendices; they do
not alter the affine working model or the acquisition definitions.

\subsection{Numerical settings}
\label{app:defaults}

\paragraph{Hyperparameter prior and discretisation.}
We place the amplitude prior on the global affine-variation scale
\[
\widehat\sigma_1:=\sqrt D\,\sigma_1,
\qquad
\widehat\sigma_1
\sim
\operatorname{LogNormal}(0,0.35^2).
\]
Thus the corresponding prior on the per-direction scale has median
$\bar\sigma_1=D^{-1/2}$ and log-width $h_1=0.35$. The discrepancy prior is
\[
\sigma_\varepsilon
\sim
\operatorname{LogNormal}(\log 0.03,0.55^2),
\]
in output units and independently of $D$.

The hyperposterior grid of Appendix~\ref{app:atoms} uses $n_g=40$ points
per log-scale axis and half-width $\eta=3.5$ prior standard deviations.
For $\sigma_\varepsilon$, the upper grid boundary is extended when necessary
to include $\log\sigma_\varepsilon=0$; this enlarges the numerical support
but does not modify the prior density. The resulting discrete
hyperposterior is summarised by $K=8$ equal-weight atoms using the
deterministic stratified resampling of Appendix~\ref{app:atoms}.

\paragraph{Sequential acquisition.}
For each hyperparameter atom we use $12$ values of the stationary-family
parameter $\nu$ in Appendix~\ref{app:acquisition-machinery}, together with
the posterior-mean direction whenever it is nonzero. Under the prescribed
off-span schedule, every candidate direction is evaluated at
\(
\|\bm w\|=\sqrt{1-t_N^2}
\)
and scored using the full hyperparameter-marginalised EI. For the
unconstrained-EI comparison, the radial candidate set combines $25$ points
uniform in $\alpha=\|\bm w\|$ with $25$ points uniform in
$t=\sqrt{1-\alpha^2}$. These candidate-grid sizes affect only the numerical
optimisation of the acquisition; all candidates are scored using the full
hyperparameter-marginalised EI.

\paragraph{Reference-resolution schedule.}
The finite-horizon schedule of \eqref{eq:voi-schedule-main} depends only on
the evaluation horizon $N_{\max}$ and the reference-resolution parameter
$\lambda_{\mathrm{ref}}$ and is precomputed before the run, independently of
the fitted variance scales. For SABLE, we use the fixed representatives of
the local-resolvability regimes described in
Appendix~\ref{sec:local-resolvability}: $\lambda_{\mathrm{ref}}=10^{5}$ for
the smooth controlled objectives (controlled ladder, turbulence, and both calorimeter problems),
and $\lambda_{\mathrm{ref}}=10$ for peptides, whose objective is quantised
through discrete sequence decoding.

\paragraph{Bounded-state approximation.}
Unless otherwise stated, recompression uses $\mu=2$ inverse moments and at
most six spectral bins. Leading Gram eigendirections are retained
preferentially, with additional directions selected across the remaining
spectrum as described below. Setting $M=\infty$ disables recompression and
recovers the uncapped observation representation.

\subsection{Bounded-state implementation}
\label{app:bounded-state-defaults}

For bounded-state experiments, the pseudo-observation representation
$(\check X,\check{\bm y},\check{\bm o})$ has $\mathcal O(M)$ rows. The
recompression mechanism is that of
Appendix~\ref{app:reservoir-truncation}; here we give the concrete choices
used in the experiments.

\paragraph{Spectral retention and augmentation.}
At each recompression we preferentially retain the leading Gram
eigendirections and additionally sample directions across the remaining
spectrum. The stratified component uses
\(
J=\left\lfloor\tfrac{2M}{3}\right\rfloor
\)
rank-spaced locations unless stated otherwise, with the number of explicitly retained spectral directions controlled by
the budget $M$.

The discarded spectrum is divided into at most six contiguous spectral
bins. Within each active bin we use $\mu=2$ inverse moments of both the
observation and intercept coordinates,
\[
\Lambda^{-k}\check{\bm y},
\qquad
\Lambda^{-k}\check{\bm o},
\qquad
k=0,1,
\]
as described in Appendix~\ref{app:reservoir-truncation}. The resulting
augmentation rows preserve the zeroth-order statistics in
\eqref{eq:invariants} and provide additional information about the
discarded shrinkage spectrum.

\paragraph{Recompression frequency.}
Let $n_b$ denote the current number of active spectral bins. We allow the
live representation to exceed $M$ by
\[
B
:=
\min\left\{
M,\,
\max\left(
\left\lceil\frac{M}{5}\right\rceil,
4\mu n_b
\right)
\right\}.
\]
Recompression is triggered when the live row count exceeds $M+B$ and
returns the representation to $\mathcal O(M)$ rows. The margin prevents the
augmentation rows introduced by one recompression from immediately
triggering the next.

At each recompression the discarded determinant contribution is accumulated
through $\Delta_{\det}(\bm\theta)$ according to
\eqref{eq:logdet-acc}. The same accumulator is included in all subsequent
hyperposterior evidence evaluations.

\paragraph{Fresh-direction frame.}
The orthogonal-frame ledger of Appendix~\ref{app:frame-ledger} is implemented
using a fixed seed and a structured orthogonal transform consisting of
three DCT-II passes interleaved with random sign diagonals and an input
permutation. Frame directions are generated on demand and consumed
sequentially by initial exploration and off-span components, and degenerate acquisition fallbacks.

The frame requires $\mathcal O(D)$ stored state and each realised frame
direction can be generated in $\mathcal O(D\log D)$ operations. This
ensures that recompression does not cause a direction that has disappeared
from the pseudo-observation state to be treated as fresh again.

When posterior-mean values at previous raw queries are required, we also
retain the design--state matrix $Q$ of
Appendix~\ref{app:frame-ledger}. This optional readout state requires
$\mathcal O(NM)$ memory. It is separate from the
$\mathcal O(MD)$ posterior-fitting state.

The compression budget $M$ used for each bounded-state experiment is reported
with the corresponding benchmark configuration.

\subsection{Complete per-round algorithms}
\label{app:full-algorithms}

The following algorithms collect the fitting and acquisition operations
described above. The true observation count $N$ is used by all
finite-horizon schedules. Pseudo-row counts enter only the bounded-state
linear algebra.

\begin{algorithm}[t]
\caption{\textsc{FitRound}: posterior fit and hyperposterior atoms}
\label{alg:fit}
\begin{algorithmic}[1]
\Require current state
$(\check X,\check{\bm y},\check{\bm o})$;
log-determinant correction $\Delta_{\det}(\cdot)$;
true count $N$;
new raw evaluations;
compression budget $M$;
hyperparameter grid;
atom count $K$
\For{each new raw evaluation $(\bm s,y)$}
    \State append $(\bm s,y)$ to the live observation representation
    \State update the diagonal Gram basis by the arrowhead eigendecomposition
    \State $N\gets N+1$
\EndFor
\If{$M<\infty$ and the live row count exceeds $M+B$}
    \State retain the prescribed leading and stratified Gram eigendirections
    \State form the spectral-bin augmentation rows while preserving
    \eqref{eq:invariants}
    \State update $\Delta_{\det}(\bm\theta)$ on the hyperparameter grid using
    \eqref{eq:logdet-acc}
    \State re-diagonalise the compressed state, with $\mathcal O(M)$ rows
\EndIf
\State evaluate the marginal likelihood on the
$(\log\sigma_1,\log\sigma_\varepsilon)$ grid using the live spectrum and
$\Delta_{\det}(\bm\theta)$
\State obtain $K$ equal-weight hyperparameter atoms by deterministic
stratified resampling
\State compute the atom-specific working-basis quantities required by
Appendix~\ref{app:working-basis}
\State compute the model incumbent $\tau_N$ from the marginal posterior mean
at the evaluated queries
\State \Return fitted state, $N$, $\tau_N$, acquisition working basis, and
atom-specific quantities
\end{algorithmic}
\end{algorithm}

\begin{algorithm}[t]
\caption{Sequential round: marginal EI with prescribed off-span exploration}
\label{alg:round-ei}
\begin{algorithmic}[1]
\Require output of Algorithm~\ref{alg:fit};
off-span norm $t_N$ from \eqref{eq:voi-schedule-main};
stationary-family candidate grid
\State $\alpha_N\gets\sqrt{1-t_N^2}$
\State construct the candidate directions of
Appendix~\ref{app:acquisition-machinery} for every hyperparameter atom
\State append each nonzero posterior-mean direction
\State evaluate every candidate at $\|\bm w\|=\alpha_N$ under
the marginal EI \eqref{eq:marginal-ei}
\State let $\bm w^\star$ denote the highest-scoring candidate
\If{$\alpha_N=0$ or no nonzero in-span candidate exists}
    \State $\bm w^\star\gets\bm0$
\EndIf
\If{$t_N>0$}
    \State obtain the next unused off-span direction $\bm e$ from the
    orthogonal-frame ledger
\Else
    \State set the off-span component to zero
\EndIf
\State \Return
$\bm s_{N+1}=V\bm w^\star+t_N\bm e$
\end{algorithmic}
\end{algorithm}

\section{Additional experimental results}
\label{app:experiments}

We now provide the evaluation protocol shared by the experiments of
Section~\ref{sec:experiments}, the three variance-hyperparameter treatments,
the complete schedule-by-treatment grid with its per-step diagnostics and
noisy twin, the long-horizon endpoint values, further detail on the
compression experiment, and the configuration of every baseline. Objective definitions are given in
Appendices~\ref{app:zonal-bench} and~\ref{app:benchmark-definitions}, and
numerical settings of SABLE in Appendix~\ref{app:experimental-setup}.

\subsection{Evaluation protocol and diagnostics}
\label{app:evaluation-protocol}

Every configuration is repeated over ten independent seeds. Methods compared
within a seed share the same objective instance: on the controlled objectives
this is the same function draw, and on the generative-model problems, whose
objectives are deterministic, the seed pairs the methods' own random streams.
Where observations are noisy, the noise is drawn from a seed-determined
stream, and both best-so-far and recommendations are scored on the noiseless
objective. Curves report the mean across
seeds with bands covering the $5$th to $95$th percentiles. Tables report
means; bold marks the column best together with every entry within two
standard deviations of it (controlled experiments) or within two standard
errors of it (generative-model problems), as stated in each caption.

The budgets are $N=D=1000$ for the schedule-by-treatment experiment of
Table~\ref{tab:c2-degree-one-main}, and $N=D=10{,}000$ for the compression
and controlled long-horizon experiments of
Figures~\ref{fig:c3-rank} and~\ref{fig:c4-rank-lambda}, with the uncapped posterior read at
$N=5{,}000$ (Appendix~\ref{app:low-rank-experiments}), $N=3000$ for the
generative-model problems, and up to $N=D=20{,}000$ for the long-running
Calorimeter DS3 experiment. The primary metric is the best objective value
observed up to evaluation $N$.

\paragraph{Attained value.}
Tables on the controlled objectives report the attained value, the better of
the best observed point and the posterior-mean recommendation
$\hat{\bm s}$ of Appendix~\ref{app:readout}, scored with one further
evaluation. A run's recommendation is a legitimate output, and best-so-far
alone understates any method whose recommendation beats every point it
queried; on the affine control, for instance, schedule-free maximum
likelihood identifies the optimal direction while querying almost entirely
off-span. Model-free baselines expose no recommendation and report their
best observed point, so this convention grants the model-based methods one
evaluation the model-free ones cannot use.

\paragraph{Degree-one recovery.}
Because the controlled objectives have a known linear component, we measure
its recovery by the cosine similarity $\cos(\hat{\bm s},\bm v_\star)$
between the posterior-mean recommendation and the true degree-one
coefficient direction $\bm v_\star$. This separates recovery of the global
affine structure from realised optimisation performance.

\paragraph{Off-span norm.}
For each selected query, $t=\|P_{\mathcal V_N^\perp}\bm s\|$ records the
spherical norm allocated outside the current design span. Values near one
correspond to almost purely off-span queries, whereas values near zero
concentrate the query within the design span.

\paragraph{Fitted variance scales.}
The per-step diagnostics also report the fitted global amplitude
$\hat a=\sqrt D\,\hat\sigma_1$ and discrepancy scale $\hat\sigma_\varepsilon$,
refitted from the run's data at every $25$th step under each treatment.
They are diagnostics rather than optimisation metrics: they
expose the signal--discrepancy decomposition underlying each treatment's
off-span allocation.

\subsection{Variance-hyperparameter treatments}
\label{app:hyperparameter-details}

We compare three treatments of the affine variance scales
$(\sigma_1,\sigma_\varepsilon)$. All three use the same acquisition,
candidate directions, off-span complement, and recommendation readout, so a
difference between rows in the tables below is attributable to the
hyperparameter fit alone.
MLE maximises the profile log marginal likelihood over the amplitude and
discrepancy scales, with the intercept profiled out at its conditional
optimum and no prior on either scale. The fitted pair is supplied to the
acquisition as a single hyperparameter atom.
MAP uses the hyperparameter prior and discretisation of
Appendix~\ref{app:defaults} but replaces the hyperposterior by its mode, again
supplied as a single atom, with the intercept at its conditional mode.
The main method summarises the discretised hyperposterior by $K=8$
equal-weight atoms using the stratified resampling of
Appendix~\ref{app:atoms} and averages the acquisition over these atoms
(Section~\ref{sec:acquisition}).

\subsection{Fitted uncertainty and off-span exploration}
\label{sec:exp-hyperparameters}

We report the full ablation crossing the three variance-scale treatments above
with six off-span exploration rules: no schedule; a schedule driven by the
model's fitted signal-to-discrepancy ratio; and the fixed-reference ladder
$\lambda_{\mathrm{ref}}\in{10^{0},10^{1},10^{3},10^{5}}$.
For the model-derived schedule, we replace the fixed reference parameter at
each step by
$$
\lambda^{\mathrm{model}}_N
:=
\frac{1}{K}\sum_{k=1}^K
\frac{\sigma_{1,k}^2}{\sigma_{\varepsilon,k}^2},
$$
the expected per-direction slope-to-discrepancy variance ratio over the
hyperparameter atoms ($K=1$ for point estimates), and use it in the same
finite-horizon recursion for $t_N$. This ablation tests whether fitted
uncertainty can itself control off-span exploration; it is not part of SABLE.
All configurations are evaluated on the four controlled objectives at
$N=D=1000$ using the uncapped posterior and ten paired seeds.
Table~\ref{tab:c2-degree-one-full} reports endpoint values, while
Figures~\ref{fig:c1c2_noisefree_marg}--\ref{fig:c1c2_noisefree_mle} show the
per-step diagnostics of Appendix~\ref{app:evaluation-protocol}, one figure
per variance-scale treatment: best-so-far value, recommendation value
$F(\hat{\bm s})$, degree-one recovery, off-span norm $t$, and the fitted
amplitude and discrepancy scales.

\paragraph{Without a schedule, the treatments fail in opposite directions.}
The $t$ row makes the mechanism of
Table~\ref{tab:c2-degree-one-main} explicit.
Maximum likelihood drives the fitted discrepancy scale to the numerical
floor on these noise-free objectives, so directions in $\mathcal V_N^\perp$,
which retain their prior slope variance, appear especially valuable to EI:
its mean off-span norm over the last $100$ steps is $0.66$ on the affine
control and $0.74$--$0.83$ on the non-affine ladder. This is not in itself a
failure of inference: on the affine control MLE recovers the degree-one
direction exactly, and its recommendation attains the maximum. It leaves
almost no query norm for exploiting design-span directions, however, and on
the non-affine objectives the slope estimate recovers none of the true
degree-one component. MAP and marginalisation avoid this extreme
decomposition and behave almost identically throughout, indicating that
prior regularisation matters more here than propagating the remaining
hyperparameter uncertainty. Where the objective contains higher-order
structure omitted by the affine model, that structure is absorbed into the
discrepancy scale (bottom row), conditioning removes relatively little
uncertainty along the data-informed directions, and unconstrained EI drives
the off-span norm to $0.015$--$0.02$: the search is confined to the span
established early in the run.

\paragraph{The fixed ladder.}
Prescribing $t$ separates the two roles without changing the posterior or
the EI criterion. With MAP or marginalisation, every rung of the ladder
improves degree-one recovery over the schedule-free run on the non-affine
objectives, and the attained value increases monotonically with
$\lambda_{\mathrm{ref}}$ up to $10^{3}$, beyond which $10^{3}$ and $10^{5}$
are statistically indistinguishable, consistent with the diminishing
sensitivity derived in Appendix~\ref{sec:local-resolvability}. The small
rungs $10^{0}$ and $10^{1}$ keep the off-span norm high for most of the
horizon ($t$ row) and pay for it in exploitation. The schedule cannot repair a
poor fit: under maximum likelihood it restores the affine control but
recovers no degree-one component on any non-affine objective.

\paragraph{The model-derived reference.}
The ``$\lambda_{\mathrm{ref}}$ from model'' rows drive the same finite-horizon
recursion with the fitted signal--discrepancy ratio in place of the fixed reference. On
the affine control, where the model is well specified, this works as well as
the best fixed rung. On the non-affine ladder, the absorbed higher-order
structure inflates the discrepancy scale, the fitted ratio is small, and the
schedule keeps $t$ between the $10^{0}$ and $10^{1}$ trajectories for most
of the horizon: degree-one recovery is as good as the best fixed rung, but
the attained value trails $\lambda_{\mathrm{ref}}=10^{3}$ by $0.10$--$0.20$.
Misspecification thus makes the fitted ratio an unreliable control for
exploration, which is why SABLE prescribes the reference externally.

\paragraph{Noisy observations.}
Table~\ref{tab:c2-degree-one-noisy} and
Figures~\ref{fig:c1c2_noisy_marg}--\ref{fig:c1c2_noisy_mle} repeat the grid
with independent Gaussian observation noise of standard deviation
$\sigma_\varepsilon=0.05$, all else equal.
At $D=1000$, this noise is substantial relative to the signal from probing an individual direction, making fine directional changes difficult to resolve.
All treatments degrade strongly at
this budget, and the best fixed rungs move down to
$\lambda_{\mathrm{ref}}\in\{10^{0},10^{1}\}$, in line with the unresolved
regime of Appendix~\ref{sec:local-resolvability}: fine off-span probes are
now obscured by noise, and the schedule must commit larger perturbations to
resolve a fresh direction. Under noise the fitted discrepancy scale is
dominated by genuine observation noise rather than by misspecification, and
the model-derived reference becomes competitive with the best fixed rungs;
this does not extend to the noise-free case above, so we do not rely on it.
Maximum likelihood again recovers no degree-one component on any objective,
its attained values on the right being carried by the best observed point
rather than by a recommendation, whose degree-one recovery is nil.

\begin{table}[t]
\centering
\caption{\textbf{Complete schedule-by-treatment grid, noise-free}, at $N=D=1000$ with the
uncapped posterior. \emph{Left:} recovery of the known degree-one component,
$\cos(\hat{\bm s},\bm v_\star)$. \emph{Right:} attained value, the better of the best
observed point and the posterior-mean recommendation. Means over ten paired seeds.
Bold: the column best and every entry within two standard deviations of it.
Table~\ref{tab:c2-degree-one-main} shows the no-schedule, model-derived, and
$\lambda_{\mathrm{ref}}=10^{5}$ rows.}
\label{tab:c2-degree-one-full}
\footnotesize
\begin{tabular}{llcccccccc}
\toprule
& & \multicolumn{4}{c}{$\cos(\hat{\bm s},\bm v_\star)$}
  & \multicolumn{4}{c}{attained value} \\
\cmidrule(lr){3-6}\cmidrule(lr){7-10}
& & affine & mild & mod. & severe & affine & mild & mod. & severe \\
\midrule
 \multirow{3}{*}{no schedule} & MLE          & \textbf{1.00} & -0.00 & -0.01 & -0.01 & \textbf{1.00} & 0.48 & 0.32 & 0.16 \\
  & MAP          & 0.78 & 0.44 & 0.30 & 0.16 & 0.78 & 0.60 & 0.54 & 0.50 \\
  & Marg.        & 0.77 & 0.44 & 0.29 & 0.19 & 0.77 & 0.59 & 0.54 & 0.49 \\
\midrule
 \multirow{3}{*}{$\lambda_{\mathrm{ref}}$ from model} & MLE          & 0.13 & 0.00 & 0.00 & 0.01 & 0.49 & 0.49 & 0.36 & 0.23 \\
  & MAP          & 0.98 & \textbf{0.59} & \textbf{0.41} & \textbf{0.23} & 0.98 & 0.67 & 0.55 & 0.45 \\
  & Marg.        & 0.98 & \textbf{0.59} & \textbf{0.42} & \textbf{0.25} & 0.98 & 0.66 & 0.56 & 0.45 \\
\midrule
 \multirow{3}{*}{$\lambda_{\mathrm{ref}}=10^{0}$} & MLE          & 0.22 & -0.01 & 0.02 & 0.00 & 0.87 & 0.55 & 0.39 & 0.24 \\
  & MAP          & 0.86 & \textbf{0.55} & \textbf{0.38} & \textbf{0.23} & 0.86 & 0.61 & 0.50 & 0.43 \\
  & Marg.        & 0.87 & \textbf{0.55} & \textbf{0.39} & \textbf{0.23} & 0.87 & 0.60 & 0.52 & 0.41 \\
\midrule
 \multirow{3}{*}{$\lambda_{\mathrm{ref}}=10^{1}$} & MLE          & 0.12 & 0.00 & -0.01 & 0.00 & 0.96 & 0.60 & 0.42 & 0.29 \\
  & MAP          & 0.96 & \textbf{0.59} & \textbf{0.41} & \textbf{0.24} & 0.96 & 0.70 & 0.61 & 0.54 \\
  & Marg.        & 0.97 & \textbf{0.58} & \textbf{0.42} & \textbf{0.24} & 0.97 & 0.69 & 0.62 & 0.54 \\
\midrule
 \multirow{3}{*}{$\lambda_{\mathrm{ref}}=10^{3}$} & MLE          & \textbf{1.00} & -0.02 & -0.00 & -0.01 & \textbf{1.00} & 0.58 & 0.40 & 0.26 \\
  & MAP          & 0.97 & \textbf{0.58} & \textbf{0.40} & \textbf{0.24} & 0.98 & \textbf{0.75} & \textbf{0.68} & \textbf{0.65} \\
  & Marg.        & 0.98 & \textbf{0.59} & \textbf{0.41} & \textbf{0.25} & 0.98 & \textbf{0.76} & \textbf{0.70} & \textbf{0.65} \\
\midrule
 \multirow{3}{*}{$\lambda_{\mathrm{ref}}=10^{5}$} & MLE          & 0.90 & -0.00 & 0.01 & 0.00 & \textbf{1.00} & 0.55 & 0.37 & 0.24 \\
  & MAP          & \textbf{0.99} & \textbf{0.56} & \textbf{0.38} & \textbf{0.21} & \textbf{0.99} & \textbf{0.75} & \textbf{0.68} & \textbf{0.63} \\
  & Marg.        & \textbf{0.99} & \textbf{0.57} & \textbf{0.38} & \textbf{0.23} & \textbf{0.99} & \textbf{0.74} & \textbf{0.67} & \textbf{0.63} \\
\bottomrule
\end{tabular}
\end{table}

\begin{table}[t]
\centering
\caption{\textbf{Complete schedule-by-treatment grid, noisy observations}
($\sigma_\varepsilon=0.05$), at $N=D=1000$ with the uncapped posterior; layout and
bold rule as in Table~\ref{tab:c2-degree-one-full}. Best-so-far and recommendations
are scored on the noiseless objective.}
\label{tab:c2-degree-one-noisy}
\footnotesize
\begin{tabular}{llcccccccc}
\toprule
& & \multicolumn{4}{c}{$\cos(\hat{\bm s},\bm v_\star)$}
  & \multicolumn{4}{c}{attained value} \\
\cmidrule(lr){3-6}\cmidrule(lr){7-10}
& & affine & mild & mod. & severe & affine & mild & mod. & severe \\
\midrule
 \multirow{3}{*}{no schedule} & MLE          & 0.01 & 0.02 & 0.01 & 0.01 & 0.37 & 0.26 & 0.19 & 0.13 \\
  & MAP          & 0.14 & 0.10 & 0.07 & 0.06 & 0.15 & 0.15 & 0.15 & 0.16 \\
  & Marg.        & 0.16 & 0.11 & 0.08 & \textbf{0.05} & 0.16 & 0.16 & 0.16 & 0.15 \\
\midrule
 \multirow{3}{*}{$\lambda_{\mathrm{ref}}$ from model} & MLE          & -0.00 & -0.00 & -0.00 & -0.01 & \textbf{0.48} & \textbf{0.33} & \textbf{0.24} & 0.17 \\
  & MAP          & \textbf{0.36} & \textbf{0.27} & \textbf{0.19} & \textbf{0.11} & 0.36 & \textbf{0.30} & \textbf{0.25} & \textbf{0.21} \\
  & Marg.        & \textbf{0.36} & \textbf{0.26} & \textbf{0.18} & \textbf{0.11} & 0.36 & \textbf{0.29} & \textbf{0.24} & \textbf{0.22} \\
\midrule
 \multirow{3}{*}{$\lambda_{\mathrm{ref}}=10^{0}$} & MLE          & -0.02 & 0.01 & -0.00 & -0.00 & \textbf{0.44} & \textbf{0.32} & \textbf{0.23} & 0.15 \\
  & MAP          & \textbf{0.37} & \textbf{0.26} & \textbf{0.20} & \textbf{0.12} & 0.38 & \textbf{0.31} & \textbf{0.28} & \textbf{0.23} \\
  & Marg.        & \textbf{0.37} & \textbf{0.28} & \textbf{0.20} & \textbf{0.12} & 0.38 & \textbf{0.31} & \textbf{0.26} & \textbf{0.22} \\
\midrule
 \multirow{3}{*}{$\lambda_{\mathrm{ref}}=10^{1}$} & MLE          & -0.02 & -0.00 & -0.01 & -0.00 & \textbf{0.48} & \textbf{0.34} & \textbf{0.26} & 0.18 \\
  & MAP          & \textbf{0.32} & \textbf{0.24} & \textbf{0.18} & \textbf{0.10} & 0.33 & \textbf{0.31} & \textbf{0.29} & \textbf{0.26} \\
  & Marg.        & \textbf{0.33} & \textbf{0.25} & \textbf{0.18} & \textbf{0.11} & 0.34 & \textbf{0.31} & \textbf{0.28} & \textbf{0.27} \\
\midrule
 \multirow{3}{*}{$\lambda_{\mathrm{ref}}=10^{3}$} & MLE          & -0.02 & 0.03 & \textbf{0.02} & 0.00 & \textbf{0.47} & \textbf{0.32} & \textbf{0.22} & 0.16 \\
  & MAP          & 0.17 & 0.14 & 0.10 & 0.05 & 0.18 & 0.20 & 0.21 & \textbf{0.19} \\
  & Marg.        & 0.19 & 0.14 & 0.09 & 0.06 & 0.20 & 0.21 & 0.20 & 0.21 \\
\midrule
 \multirow{3}{*}{$\lambda_{\mathrm{ref}}=10^{5}$} & MLE          & -0.01 & -0.00 & -0.00 & 0.01 & \textbf{0.46} & \textbf{0.32} & 0.23 & 0.17 \\
  & MAP          & 0.14 & 0.11 & 0.10 & 0.05 & 0.15 & 0.17 & 0.17 & 0.17 \\
  & Marg.        & 0.15 & 0.11 & 0.08 & 0.04 & 0.16 & 0.16 & 0.16 & 0.17 \\
\bottomrule
\end{tabular}
\end{table}

\begin{figure}[t]
    \centering
    \includegraphics[width=\textwidth]
    {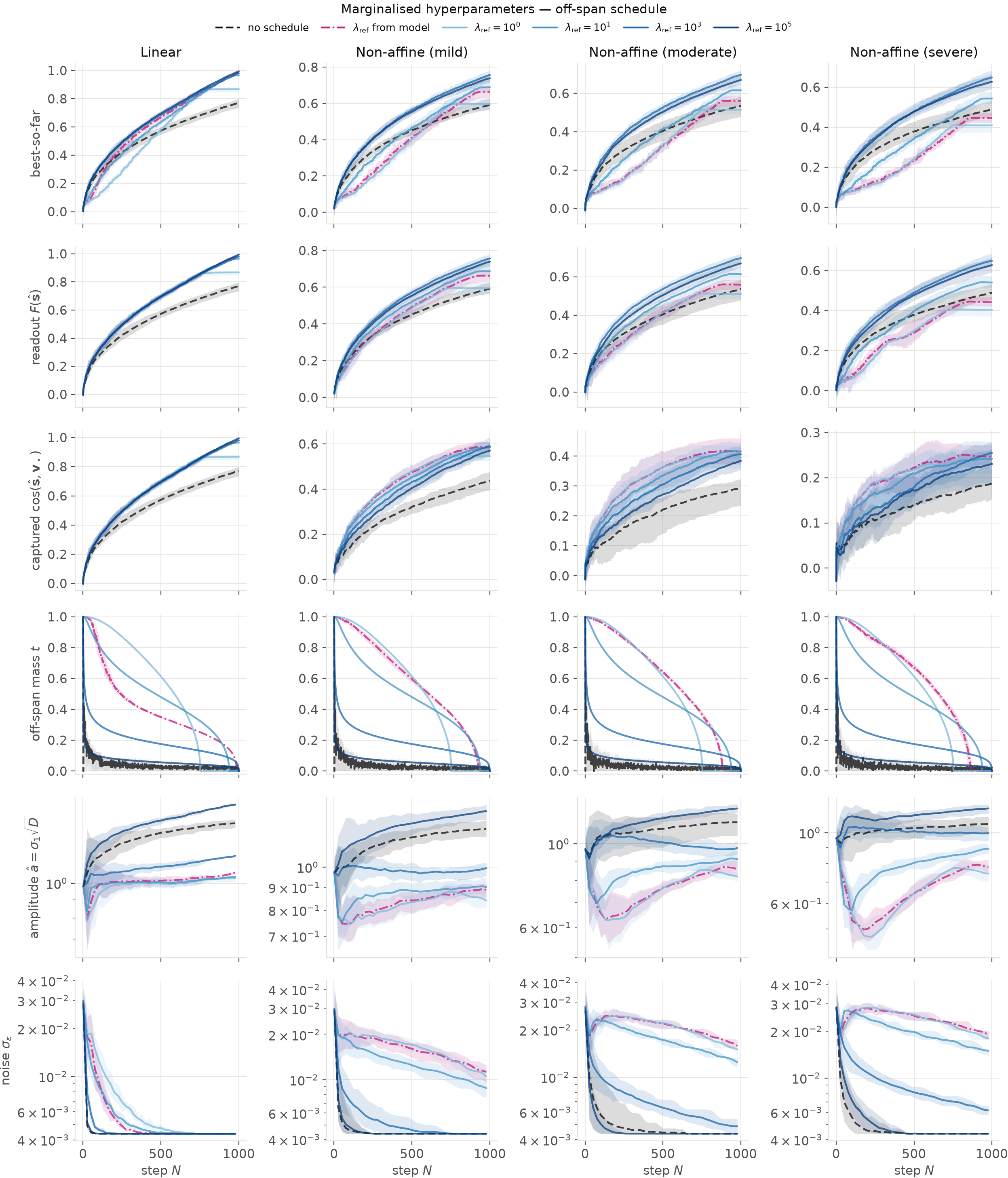}
    \caption{
    \textbf{Per-step diagnostics, marginalised variance scales, noise-free.}
    $N=D=1000$, uncapped posterior, ten paired seeds (mean and $5$th--$95$th
    percentiles). Rows: best-so-far, value of the posterior-mean recommendation
    $F(\hat{\bm s})$, degree-one recovery $\cos(\hat{\bm s},\bm v_\star)$,
    off-span norm $t$ of the selected query, fitted amplitude
    $\hat a=\sqrt D\,\hat\sigma_1$, and fitted discrepancy scale
    $\hat\sigma_\varepsilon$. Columns: the affine control and the mild,
    moderate, and severe non-affine objectives. Curves: no schedule (dashed),
    the model-derived reference (dash-dotted), and the fixed ladder
    $\lambda_{\mathrm{ref}}\in\{10^{0},10^{1},10^{3},10^{5}\}$.
    }
    \label{fig:c1c2_noisefree_marg}
\end{figure}

\begin{figure}[t]
    \centering
    \includegraphics[width=\textwidth]
    {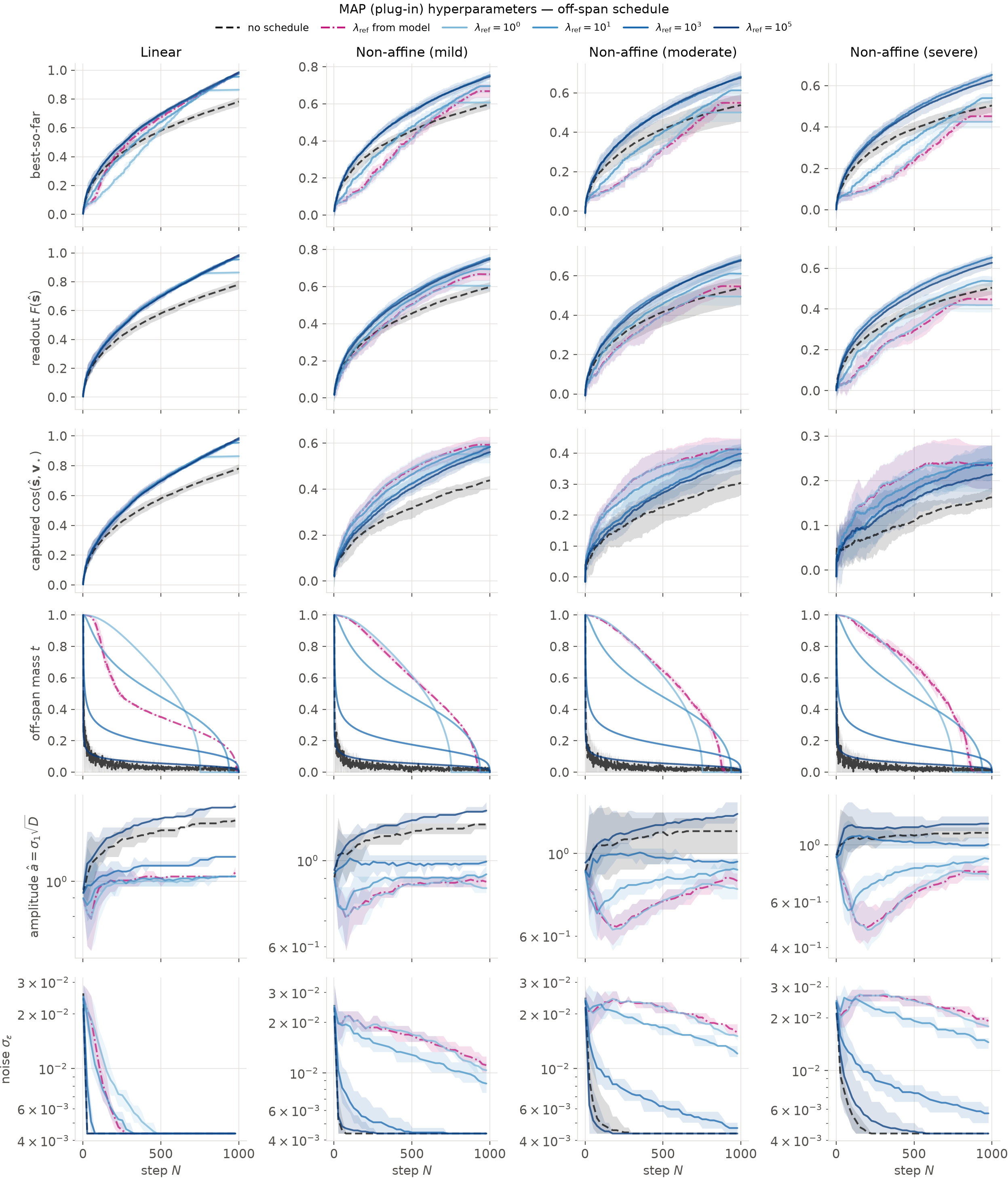}
    \caption{
    \textbf{Per-step diagnostics, MAP variance scales, noise-free.}
    Layout as in Figure~\ref{fig:c1c2_noisefree_marg}.
    }
    \label{fig:c1c2_noisefree_map}
\end{figure}

\begin{figure}[t]
    \centering
    \includegraphics[width=\textwidth]
    {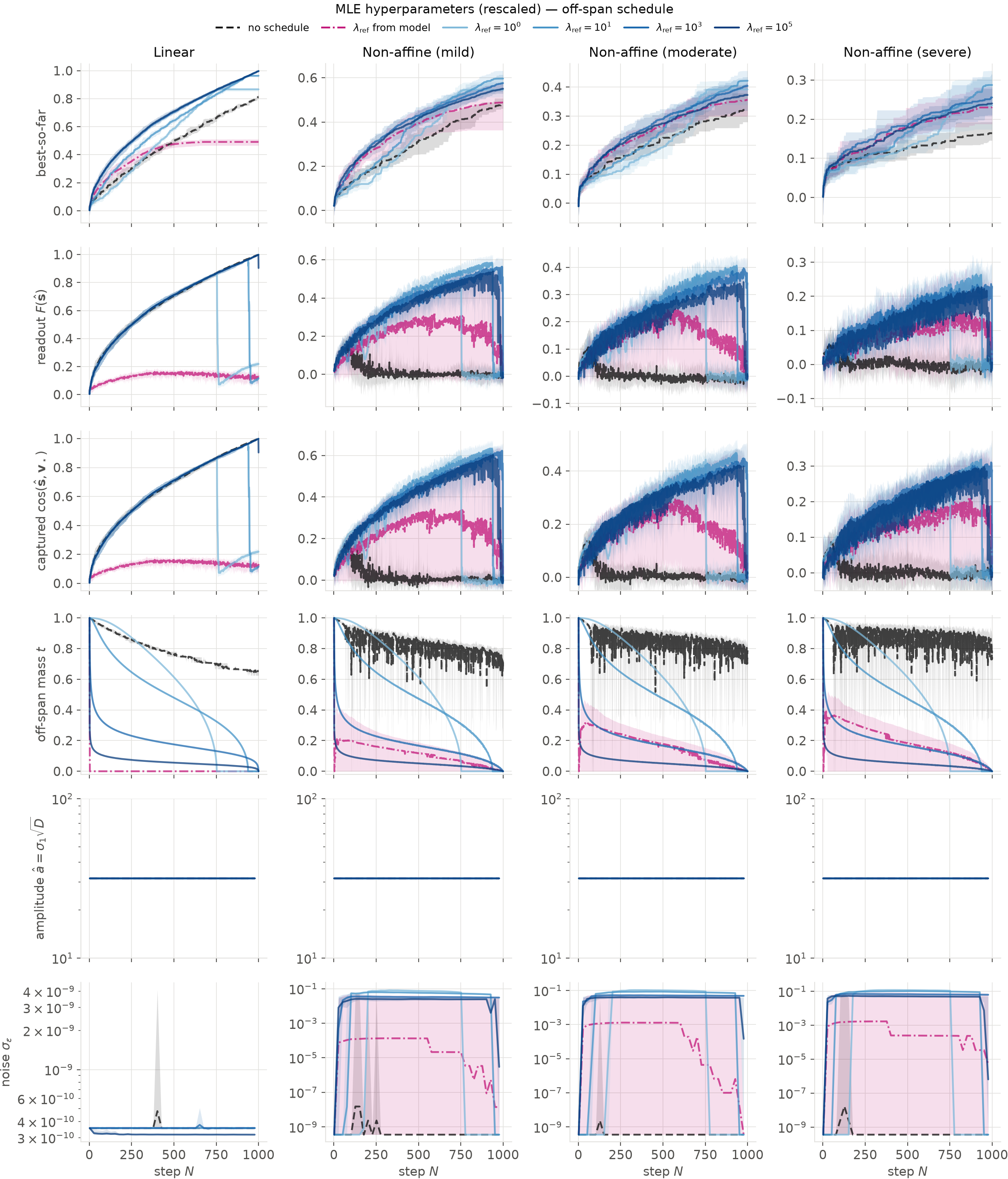}
    \caption{
    \textbf{Per-step diagnostics, maximum-likelihood variance scales,
    noise-free.} Layout as in Figure~\ref{fig:c1c2_noisefree_marg}. Without a
    schedule the fitted discrepancy scale collapses and the off-span norm
    stays near one for the whole run.
    }
    \label{fig:c1c2_noisefree_mle}
\end{figure}

\begin{figure}[t]
    \centering
    \includegraphics[width=\textwidth]
    {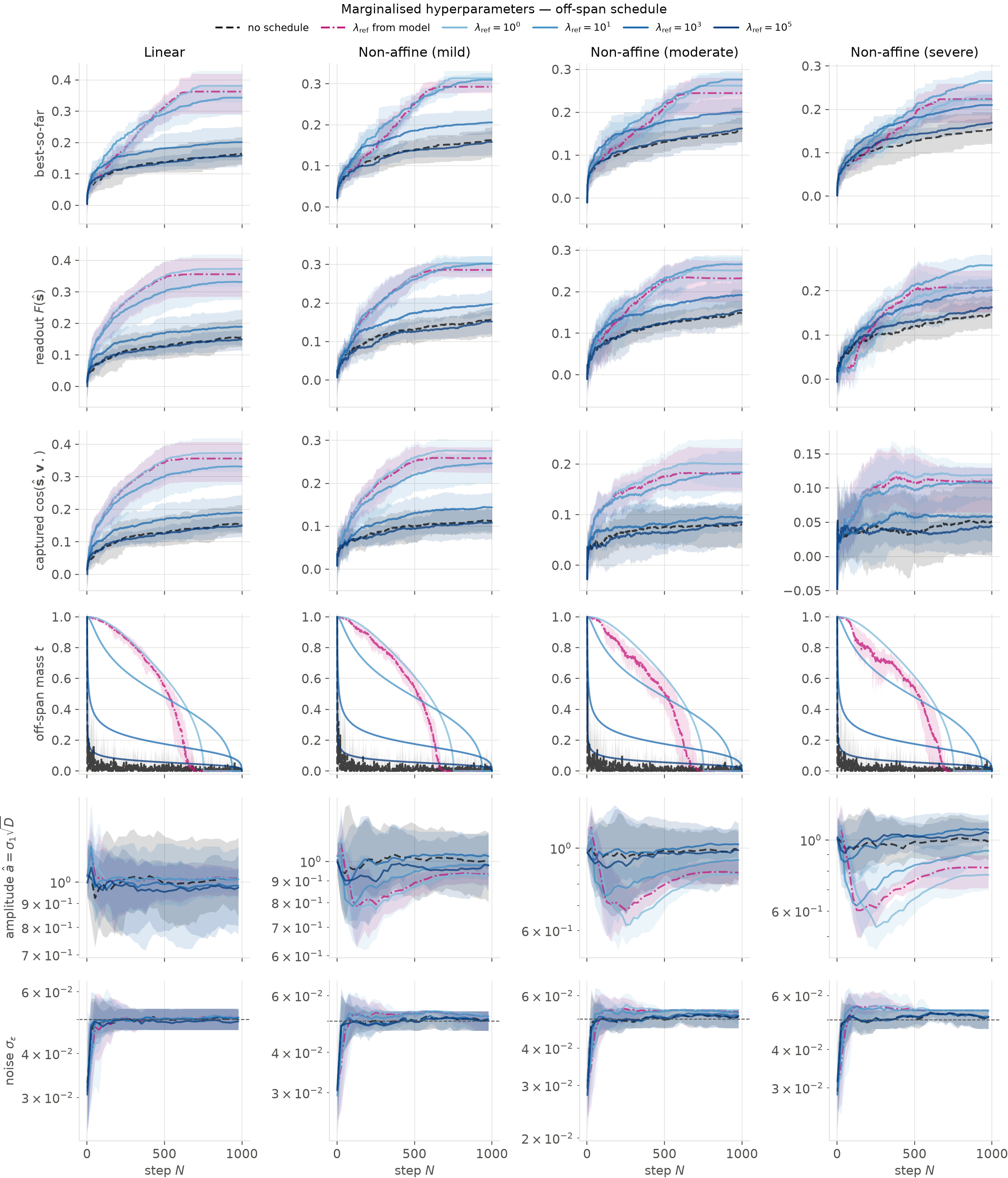}
    \caption{
    \textbf{Per-step diagnostics, marginalised variance scales, noisy
    observations} ($\sigma_\varepsilon=0.05$). Layout as in
    Figure~\ref{fig:c1c2_noisefree_marg}; best-so-far and the recommendation
    are scored on the noiseless objective.
    }
    \label{fig:c1c2_noisy_marg}
\end{figure}

\begin{figure}[t]
    \centering
    \includegraphics[width=\textwidth]
    {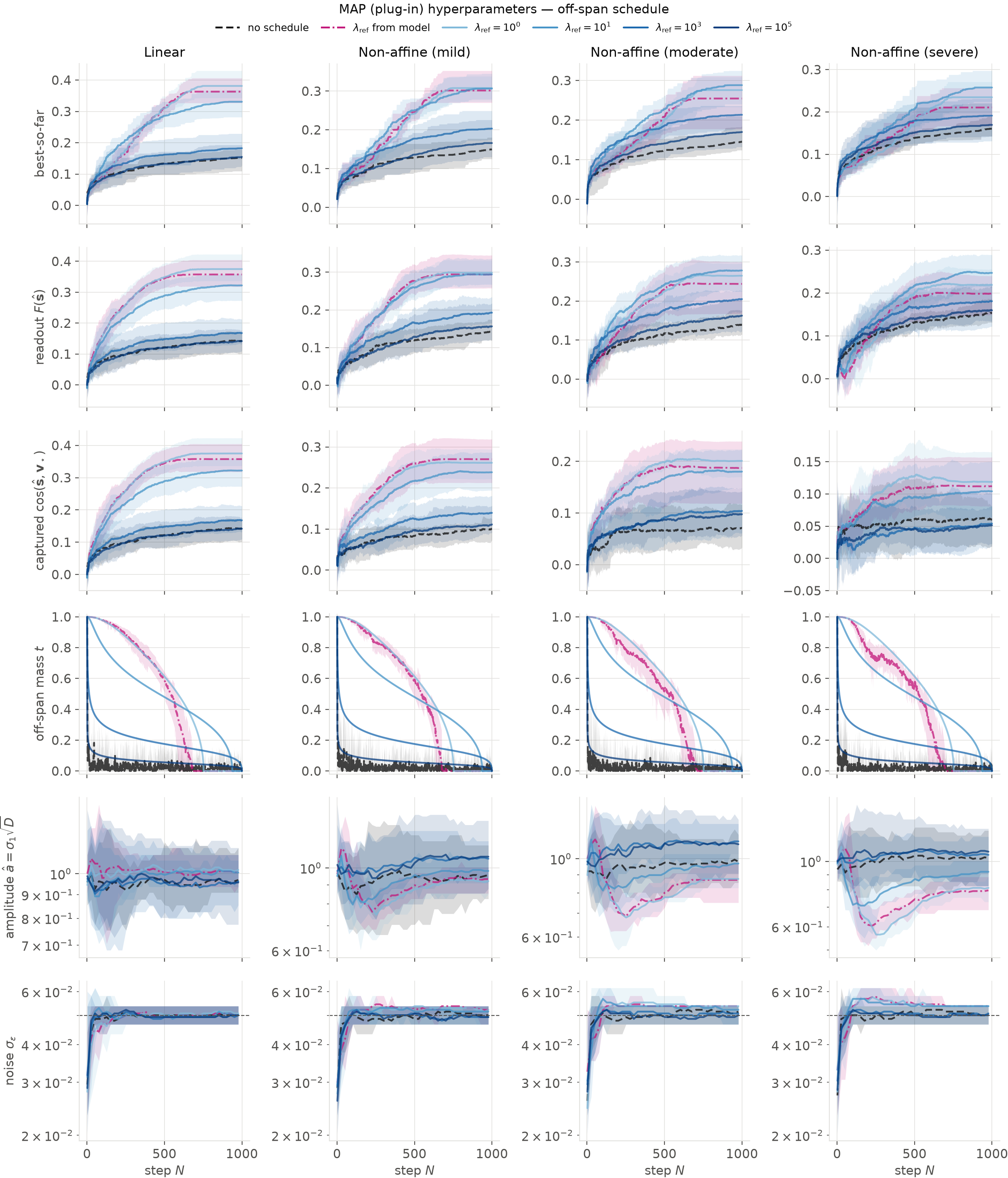}
    \caption{
    \textbf{Per-step diagnostics, MAP variance scales, noisy observations}
    ($\sigma_\varepsilon=0.05$). Layout as in
    Figure~\ref{fig:c1c2_noisefree_marg}.
    }
    \label{fig:c1c2_noisy_map}
\end{figure}

\begin{figure}[t]
    \centering
    \includegraphics[width=\textwidth]
    {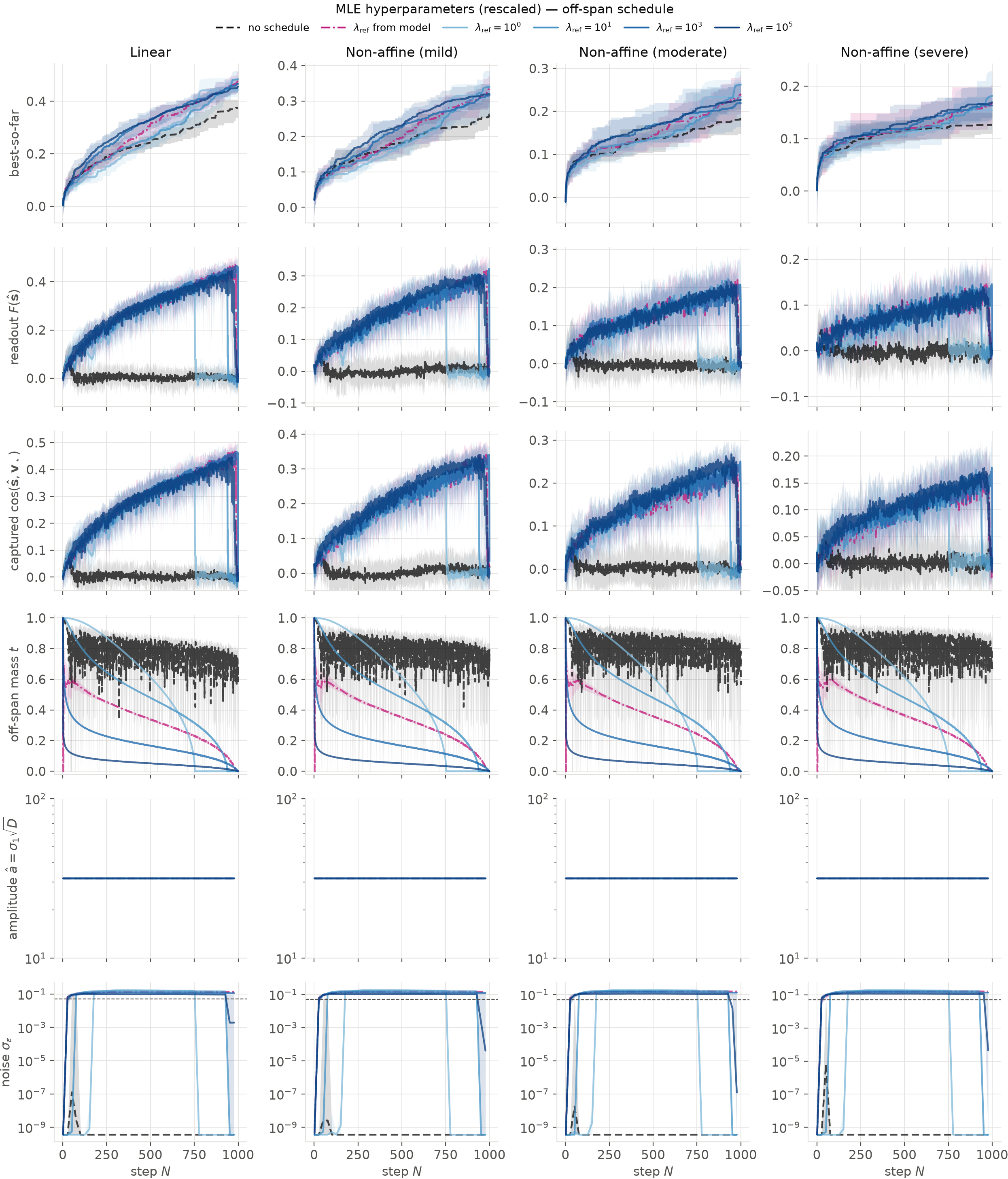}
    \caption{
    \textbf{Per-step diagnostics, maximum-likelihood variance scales, noisy
    observations} ($\sigma_\varepsilon=0.05$). Layout as in
    Figure~\ref{fig:c1c2_noisefree_marg}.
    }
    \label{fig:c1c2_noisy_mle}
\end{figure}

\subsection{Endpoint values at $D=N=10{,}000$}
\label{app:c4-endpoint}

Table~\ref{tab:c4-schedule} tabulates the endpoints of the runs shown in
Figure~\ref{fig:c4-rank-lambda}, extended to the full reference ladder
$\lambda_{\mathrm{ref}}\in\{10^{1},10^{3},10^{5}\}$ at $M=300$ with
marginalised variance scales, together with the no-schedule ablation and the
comparisons. Thompson sampling and the two design-then-fit baselines use the
same posterior as our arms. The endpoint is the attained value of
Appendix~\ref{app:evaluation-protocol}, which the model-free baselines
cannot use, since they expose no recommendation.

\begin{table}[ht]
\centering
\caption{\textbf{Attained value at $D=N=10{,}000$, $M=300$}: the better of the best
observed point and the posterior-mean recommendation, mean and $5$--$95\%$ over ten
paired seeds. Bold: the column best and all within two standard deviations. The
model-free baselines expose no recommendation and report their best observed point.
Prescribing the off-span schedule beats every comparison on every non-affine objective.}
\label{tab:c4-schedule}
\footnotesize
\setlength{\tabcolsep}{4pt}%
\begin{tabular}{lcccc}
\toprule
& affine & mild & mod. & severe \\
\midrule
\multicolumn{5}{l}{\emph{Comparisons}} \\
Random design + posterior argmax & 0.93 {\scriptsize [0.92, 0.93]} & 0.46 {\scriptsize [0.45, 0.47]} & 0.31 {\scriptsize [0.29, 0.33]} & 0.17 {\scriptsize [0.16, 0.19]} \\
Orthogonal design + posterior argmax & \textbf{1.00} {\scriptsize [1.00, 1.00]} & 0.45 {\scriptsize [0.42, 0.47]} & 0.28 {\scriptsize [0.25, 0.31]} & 0.13 {\scriptsize [0.12, 0.14]} \\
sep-CMA-ES & 0.60 {\scriptsize [0.59, 0.61]} & 0.50 {\scriptsize [0.49, 0.51]} & 0.48 {\scriptsize [0.46, 0.49]} & 0.45 {\scriptsize [0.44, 0.46]} \\
Thompson sampling & 0.79 {\scriptsize [0.78, 0.79]} & 0.49 {\scriptsize [0.48, 0.51]} & 0.35 {\scriptsize [0.30, 0.38]} & 0.20 {\scriptsize [0.11, 0.27]} \\
\midrule
\multicolumn{5}{l}{\emph{Ours (marginalised, $M=300$)}} \\
no schedule & 0.53 {\scriptsize [0.52, 0.54]} & 0.44 {\scriptsize [0.44, 0.45]} & 0.40 {\scriptsize [0.38, 0.42]} & 0.36 {\scriptsize [0.35, 0.38]} \\
SABLE, $\lambda_{\mathrm{ref}} = 10^{1}$ & 0.95 {\scriptsize [0.94, 0.95]} & 0.64 {\scriptsize [0.63, 0.65]} & 0.57 {\scriptsize [0.57, 0.58]} & 0.50 {\scriptsize [0.49, 0.51]} \\
SABLE, $\lambda_{\mathrm{ref}} = 10^{3}$ & 0.96 {\scriptsize [0.95, 0.97]} & \textbf{0.69} {\scriptsize [0.68, 0.70]} & \textbf{0.65} {\scriptsize [0.64, 0.66]} & 0.61 {\scriptsize [0.60, 0.62]} \\
SABLE, $\lambda_{\mathrm{ref}} = 10^{5}$ & 0.97 {\scriptsize [0.96, 0.97]} & \textbf{0.70} {\scriptsize [0.70, 0.72]} & \textbf{0.66} {\scriptsize [0.65, 0.67]} & \textbf{0.62} {\scriptsize [0.61, 0.63]} \\
\bottomrule
\end{tabular}
\end{table}

\subsection{Low-rank posterior experiments}
\label{app:low-rank-experiments}

The bounded-state experiments compare compression budgets
$M\in\{10,100,300,500\}$ with the uncapped posterior. All other components
of the optimiser are held fixed, including
$\lambda_{\mathrm{ref}}=10^{3}$ and the marginalised treatment of the variance scales
(Appendix~\ref{app:atoms}).

The compression budget controls only how much of the accumulated data-induced
posterior update is retained. It does not restrict candidate queries to an
$M$-dimensional search subspace. The $M=10$ configuration therefore acts as
an aggressive-compression ablation, while $M=100$, $300$, and $500$ test
whether acquisition-relevant posterior information saturates at a state size
substantially below the accumulated evaluation count.

The global maximum drawn in Figure~\ref{fig:c3-rank} is the mean over the ten
seed-paired draws of each instance's maximum: exact for the affine control, and for the
Mercer draws a lower bound from gradient ascent on the sphere (16 random restarts of 600
Adam steps at learning rate $0.02$), so the headroom it shows is, if anything,
understated.

The uncapped posterior runs on the same seed-paired draws and is read at
$N=5{,}000$, the largest budget all ten of its seeds have reached; every full-rank
comparison in Figure~\ref{fig:c3-rank} is made at that budget, while the compressed
arms continue to $N=D=10{,}000$. Its per-decision cost at $N=D$
(Figure~\ref{fig:c3-rank}, right) is what rules out running it to the full budget in
practice.

\subsection{Scientific latent-space optimisation against evaluation count}
\label{app:c5-steps}

Figure~\ref{fig:c5-domains-steps} shows the runs behind
Figure~\ref{fig:c5-domains} against evaluation count rather than wall clock.
Standard BO and Linear BO are cut at the last evaluation every seed completed
within $48$\,h, the reading behind the $^{\dagger}$ entries of the table.

\begin{figure}[t]
    \centering
    \includegraphics[width=\textwidth]{figures/c5_domains_sable_nolegend_legend_light.png}
    \vspace{-1.0em}\\
    \includegraphics[width=\textwidth]{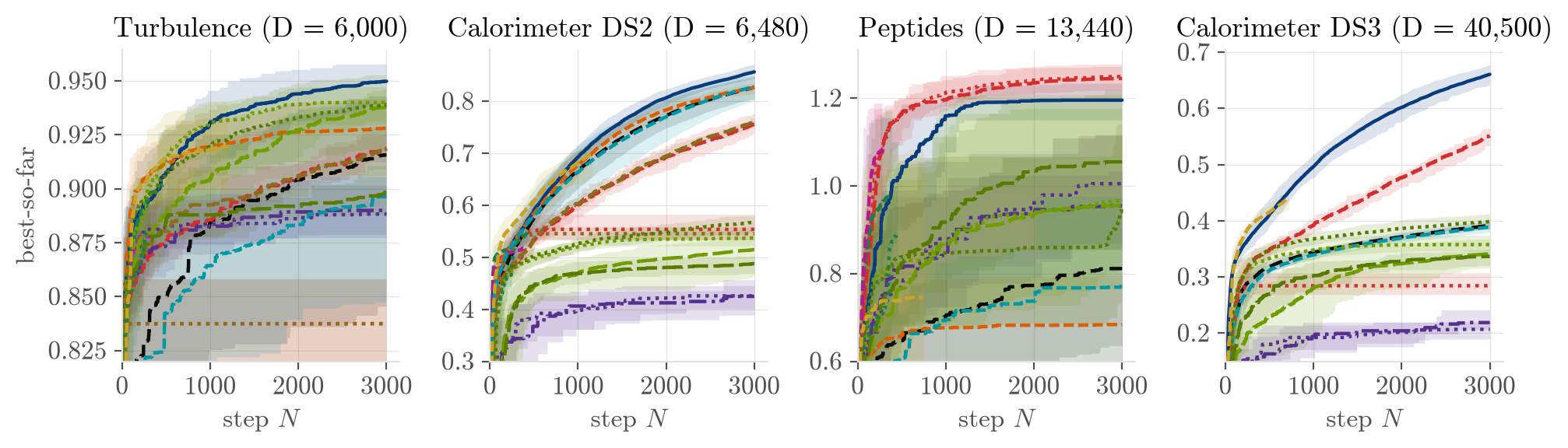}
    \vspace{-1.2em}\\
    \caption{
    \textbf{Scientific latent-space optimisation against evaluation count.}
    Best observed value against evaluation count for the runs of
    Figure~\ref{fig:c5-domains}; lines show means and bands $5$th--$95$th
    percentiles over ten seeds. Standard BO and Linear BO timed out at $48$\,h of wall clock.
    }
    \label{fig:c5-domains-steps}
\end{figure}

\subsection{Baseline configurations}
\label{app:baselines}

All baselines receive the same evaluation budget as SABLE and, unless marked
ambient, are evaluated on the same $\sqrt D$-radius sphere: a candidate
proposed in $\R^D$ is rescaled to the sphere before decoding, so the
objective is a function of direction only.

\paragraph{Random search and orthogonal design.}
Random search draws queries uniformly on the sphere. The orthogonal design
uses the columns of a Haar-random orthogonal frame scaled to the sphere, so
every pair of queries is exactly orthogonal and the budget probes $N$
distinct directions with no redundancy; it requires $N\leq D$.

\paragraph{Design-then-fit.}
The random and orthogonal design-then-fit baselines of
Figure~\ref{fig:c4-rank-lambda} spend $N-1$ evaluations on the corresponding
non-adaptive design, fit the uncompressed affine model with marginalised
variance scales to the result, and spend the final evaluation on the
posterior-mean recommendation of Appendix~\ref{app:readout}. Their first
$N-1$ queries coincide with the plain designs at the same seed.

\paragraph{Thompson sampling.}
Thompson sampling uses the same bounded-state posterior as SABLE
($M=300$, marginalised variance scales): at each step a slope is drawn from
the compressed posterior in the data-informed directions and from the prior
in their complement, and the query is the sphere point maximising the sampled
affine function. It prescribes no off-span norm.

\paragraph{CMA-ES.}
Separable CMA-ES \citep{ros2008simple} maintains a diagonal covariance and
costs $\mathcal O(D)$ per generation; full-covariance CMA-ES
\citep{hansen1996adapting} is run where a dense $D\times D$ covariance fits
in memory (turbulence and Calorimeter DS2). Both use the standard population
size $4+\lfloor3\ln D\rfloor$ and initial step size $\sigma_0=1$ in the ambient
parameterisation, with the search mean initialised at the origin so that the
first generation is a draw from $\mathcal N(\bm 0,I_D)$ and lands on
the typical shell.
Members of a generation are evaluated together and yielded in a
seed-shuffled order so that best-so-far curves carry no within-generation
ordering bias. The sphere variants rescale each candidate to the sphere before
evaluation; the ambient variants ($\R^D$) evaluate the raw candidate and are
free to leave the typical set.

\paragraph{Vanilla BO, Linear BO and TuRBO.}
Vanilla BO \citep{hvarfner2024vanilla}, high-dimensional linear BO
\citep{doumont2026}, and TuRBO~\citep{eriksson2019scalable} follow their
reference implementations in BoTorch~\citep{balandat2020botorch}. Vanilla BO uses a single-task GP with
an ARD RBF kernel, the dimension-scaled lengthscale prior of the reference,
standardised targets, and LogEI optimised by multi-start L-BFGS-B. Linear BO
uses the reference's linear kernel on inverse-stereographically projected
inputs with ARD lengthscales and LogEI. TuRBO uses a trust region with a
Mat\'ern-$5/2$ ARD kernel and Thompson sampling over $5000$ Sobol
candidates, each perturbing every coordinate independently with probability
$\min(20/D,1)$ ($20$ coordinates in expectation), with the reference hyperparameters.
Each opens with a $30$-point scrambled Sobol block. All three
search the unit cube and decode candidates on the sphere. To make $N=3000$
feasible at $D\geq6000$, hyperparameters are refit by L-BFGS-B whenever the
data has grown by $5\%$ since the last fit (about $94$ fits over a run),
warm-started from the previous values and capped at $200$ iterations; between
refits the model is conditioned exactly on all data under the last fitted
hyperparameters. Vanilla BO and linear BO nevertheless cost minutes per
decision by $N\approx1000$, so their runs are read at $48$\,h of wall clock
(Figure~\ref{fig:c5-domains}). These three baselines are omitted on Calorimeter
DS3.

\paragraph{MSR and BAxUS.}
The two remaining GP baselines are run as their reference implementations prescribe: neither the 30-point Sobol opening block nor the refit-on-growth cadence of the paragraph above is imposed on them, and each keeps its own initial design and refits at every step.
The only adaptation is the objective interface, which decodes each proposal on the sphere by the same map as the other GP
baselines while the optimiser keeps its own coordinates and the value it
received. MSR \citep{papenmeier2025understanding} opens with
$\lfloor 3\sqrt D\rfloor$ scrambled Sobol points and at every step builds a
fresh single-task GP with an ARD Mat\'ern-$5/2$ kernel, no lengthscale prior,
lengthscales initialised at $\sqrt D/10$, the reference's outputscale and noise
priors and a learned noise level (its real-world-benchmark setting), refits it
from scratch by L-BFGS-B, and maximises LogEI from RAASP starts (the
best observations perturbed on a random $\min(20/D,1)$ fraction of
coordinates, $\sigma=10^{-3}$; $5$ restarts over $512$ raw samples).
Refitting after every observation makes its per-decision cost grow as
$n^{2.6}$, so its runs stop at $48$\,h between $N=300$ and $800$. It is
omitted on Calorimeter DS3 due to cost. BAxUS
\citep{papenmeier2022increasing} uses the authors' implementation at the
paper's settings: a sparse $\pm1$ embedding whose target dimension starts at
the adjusted $d_0\in\{1,2,3\}$ ($1$, $2$, $3$ and $2$ on our four problems)
and is split fourfold whenever the trust region collapses, a $10$-point Latin
hypercube design per trust region, the TuRBO trust region (initial length
$0.8$, bounds $0.5^7$ and $1.6$, success tolerance $3$, the budget-derived
failure tolerance of the reference) with Thompson sampling over
$\min(100d,5000)$ Sobol perturbations, and a Mat\'ern-$5/2$ ARD GP refit at
every step from $100$ Latin-hypercube hyperparameter starts, the ten best
refined by $50$ Adam steps. Its ambient box is $[0,1]^D$, so the projected-up
point is decoded on the sphere as $\sqrt D\,S^{\top}y/\|S^{\top}y\|$. One deviation from the reference: above $21{,}201$ target
dimensions, reached only on Calorimeter DS3, the candidate perturbations come
from a Latin hypercube instead of the Sobol sequence, which has no direction
numbers there. Its decisions cost $30$--$50$\,s once the embedding has grown to
thousands of dimensions, so its runs stop at $48$\,h between $N=700$ and
$1{,}300$; it runs on all four problems.

\paragraph{O3.}
O3~\citep{willis2025sample} spans a cone of $K=90$ random latents
of the generative prior and searches the resulting $(K-1)$-dimensional unit
cube through its Knothe--Rosenblatt chart, decoding the rescaled combination
on the sphere. We run the reference's primary optimiser on top: full-covariance
CMA-ES with population $4$ and $\sigma_0=0.2$. $K=90$ is the largest setting of the reference; a sweep on the
controlled objectives at $D=N=1000$ found a plateau over $K\in[24,90]$ and no
gain at $K=300$.

\paragraph{No-schedule ablations.}
The row labelled \cite{costofsampling} is that method in its published form:
the affine model with MAP variance scales, the uncapped posterior, and
unconstrained EI, which prescribes no off-span norm. The ``no schedule'' row
under \emph{Ours} applies our bounded-state compression ($M=300$) and
marginalised variance scales but likewise leaves the off-span norm to EI.

\paragraph{Hardware.}
Campaign runs execute on a shared SLURM cluster as CPU tasks with $4$ cores each; cluster nodes are
dual-socket AMD EPYC 9754 machines (256 cores, 768 GB to 1 TB RAM).

\end{document}